\newif\ifcameraready
\pgfplotsset{compat=1.18}
\renewcommand{\vec}{\mathbf}
\newcommand{\myparagraph}[1]{\medskip\par\noindent\textbf{#1}}
\newcommand{\rnc}{{\textup{RNC}$_{\raisebox{1pt}{\scalebox{0.8}{+}}}$}}
\newcommand{\class}[1]{\llbracket #1 \rrbracket}
\newtheorem{example}{Example}
\newtheorem{theorem}{Theorem}
\newtheorem{lemma}{Lemma}
\newtheorem{corollary}{Corollary}
\newtheorem{proposition}{Proposition}
\newtheorem{definition}{Definition}
\newtheorem{remark}{Remark}
\title{On the Expressivity of Recurrent Neural Cascades with Identity}
\author{
Nadezda Alexandrovna Knorozova$^1$
\And
Alessandro Ronca$^2$
\affiliations
$^1$RelationalAI\\
$^2$University of Oxford
\emails
nadezda.knorozova@relational.ai,
alessandro.ronca@cs.ox.ac.uk
}
\begin{document}

\maketitle

\begin{abstract}
  Recurrent Neural Cascades (RNC) are the class of recurrent neural
  networks with no cyclic dependencies among recurrent neurons. 
  Their subclass \rnc{} with positive recurrent weights has
  been shown to be closely connected to the star-free
  regular languages, which are the expressivity of many well-established
  temporal logics. 
The existing expressivity results show that the regular languages captured by
  \rnc{} are the star-free ones, and they leave open the possibility that \rnc{}
  may capture languages beyond regular.
  We exclude this possibility for languages that include an 
  \emph{identity element}, i.e., an input that can occur an arbitrary number of
  times without affecting the output. 
  Namely, in the presence of an identity element, we show that the
  languages captured by \rnc{} are exactly the star-free regular languages.
Identity elements are ubiquitous in temporal patterns, and hence our 
  results apply to a large number of applications. 
The implications of our results go beyond expressivity.
  At their core,
  we establish a close structural correspondence between  
  \rnc{} and semiautomata cascades, showing that every neuron can be
  equivalently captured by a three-state semiautomaton.
  A notable consequence of this result is that \rnc{} are no more succinct than
  cascades of three-state semiautomata.
\end{abstract}

\section{Introduction}

Recurrent Neural Cascades (RNCs) are a well-established formalism for learning
temporal patterns.
They are the subclass of recurrent neural networks where recurrent neurons are
cascaded. Namely, they can be layed out into a sequence so that every neuron has
access to the state of the preceding neurons as well as to the external input;
and, at the same time, it has no dependency on the subsequent neurons.

RNCs admit several learning techniques. 
First, they admit general learning techniques for recurrent networks such as 
\emph{backpropagation through time} \cite{WerbosBPTT},
which learn the weights for a fixed architecture.
Furthermore, the acyclic structure allows for 
constructive learning techniques such as \emph{recurrent cascade correlation}
\cite{fahlman1990recurrent,russell1999neural}, which construct the cascade
incrementally during training in addition to learning the weights.

RNCs have been successfully applied in many areas, including
information diffusion in social networks \cite{wang2017topological},
geological hazard prediction \cite{zhu2020landslide},
automated image annotation \cite{shin2016learning},
intention recognition \cite{zhang2018cascade},
and optics \cite{xu2020cascade}.

\begin{figure}
  \center
  \begin{tikzpicture}
\def\setE{(-0.6,0) ellipse [x radius=1.6cm, y radius=1.3cm, rotate=0]}
    \def\setR{(+1,0) ellipse [x radius=1.8cm, y radius=1.2cm, rotate=0]}
    \def\setS{(+0.5,0) ellipse [x radius=1.1cm, y radius=0.8cm, rotate=0]}
    \def\setA{(0,0) ellipse [x radius=3.5cm, y radius=1.6cm, rotate=0]}
    
\clip (-4,-1.7) rectangle (4, 1.61);
    
\draw[black] \setE node[below] {};
    \draw[black] \setR node[below] {};
    \draw[black] \setS node[below] {};
    
\draw \setA node[above] {};
    
\fill[green!50, fill opacity=0.2] \setS;
    
\begin{scope}
      \begin{scope}[even odd rule]
        \clip \setS (-3,-3) rectangle (3,3);
        \fill[red!50, fill opacity=0.2] \setR;
      \end{scope}
    \end{scope}
    
\begin{scope}
      \begin{scope}[even odd rule]
        \clip \setR (-3,-3) rectangle (3,3);
      \fill[blue!50, fill opacity=0.2] \setE;
    \end{scope}
    \end{scope}
    
\node at (-1.5,0) {\small\textit{Identity}};
    \node at (+2.2,0) {\small\textit{Regular}};
    \node at (+0.2,0) {\small\textit{Star-free}};
    \node at (-3,0) {\small\textit{All}};
  \end{tikzpicture}
  \caption{Relevant classes of languages. 
    The label \textit{`All'} denotes all formal languages, 
    \textit{`Identity'} denotes the languages with an identity element, 
    \textit{`Regular'} denotes the regular languages, and 
    \textit{`Star-free'} denotes the star-free regular languages.}
  \label{fig:languages}
\end{figure}
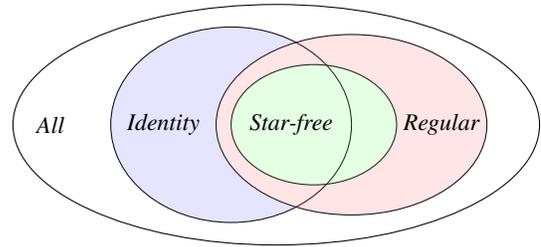

\myparagraph{Expressivity.}
We study the expressivity of RNCs in terms of \emph{formal languages},
which provide a unifying framework where to describe the expressivity of
all formalisms capturing temporal patterns. 
Early studies show there exist regular languages that are not
captured by RNCs with monotone activation such as sigmoid and tanh
\cite{giles1995constructive,kremer1996comments}.
More recently the expressivity of RNCs has been studied in
\cite{knorozova2024expressivity}.
They show that the subclass \rnc{} with positive recurrent weights captures
all star-free regular languages, and it does not capture any other regular
language. In terms of Figure~\ref{fig:languages}, they show that the
expressivity of \rnc{} includes the green area, and it does not include the red
area; leaving open any possibility for languages beyond regular.
The correspondence with star-free regular languages makes \rnc{} a strong
candidate for learning temporal patterns.
In fact, the star-free regular languages are a central class, corresponding to
the expressivity of many well-known formalisms such as
\emph{star-free regular expressions} from where they take their name,
\emph{linear temporal logic} on finite traces \cite{degiacomo2013ltlf},
\emph{past temporal logic} \cite{manna1991completing},
\emph{monadic first-order logic} on finite linear orders 
\cite{mcnaughton1971counter},
\emph{group-free finite automata} \cite{ginzburg}, and
\emph{aperiodic finite automata} \cite{schutzenberger1965finite}.
However, 
there is still a possibility that \rnc{} may capture patterns well-beyond
the expressivity of such formalisms.

\myparagraph{Our contribution.}
We extend the picture of the expressivity landscape of \rnc{} by studying their
capability to capture languages with an identity element.
An \emph{identity element} is an input that can occur an arbitrary number of
times without affecting the output.
We show that \emph{a language with an identity element is recognised by \rnc{}
only if it is regular.}
In other words, for any language beyond regular that has an identity element,
we exclude the possibility that it is recognised by \rnc{}.
In terms of Figure~\ref{fig:languages}, we show that the blue area is not
included in the expressivity of \rnc{}.
Combined with the existing results,
ours yields an exact characterisation of the expressivity of \rnc{}
in the presence of an identity element.
Identity elements are ubiquitous, and hence the characterisation applies to a
large number of relevant settings. 
Although we emphasise the results for languages, due to their importance,
more generally our results apply to functions over strings.

Next we provide two examples of settings with identity. The first one is an
example of a language defined by a temporal logic formula, and the second one is
an example of an arithmetic function. More examples are given in
Section~\ref{sec:languages-with-identity}.

\begin{example}[Temporal Logics] \label{ex:temporal-logic-intro}
  Linear temporal logic allows for describing patterns over
  finite traces \cite{degiacomo2013ltlf}.
  The formula $\varphi = \LTLdiamond p$ holds whenever proposition $p$ occurs at
  some point in the finite input trace. 
  This defines a language $L_\varphi$ over the alphabet 
  $\Sigma = \{ \emptyset, \{ p \} \}$. 
  The empty set is an identity element of $L_\varphi$.
\end{example}

The language of Example~\ref{ex:temporal-logic-intro} is star-free and thus it
can be recognised by \rnc{} by the results in 
\cite{knorozova2024expressivity}. Yet they have no implication for the following
example. We show it cannot be implemented by \rnc{}.

\begin{example}[Arithmetic] \label{ex:arithmetic-intro}
  The function 
  $F : \mathbb{Z}^+ \to \mathbb{Z}$ returns the sum of the
  input integers as $F(z_1 \dots z_\ell) = z_1 + \dots + z_\ell$.
  The number 0 is an identity element of $F$.
\end{example}

Technically, at the core of our results we show a close structural
correspondence between \rnc{} and cascades of finite semiautomata.  
One
can start from a given \rnc{} and obtain an equivalent cascade of 
semiautomata by replacing each recurrent neuron with a three-state
semiautomaton.
A cascade of three-state semiautomata is itself a finite-state
semiautomaton. This implies our expressivity results mentioned above, as well as
succinctness results. For instance, any language that requires a
cascade of $n$ three-state semiautomata cannot be captured by
\rnc{} with fewer than $n$ recurrent neurons. In this sense, \rnc{} is no
more succinct than semiautomata cascades.
In turn, the former implies that \rnc{} with $n$ recurrent neurons cannot
recognise a language that requires an automaton with more than $3^n$ states.

\ifcameraready
Proofs of all our results are included, with some deferred to the extended
version \cite{extendedversion}.
\else
Proofs of all our results are included, with some deferred to the
appendix, which also provides details of the examples.
\fi

\section{Preliminaries}

We denote the natural numbers by $\mathbb{N}$, 
the real numbers by $\mathbb{R}$, and
the non-negative real numbers by $\mathbb{R}_+$.
For $n \in \mathbb{N}$,
we write $[n]$ for the set $\{ 1, 2, \dots, n \} \subseteq \mathbb{N}$.
We write an infinite sequence $(a_k, a_{k + 1}, \dots)$ as $(a_t)_{t \geq k}$.
Given a factored set $Z \subseteq Z_1 \times \dots \times Z_n$ and an index 
$i \in [n]$, we define the projection of $Z$ on its first $i$ components as
\begin{align*}
  Z_{[i]} & = \{ \langle z_1, \dots, z_i \rangle \mid \exists z_{i+1}, \dots,
  z_n.\ \langle z_1, \dots, z_n \rangle \in Z \}.
\end{align*}
When we apply a function $f: X \to Y$ to a subset $Z \subseteq X$ of its
inputs, the result is the set $f(Z) = \{ f(x) \mid x \in Z \}$.

\myparagraph{Equivalence relations.}
An \emph{equivalence relation} $\sim$ over a set $X$ is a binary relation that
is reflexive, symmetric, and transitive.
The \emph{equivalence class} of $x \in X$, written as $\class{x}$, is the set of
all elements in $X$ that are equivalent to $x$.
The set of all equivalence classes is a partition of $X$, and it is written as 
$X/{\sim}$.
Sometimes we name an equivalence relation as $\sim_a$, and write
the corresponding equivalence classes as $\class{x}_a$.

\myparagraph{Metric spaces and continuous functions.}
A \emph{metric space} is a set $X$ equipped with a function 
$d_X: X \times X \to \mathbb{R}$ called a \emph{metric} which
satisfies the properties:
(i)~$d_X(x,x) = 0$,
(ii)~$d_X(x,y) \neq 0$ when $x \neq y$,
(iii)~$d_X(x,y) = d_X(y,x)$, 
(iv)~$d_X(x,z) \leq d_X(x,y) + d_X(y,z)$.
It is \emph{discrete} if the metric satisfies $d_X(x,y) = 1$ for
$x \neq y$ and $d_X(x,y) = 0$ for $x = y$.
Every set can be made a discrete space.
For $X$ and $Y$ metric spaces, a function $f: X \to Y$ is \emph{continuous at
a point $c \in X$} if, for every positive real number $\epsilon > 0$, there
exists a positive real number $\delta > 0$ such that every $x \in X$ satisfying
$d_X(x,c) < \delta$ also satisfies $d_Y(f(x),f(c)) < \epsilon$.
Equivalently, function $f$ is \emph{continuous at a point $c \in X$} if, for
every sequence $(x_t)_{t \geq 0}$ of elements of $X$ with limit $c$, it holds
that the limit of the sequence $(f(x_t))_{t \geq 0}$ is $f(c)$.
Function $f$ is \emph{continuous} if it is so at every point $c \in X$.

\subsection{Dynamical Systems}

Dynamical systems provide us with a formalism where to cast both
recurrent neural cascades and automata. 
A \emph{dynamical system} $S$ is a tuple
\begin{align*}
    S = \langle U, X, f, x^\mathrm{init}, Y, h \rangle,
\end{align*}
where $U$ is a set of elements called \emph{inputs},
$X$ is a set of elements called \emph{states},
$f: X \times U \to X$ is called \emph{dynamics function},
$x^\mathrm{init} \in X$ is called \emph{initial state},
$Y$ is a set of elements called \emph{outputs}, and
$h: X \to Y$ is called \emph{output function}.
Sets $U,X,Y$ are equipped with a metric. 
System $S$ is \emph{continuous} if functions $f$ and $h$ are continuous.

At every time point $t = 1,2, \dots$, the system receives an input 
$u_t \in U$.
The state $x_t$ and output $y_t$ of the system at time $t$ are defined as
follows. At time $t=0$, before receiving any input, the system is in state 
$x_0 = x^\mathrm{init}$ and the output is $y_0 = h(x^\mathrm{init})$.
Then, the state $x_t$ is determined by the previous state $x_{t-1}$ and the
current input $u_t$, and consequently the output $y_t$ is determined by $x_t$,
as
\begin{align*}
    x_t = f(x_{t-1}, u_t), 
    \qquad 
    y_t = h(x_t).
\end{align*}
The \emph{dynamics} of $S$ are the tuple $D = \langle U, X, f \rangle$.
Dynamics $D$ are \emph{continuous} if $f$ is continuous.
\emph{Subdynamics} of $D$ are any tuple $\langle U, X', f \rangle$ such that
$X' \subseteq X$ and $f(X',U) \subseteq X'$. 
The function \emph{implemented} by system $S$ is the function that maps
every input sequence $u_1, \dots, u_\ell$ to the output $y_\ell$.
We write $S(u_1, \dots, u_\ell) = y_\ell$.
Such function is also defined on the empty input sequence, in which case it
returns $y_0$.
Two systems are \emph{equivalent} if they implement the same function.

\myparagraph{Homomorphic representation.}
The notion of homomorphic representation allows for comparing systems by
relating their dynamics. We follow \cite{knorozova2024expressivity}.
Consider two system dynamics $D_1 = \langle U, X_1, f_1 \rangle$ and
$D_2 = \langle U, X_2, f_2 \rangle$.
A \emph{homomorphism} from $D_1$ to $D_2$ is a continuous surjective function
$\psi: X_1 \to X_2$ satisfying 
\begin{align*}
  \psi\big(f_1(x,u)\big) = f_2\big(\psi(x),u\big)
\end{align*}
for every state $x \in X_1$ and every input $u \in U$.
Dynamics $D_1$ \emph{homomorphically represents} $D_2$ if $D_1$ has
subdynamics $D_1'$ such that there is a homomorphism from $D_1'$ to $D_2$.

First, homomorphic representation has the following implication on the existence
of an equivalent system.

\begin{restatable}{proposition}{prophomomorphism}
  \label{prop:homomorphism}
If dynamics $D_1$ homomorphically represent the dynamics of a system
  $S_2$, then there is a system $S_1$ with dynamics $D_1$ that is equivalent to
  $S_2$.
\end{restatable}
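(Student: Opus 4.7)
The plan is to construct $S_1$ explicitly by lifting the initial state and output function of $S_2$ through the homomorphism, and then to verify equivalence by a one-step induction that propagates the homomorphism along trajectories. Let $D_1 = \langle U, X_1, f_1 \rangle$, let $D_1' = \langle U, X_1', f_1 \rangle$ be the subdynamics of $D_1$ with $X_1' \subseteq X_1$ and $f_1(X_1', U) \subseteq X_1'$ guaranteed by the hypothesis, and let $\psi: X_1' \to X_2$ be the continuous surjective homomorphism from $D_1'$ to the dynamics of $S_2 = \langle U, X_2, f_2, x_2^{\mathrm{init}}, Y, h_2 \rangle$.

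First, I would choose the initial state. Because $\psi$ is surjective, the preimage $\psi^{-1}(x_2^{\mathrm{init}}) \subseteq X_1'$ is nonempty, so pick any $x_1^{\mathrm{init}}$ in it. Next, I would define the output function $h_1: X_1 \to Y$ by setting $h_1(x) = h_2(\psi(x))$ for $x \in X_1'$ and extending arbitrarily on $X_1 \setminus X_1'$ (this extension is inconsequential because no reachable state lies there). I then take $S_1 = \langle U, X_1, f_1, x_1^{\mathrm{init}}, Y, h_1 \rangle$, which has the required dynamics $D_1$ by construction.

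To prove equivalence, write $x_t^1$ and $x_t^2$ for the state sequences of $S_1$ and $S_2$ on a common input sequence $u_1, \dots, u_\ell$. A straightforward induction on $t$ shows simultaneously that (i) $x_t^1 \in X_1'$, using $f_1(X_1', U) \subseteq X_1'$, and (ii) $\psi(x_t^1) = x_t^2$: the base case is $\psi(x_0^1) = \psi(x_1^{\mathrm{init}}) = x_2^{\mathrm{init}} = x_0^2$, and the inductive step uses the homomorphism identity
\begin{align*}
  \psi(x_t^1) = \psi\bigl(f_1(x_{t-1}^1, u_t)\bigr) = f_2\bigl(\psi(x_{t-1}^1), u_t\bigr) = f_2(x_{t-1}^2, u_t) = x_t^2.
\end{align*}
Applied at $t = \ell$, this yields $S_1(u_1 \dots u_\ell) = h_1(x_\ell^1) = h_2(\psi(x_\ell^1)) = h_2(x_\ell^2) = S_2(u_1 \dots u_\ell)$, and the empty-sequence case follows from the same identity at $t = 0$.

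I do not expect a substantial obstacle: the proof is essentially bookkeeping. The only subtle point is to ensure that the chosen initial state lies in $X_1'$, so that induction keeps the trajectory inside the subdynamics where $\psi$ is defined; this is exactly why one defines homomorphic representation through subdynamics closed under $f_1$. Continuity of $\psi$ and of $h_1$ plays no role in the argument, since the proposition does not claim $S_1$ to be continuous.
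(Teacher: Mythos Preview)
Your proof is correct and follows essentially the same approach as the paper: lift the initial state via surjectivity of $\psi$, define $h_1 = h_2 \circ \psi$, and verify equivalence by an induction showing $\psi(x_t^1) = x_t^2$. You are in fact slightly more careful than the paper's own proof in explicitly extending $h_1$ from $X_1'$ to all of $X_1$ so that $S_1$ genuinely has dynamics $D_1$ rather than only the subdynamics $D_1'$, and in tracking that the trajectory stays in $X_1'$.
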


Second, equivalence of two systems implies homomorphic representation, but only
under certain conditions which include canonicity---a notion
that we introduced next.
A state $x$ of a system $S$ is \emph{reachable} if there is an input sequence
$u_1, \dots, u_t$ such that the system is in state $x$ at time~$t$.
A system is \emph{connected} if every state is reachable.
Given a system $S$ and one of its states $x$, the system $S^x$ is the system
obtained by setting $x$ to be the initial state.
Two states $x$ and $x'$ of $S$ are equivalent if the systems
$S^x$ and $S^{x'}$ are equivalent.
A system is in \emph{reduced form} if it has no distinct states which are
equivalent. 
A system is \emph{canonical} if it is connected and in reduced form.

\begin{restatable}{proposition}{prophomomorphismconverse}
  \label{prop:homomorphism_converse}
  If a continuous system $S_1$ is equivalent to a canonical system $S_2$ with a 
  discrete output, then the dynamics of $S_1$ homomorphically represent the
  dynamics of $S_2$.
\end{restatable}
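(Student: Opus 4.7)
The plan is to construct a homomorphism $\psi$ from a suitable subdynamics of $D_1$ to $D_2$ in the most natural way. Let $X_1'$ be the set of states of $S_1$ reachable by some finite input sequence from $x_1^\mathrm{init}$; since $X_1'$ is closed under $f_1$, the tuple $D_1' = \langle U, X_1', f_1 \rangle$ is a subdynamics of $D_1$. Define $\psi: X_1' \to X_2$ by $\psi(x_1) = x_2$, where $x_2$ is the state of $S_2$ reached by any input sequence $\sigma$ that takes $S_1$ to $x_1$.

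Well-definedness, surjectivity, and the homomorphism equation are handled by the canonicity and equivalence hypotheses. If two sequences $\sigma, \sigma'$ both reach the state $x_1$ in $S_1$, then by equivalence, for every continuation $\tau$, $S_2(\sigma\tau) = S_1(\sigma\tau) = S_1(\sigma'\tau) = S_2(\sigma'\tau)$; hence the two corresponding states $x_2, x_2'$ of $S_2$ satisfy $S_2^{x_2} \equiv S_2^{x_2'}$, and by the reduced form of $S_2$ they must coincide, giving well-definedness. Surjectivity is immediate from connectedness of $S_2$. The defining equation $\psi(f_1(x_1, u)) = f_2(\psi(x_1), u)$ follows by extending any sequence reaching $x_1$ by $u$.

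The main obstacle is continuity of $\psi$. Fix $c \in X_1'$ and set $x_2^c = \psi(c)$. For every state $x_2 \in X_2$ distinct from $x_2^c$, the reduced form of $S_2$ supplies an input sequence $\tau_{x_2}$ along which $S_2$ started at $x_2^c$ and $S_2$ started at $x_2$ yield different outputs in $Y$. Because the output space is discrete, these outputs are separated by a positive distance. Now the map sending a state $x_1 \in X_1$ to the output of $S_1$ obtained by applying $\tau_{x_2}$ starting from $x_1$ is continuous, as an iterated composition of the continuous functions $f_1$ and $h_1$. Therefore it takes values close to, hence by discreteness equal to, its value at $c$ on some open neighbourhood $N_{x_2}$ of $c$; by equivalence of $S_1$ and $S_2$, no point of $N_{x_2}$ can be mapped by $\psi$ to $x_2$. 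Since the canonical systems relevant to this paper have finite $X_2$, a finite intersection $\bigcap_{x_2 \neq x_2^c} N_{x_2}$ yields an open neighbourhood of $c$ on which $\psi$ is constantly $x_2^c$, so $\psi$ is continuous at $c$.

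The delicate point is the continuity step, where the three ingredients must be combined simultaneously: reduced form provides distinguishing suffixes, discreteness of $Y$ converts the distinguishing inequality into a uniform separation, and continuity of $f_1$ and $h_1$ then makes this separation persist in a neighbourhood. If one wished to avoid assuming $X_2$ is finite, one would instead work with the sequential characterisation of continuity, showing that for any sequence $x_1^n \to c$ one has $\psi(x_1^n) = x_2^c$ eventually, via the same distinguishing-suffix argument applied to any purported subsequence mapping elsewhere.
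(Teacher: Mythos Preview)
Your construction of $\psi$ and the verification of well-definedness, surjectivity, and the homomorphism equation match the paper's proof exactly: the paper likewise restricts to the reachable subsystem of $S_1$, defines $\psi$ by sending a reachable state to the $S_2$-state reached by any input string leading there, and checks the same three conditions using canonicity in the same way. For continuity the paper relies on the same three ingredients you name (distinguishing suffixes from reduced form, uniform output separation from discreteness of $Y$, and continuity of the composed map $x \mapsto S_1^x(\tau)$), but packages them as an $\epsilon$--$\delta$ contradiction argument rather than your open-neighbourhood intersection, and it is phrased without assuming $X_2$ finite.

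One point to flag: your claim that ``the canonical systems relevant to this paper have finite $X_2$'' is not correct at the place where the proposition is actually invoked. In the proof of Theorem~\ref{theorem:succinctness} the canonical automaton $A$ implementing $F$ is introduced \emph{before} regularity of $F$ has been established, so $A$ may a~priori have infinitely many states; the proposition must therefore apply in that generality, and your finite-intersection argument does not cover it. Your sequential-characterisation sketch is along the right lines, but note that the distinguishing suffix $\tau_{x_2}$ depends on the target state $\psi(x_1^n)$, which may vary with $n$ when $X_2$ is infinite, so one cannot directly appeal to continuity of a single composed map; some additional argument is needed to close this gap, and you should spell it out rather than defer to ``the same distinguishing-suffix argument''.
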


\subsection{Cascade Architecture}
A \emph{cascade} $C$ is a form of dynamics $\langle U, X, f \rangle$ with a
factored set of states $X  = X_1 \times \dots \times X_n$ and dynamics function
of the form 
\begin{align*}
  & f(\langle x_1, \dots, x_n \rangle, u) = \langle f_1(x_1, u_1), \dots,
  f_n(x_n, u_n) \rangle,
  \\
  & \text{where } u_i = \langle u, x_1, \dots, x_{i-1} \rangle.
\end{align*}
Function $f_i$ determines the $i$-th element of the next state based on
the input $u$ and the first $i-1$ elements of the current state.
It is convenient to also introduce the function that returns the first $i$
elements
\begin{align*}
  & \bar{f}_i(\langle x_1, \dots, x_i \rangle, u) = 
   \langle f_1(x_1, u_1),  \dots, f_i(x_i, u_i) \rangle.
\end{align*}
Adopting a modular view, we can see cascade $C$ as consisting of $n$ dynamics 
$D_1, \dots, D_n$ where
\begin{align*}
  D_i = \langle U \times X_{[i-1]}, X_i, f_i \rangle.
\end{align*}
We call every $D_i$ a \emph{component} of the cascade, and 
we write $C = D_1 \ltimes \dots \ltimes D_n$.
Every component has access to the state of the preceding
components but not to the state of the subsequent components, avoiding cycling
dependencies.

\subsection{Recurrent Neural Cascades}

A \emph{core recurrent tanh neuron} is a triple 
$N = \langle V , X, f \rangle$ where
$V \subseteq \mathbb{R}$ is the input domain,
$X \subseteq \mathbb{R}$ are the states, and 
$f$ is the function 
\begin{align*}
  f(x, v) = \tanh(w \cdot x + v),
\end{align*}
with $w \in \mathbb{R}$ called \emph{recurrent weight}.
A \emph{recurrent tanh neuron} is the composition of a core recurrent tanh 
neuron $N$
with an \emph{input function} $\beta: U \subseteq \mathbb{R}^a \to V$ that can
be implemented by a feedforward neural network.  
Namely,
it is a triple 
$\langle U , X, f_\beta \rangle$ 
where $f_\beta(x, u) = f(x, \beta(u))$.
A recurrent tanh neuron is a form of dynamics, so the notions for
dynamical systems apply.
We will mostly omit the term `recurrent tanh' as it is the only kind of
neuron we consider explicitly.

A \emph{Recurrent Neural Cascade (RNC)} is a dynamical system whose dynamics are
a cascade of recurrent tanh neurons and whose output function can be implemented
by a feedforward neural network. 
An \rnc{} is an RNC where all recurrent weights are positive.

\subsection{Automata}

Automata are dynamical systems, but the terminology employed is different.
The input and output domains are called \emph{alphabets}, and their
elements are called \emph{letters}.
Input and output sequences are seen as \emph{strings}, where a string
$\sigma_1 \dots \sigma_\ell$ is simply a concatenation of letters. The set of
all strings over an alphabet $\Sigma$ is written as $\Sigma^*$.
An \emph{automaton} is a tuple
$A = \langle \Sigma, Q, \delta, q^\mathrm{init}, \Gamma, \theta \rangle$ 
where $\Sigma$ is called \emph{input alphabet} (rather than input domain), 
$Q$ is the set of states,
$\delta: Q \times \Sigma \to Q$ is called \emph{transition function} (rather
than dynamics function),
$q^\mathrm{init} \in Q$ is the initial state,
$\Gamma$ is called \emph{output alphabet} (rather than output domain), 
and
$\theta: Q \to \Gamma$ is the output function.
The tuple $D = \langle \Sigma, Q, \delta \rangle$ is called a 
\emph{semiautomaton}, rather than dynamics. 
For every $\sigma \in \Sigma$, the function $\delta_\sigma(q) = 
\delta(q,\sigma)$ is called a \emph{transformation} of the semiautomaton $D$;
it is an \emph{identity transformation} if $\delta_\sigma(q) = q$ for every $q
\in Q$.
States and alphabets of an automaton are allowed to be infinite.
If an automaton has a finite number of states we say it is a 
\emph{finite-state automaton}.
Given a semiautomaton $\langle \Pi, Q, \delta \rangle$ and 
a function $\phi: \Sigma \to \Pi$,
their composition is the semiautomaton $\langle \Sigma, Q, \delta_\phi \rangle$
whose transition function is
$\delta_\phi(q,\sigma) = \delta(q,\phi(\sigma))$.

\subsection{Classes of Languages and Functions}
The set of all strings over an alphabet $\Sigma$ is denoted by $\Sigma^*$.
A \emph{language} $L$ over a finite $\Sigma$ is a subset of $\Sigma^*$.
Language $L$ can also be seen as the indicator function 
$f_L: \Sigma^* \to \{ 0,1 \}$ where $f_L(x) = 1$ iff $x \in L$.
An \emph{automaton acceptor} is an automaton whose output alphabet is 
$\{ 0,1 \}$.
An automaton acceptor \emph{recognises} $L$ if it implements $f_L$.
The \emph{regular languages} are the ones that can be expressed by regular
expressions, and they coincide with the languages that can be recognised by
finite-state automaton acceptors \cite{kleene1956representation}.
The \emph{star-free regular languages} are the ones that can be expressed by
star-free regular expressions, and they coincide with the aperiodic
regular languages also known as noncounting regular languages, cf.\ 
\cite{ginzburg}.
A language $L$ is \emph{aperiodic} if there exists a non-negative
integer $n$ such that, for all strings $x,y,z \in \Sigma^*$, we have 
$x y^n z \in L$ iff $x y^{n+1} z \in L$.
The characterisations for languages generalise to functions 
$f: \Sigma^* \to \Gamma$ in the following way.
A function is \emph{regular} if it can be implemented by a finite-state
automaton.
A function $F$ is \emph{aperiodic} if there exists a non-negative
integer $n$ such that, for all strings $x,y,z \in \Sigma^*$, the equality
$F(x y^n z) = F(x y^{n+1} z)$ holds.

\section{Expressivity of \rnc{}}

In this section we present our results.
We begin by introducing the setting in Section~\ref{sec:setting},
and then briefly reporting the existing expressivity results in
Section~\ref{sec:existing-results}.
The core of our contribution is in Sections~\ref{sec:languages-with-identity},
\ref{sec:core-results}, and \ref{sec:expressivity-results}.
In particular, 
Section~\ref{sec:languages-with-identity} introduces the notion of identity
element for languages and functions, discussing several examples;
Section~\ref{sec:core-results} presents our core technical results; and 
Section~\ref{sec:expressivity-results} presents our expressivity results.

\subsection{Setting}
\label{sec:setting}

Our goal is to establish expressivity results for \rnc{}.
We consider throughout the section an \emph{input alphabet} $\Sigma$ and an 
\emph{output alphabet} $\Gamma$.
Then the goal is to establish which functions from $\Sigma^*$ to $\Gamma$
can be implemented by \rnc{}, which however operate on real-valued input domain 
$U \subseteq \mathbb{R}^a$ and output domain $Y \subseteq \mathbb{R}^b$.
To close the gap while staying general, we avoid identifying $U$ with $\Sigma$
and $Y$ with $\Gamma$. Instead, we introduce mappings between such sets, that
can be regarded as symbol groundings.

\begin{definition}
  Given a domain $Z \subseteq \mathbb{R}^n$ and an alphabet $\Lambda$,
  a \emph{symbol grounding} from $Z$ to $\Lambda$ is a continuous surjective
  function $\lambda: Z \to \Lambda$.
\end{definition}

Symbol groundings can be seen as connecting the subymbolic level 
$Z \subseteq \mathbb{R}^n$ to the symbolic level $\Lambda$.
For an element $z$ at the subsymbolic level, the letter $\lambda(z)$ is its
meaning at the symbolic level.
Assuming that a symbol grounding $\lambda$ is surjective means that every
letter corresponds to at least one element $z \in Z$. 
The assumption is w.l.o.g.\ because we can remove the letters that do not
represent any element of the subsymbolic level.

We fix an \emph{input symbol grounding} $\lambda_\Sigma : U \to \Sigma$ and
an \emph{output symbol grounding} $\lambda_\Gamma : Y \to \Gamma$.
Then we say that an \rnc{} $N$ implements a function 
$F : \Sigma^* \to \Gamma$ if, for every input string $u_1 \dots u_t \in U^*$, 
the following equality holds.
\begin{align*}
  \lambda_\Gamma\big(N(u_1 \dots u_t)\big) =
  F\big(\lambda_\Sigma(u_1) \dots \lambda_\Sigma(u_t)\big)
\end{align*}
Specifically for languages, we have $\Sigma$ finite and 
$\Gamma = \{ 0,1 \}$, and we say that an \rnc{} recognises $L$ if it implements
its indicator function $f_L$.
Note that symbol groundings are w.l.o.g.\ since one can choose them to be
identity. In this case, implementing a function under symbol groundings
coincides with the default notion of implementing a function.

\subsection{Existing Expressivity Results}
\label{sec:existing-results}

We report the existing expressivity results for \rnc{}.

\begin{theorem}[Knorozova and Ronca, 2024] \label{theorem:previous-results}
  The regular languages recognised by \rnc{} are the star-free regular
  languages.
  The regular functions over finite alphabets implemented by \rnc{} are
  the aperiodic regular functions.
\end{theorem}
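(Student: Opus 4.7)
The plan is to handle the two statements in parallel, since the function case subsumes the language case once we observe that a language $L$ is star-free iff its indicator function $f_L$ is aperiodic. Each part splits into an upper bound (every aperiodic regular function is implementable) and a lower bound (every regular function implementable by \rnc{} is aperiodic). I would organise the argument around the correspondence between \rnc{} and semiautomata cascades, exploiting Propositions~\ref{prop:homomorphism} and~\ref{prop:homomorphism_converse}.

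For the upper bound (star-free $\subseteq$ \rnc{}), I would invoke the Krohn--Rhodes decomposition restricted to the aperiodic case: every aperiodic finite semiautomaton admits a cascade decomposition in which each component is a \emph{reset semiautomaton}, i.e.\ one whose nontrivial transformations are all constant maps. The technical heart is then to show that a single reset component can be simulated by one recurrent tanh neuron with a strictly positive recurrent weight: choose $w>0$ large enough that the map $x \mapsto \tanh(wx+v)$ has two well-separated stable attractors near $\pm 1$, and design the input function $\beta$ (realisable by a feedforward network) so that each input letter produces a bias $v$ that either leaves the neuron in its current basin (identity transformation) or drives it into a specific basin (reset to a fixed value). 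A state decoder, again implementable feedforwardly, extracts the discrete state of each neuron and feeds the appropriate grounding into the output function and into the biases of later neurons. Composing neurons respects the cascade dependency, so Proposition~\ref{prop:homomorphism} produces an \rnc{} equivalent to the chosen automaton, hence implementing the given aperiodic function.

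For the lower bound (regular \rnc{} behaviours $\subseteq$ aperiodic), I would start from an \rnc{} $N$ that implements a regular function $F$, take the canonical (connected and reduced) automaton $A_F$ for $F$, and apply Proposition~\ref{prop:homomorphism_converse}: since $N$ is continuous and $A_F$ has discrete output, the dynamics of $N$ homomorphically represent those of $A_F$. Now I would argue that the image cascade inherits the factored structure of $N$: there exists a subsystem $N'\subseteq N$ whose $i$-th neuron projects, via the homomorphism, onto the $i$-th component of a cascade decomposition of $A_F$. The goal is to show that each such projected component is aperiodic; closure of the aperiodic semiautomata under cascade composition then yields aperiodicity of $A_F$, whence $F$ is aperiodic (and $L$ star-free when $\Gamma=\{0,1\}$).

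The main obstacle is exactly this last point: controlling the algebraic character of one neuron's transformations. A single core recurrent tanh neuron has an uncountable state space, so its raw transformation monoid is not finite and its aperiodicity is not literal. The plan is to use positivity of $w$ to prove that for every fixed bias $v$ the iterated map $x\mapsto\tanh(wx+v)$ is monotone and contracts every bounded set to a finite set of stable fixed points; continuity of the homomorphism forces states lying in the same basin of attraction to collapse onto the same symbolic state of $A_F$. Consequently each projected transformation on the finite symbolic state space is a composition of monotone maps, and such maps generate only aperiodic (group-free) transition monoids, because a nontrivial permutation would require either a non-monotone transformation or a non-continuous quotient, contradicting one of the ingredients. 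This monotone-fixed-point analysis, together with the cascade closure argument, is where the bulk of the technical work lives.
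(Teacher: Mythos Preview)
This theorem is not proved in the present paper: it is stated under ``Existing Expressivity Results'' and explicitly attributed to \cite{knorozova2024expressivity}, with no proof given here. There is therefore no proof in this paper to compare your proposal against.

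That said, your outline is close in spirit to how the cited work proceeds, and the two main ideas---aperiodic Krohn--Rhodes for the upper bound, monotonicity of $x\mapsto\tanh(wx+v)$ for $w>0$ plus continuity of the homomorphism for the lower bound---are the right ones. Two places in the lower-bound plan need tightening. First, your claim that the homomorphism $\psi$ ``projects onto the $i$-th component of a cascade decomposition of $A_F$'' is not how the argument runs: $\psi$ is a single map $X'\to Q_T$, and $A_F$ carries no cascade structure a priori. The actual route is to show that each individual neuron (as dynamics over its own input space $U\times X_{[i-1]}$) can homomorphically represent only group-free semiautomata, and then invoke a lifting lemma that cascades of group-free components can represent only group-free semiautomata. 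Second, to rule out periods one must consider iterating an arbitrary input \emph{string} $s$, not a single letter; the relevant observation is that the $|s|$-fold composite of the one-neuron map is still monotone in $x$ (a composition of increasing maps), so the subsequence of states sampled every $|s|$ steps is monotone and bounded, hence convergent, and continuity of $\psi$ into a discrete space forces eventual constancy. Your proposal gestures at both ingredients but conflates them into a single ``factoring'' step that does not exist as stated.
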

Note, in particular, that the results have no implication for languages and
functions that are not regular.

\subsection{Languages and Functions with Identity}
\label{sec:languages-with-identity}

We introduce the notion of identity element for languages and functions,
and we discuss several examples.

\begin{definition}
A letter $e \in \Sigma$ is an \emph{identity element} for a language $L$ over
$\Sigma$ if, for every pair of strings $x, y \in \Sigma^*$, it holds that 
$xy \in L$ if and only if $x e y \in L$.
\end{definition}

The above definition generalises to functions as follows.

\begin{definition}
A letter $e \in \Sigma$ is an \emph{identity element} for a function 
$F: \Sigma^* \to \Gamma$ if, for every pair of strings $x, y \in \Sigma^*$, 
it holds that $F(xy) = F(x e y)$.
\end{definition}

Note that $e$ is an identity element for a language $L$ iff
it is an identity element for its indicator function $f_L$.
Next we present examples of languages and functions from different application
domains.

\begin{example}[Reinforcement Learning] \label{ex:rl}
  In reinforcement learning, agents are rewarded according to the history
  of past events. Consider an agent that performs navigational tasks in a grid.
  At each step, the agent moves into one direction by one cell or stays in the
  same cell, which is communicated to us using the propositions
  $\Sigma = 
  \{ \mathit{stayed},$ $\mathit{left},$ $\mathit{right},$ $\mathit{up},$ 
  $\mathit{down} \}$.
  We know the initial position $(x_0,y_0)$, and we reward the
  agent when it visits a goal position $(x_\mathrm{g},y_\mathrm{g})$.
  This amounts to a language over $\Sigma$, for which the proposition
  $\mathit{stayed}$ is an identity element.
\end{example}

When the grid of the above example is finite, the resulting language can be
recognised by \rnc{} as a consequence of Theorem~\ref{theorem:previous-results}
since the language is star-free regular. 

\begin{example}[Temporal Logic]
  The temporal logic Past LTL allows for describing patterns over
  traces using past operators \cite{manna1991completing}.
  The Past LTL formula $\varphi = p \operatorname{\mathcal{S}} q$ holds whenever
  proposition $p$ has always occurred since the latest occurrence of $q$.
  This defines a language $L_\varphi$ over the alphabet 
  $\Sigma = \{ \emptyset, \{ p \}, \{ q \}, \{ p,q \} \}$. 
  The letter $\sigma_p = \{ p \}$ is an identity element for $L_\varphi$.
\end{example}

The language of the above example can be recognised by \rnc{} 
according to Theorem~\ref{theorem:previous-results} since it is star-free
regular. 

\begin{example}[Arithmetic Functions]\label{ex:arithmetic}
  The following ones are examples of arithmetic functions with an identity
  element.
  \begin{itemize}
    \item
      $F_1$ takes a list of natural numbers and returns
      their product,
      as
      $F_1(n_1\dots n_\ell) = n_1 \times \cdots \times n_\ell$.

    \item
      $F_2$ takes a list of reals and returns the sign of their sum,
      as
      $F_2(r_1 \dots r_\ell) = \operatorname{sign}(r_1 + \dots + r_\ell)$.

    \item
      $F_3$ takes a list of bits $\{ 0, 1 \}$ and indicates whether they sum to
      $16$, as
      $F_3(n_1 \dots n_\ell) = [n_1 + \dots + n_\ell =16]$.

    \item
      $F_4$ takes a list of integers from $[0,6]$ and returns their
      sum modulo $7$, as
      $F_4(n_1 \dots n_\ell) = n_1 + \dots + n_\ell \mod 7$.

    \item
      $F_5$ takes a list of increments $\{ -1, 0, +1 \}$ and returns the sign of
      their sum, as $F_5(z_1 \dots z_\ell) = \operatorname{sign}(z_1 + \dots +
      z_\ell)$.
  \end{itemize}
  The identity element of $F_1$ is $1$, the identity element of $F_2$, $F_3$,
  $F_4$, and $F_5$ is $0$.
\end{example}

Theorem~\ref{theorem:previous-results} implies that
function $F_3$ of the above example can be implemented by \rnc{} since it is
aperiodic regular, and also that function $F_4$ cannot be implemented by \rnc{} 
since it is regular but not aperiodic.

\subsection{Our Core Results}
\label{sec:core-results}

This section presents our core technical results.
First,
we show that identity elements imply identity transformations.

\begin{proposition} 
  \label{prop:identity-element-implies-identity-transformation}
  A canonical automaton implements a function with an identity element only if
  it has an identity transformation.
\end{proposition}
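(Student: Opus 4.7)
The plan is to argue directly from the definitions of canonical automaton (connected and in reduced form) and identity element. Let $A = \langle \Sigma, Q, \delta, q^\mathrm{init}, \Gamma, \theta \rangle$ be a canonical automaton implementing a function $F : \Sigma^* \to \Gamma$, and suppose $e \in \Sigma$ is an identity element of $F$. The goal is to show $\delta(q, e) = q$ for every $q \in Q$.

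Fix an arbitrary $q \in Q$ and let $q' = \delta(q, e)$. Since $A$ is connected, there exists a string $x \in \Sigma^*$ such that iterating $\delta$ on $q^\mathrm{init}$ with $x$ reaches $q$, and hence iterating with $xe$ reaches $q'$. The first step is to observe that the function implemented by $A^q$ is $y \mapsto F(xy)$, and the function implemented by $A^{q'}$ is $y \mapsto F(xey)$. This follows directly from the definition of $A^q$: for any $y \in \Sigma^*$, reading $y$ from initial state $q$ produces the same output as reading $xy$ from $q^\mathrm{init}$, which is $F(xy)$ by assumption that $A$ implements $F$; symmetrically for $q'$ and $xey$. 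One just needs to handle $y = \epsilon$ carefully, but this case is subsumed by the fact that $A(u_1 \dots u_t)$ is defined for $t = 0$ too, returning $\theta(q^\mathrm{init})$.

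Next, since $e$ is an identity element of $F$, we have $F(xy) = F(xey)$ for every $y \in \Sigma^*$. Therefore the systems $A^q$ and $A^{q'}$ implement the same function, i.e., the states $q$ and $q'$ are equivalent in $A$. Because $A$ is in reduced form, this forces $q = q'$, so $\delta(q, e) = q$. Since $q$ was arbitrary, $\delta_e$ is an identity transformation, as required.

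I do not foresee any real obstacle: the argument is essentially a definition chase, combining connectedness to witness each state by a reaching string, the identity-element property to equate the two induced residual functions, and reducedness to collapse equivalent states. The only mildly delicate point is covering the $y = \epsilon$ case when comparing the functions implemented by $A^q$ and $A^{q'}$, which is handled by recalling that the function implemented by a system is defined on the empty input as well.
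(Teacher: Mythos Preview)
Your proof is correct and follows essentially the same approach as the paper's own proof: pick an arbitrary state $q$, use connectedness to obtain a reaching string, compare the residual functions $A^q$ and $A^{q'}$ via the identity-element property, and conclude $q = q'$ by reducedness. The only difference is notational (you write $x, y$ where the paper writes $s, s'$) and your explicit remark about the $y = \epsilon$ case, which the paper leaves implicit.
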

\begin{proof}
  Let $F$ be a function from $\Sigma^*$ to $\Gamma$ having an identity element 
  $e \in \Sigma$.
  Let $A = \langle \Sigma, Q, \delta \rangle$ be a canonical automaton that
  implemements $F$. Let us consider the transformation $\delta_e(q) =
  \delta(q,e)$ of $A$.
  We show that $\delta_e$ is an identity transformation.
  Let $q \in Q$, and let $q' = \delta_e(q)$.
  It suffices to show $q = q'$.
  Since $A$ is canonical and hence connected, there exists a string $s$ that
  leads to $q$ from the initial state.
  Then, the string $se$ leads from the initial state to $q'$.
  For every string $s'$, we have $A^q(s') =$ $A(ss') =$ $F(ss')$ and similarly
  we have $A^{q'}(s') =$ $A(ses') =$ $F(ses')$. 
  We have $F(ss') = F(ses')$ since $e$ is an identity element for $F$, and hence
  the equalities above imply $A^q(s') = A^{q'}(s')$. 
  Then the required equality follows immediately by canonicity of $A$.
\end{proof}

Technically, the following lemma is our core result.

\begin{lemma} \label{lemma:main}
  Let $D$ be the dynamics of an \rnc{} with $n$ components,
  and let $A_T$ be a semiautomaton with an identity transformation.
  Let $A_\Sigma$ be the composition of $A_T$ with the input symbol grounding
  $\lambda_\Sigma$.
  If $D$ homomorphically represents $A_\Sigma$, then $A_T$ is homomorphically
  represented by a cascade of $n$ three-state semiautomata.
\end{lemma}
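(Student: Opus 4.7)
My plan is to prove this lemma by induction on $n$, the number of recurrent neurons in the \rnc{}, matching the cascade structure of $D$. For the base case $n = 0$, the \rnc{} is trivial, so the image of $\psi$ collapses to a single state of $A_\Sigma$ (and hence of $A_T$), which is trivially homomorphically represented by the empty cascade.

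For the inductive step, I would decompose the cascade as $D = D' \ltimes N_n$, where $D'$ consists of the first $n-1$ neurons and $N_n$ is the last recurrent neuron with positive weight $w_n > 0$. By projecting the homomorphism $\psi$ onto the first $n-1$ coordinates, I obtain a homomorphism from $D'$ to a quotient of $A_\Sigma$, which via $\lambda_\Sigma$ corresponds to a quotient of $A_T$. This quotient inherits the identity transformation: since $e$ acts as identity on $A_T$, it acts as identity on any quotient of $A_T$. Applying the inductive hypothesis to $D'$ with this quotient, I obtain a cascade $C' = C_1 \ltimes \dots \ltimes C_{n-1}$ of three-state semiautomata homomorphically representing the quotient of $A_T$.

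The core of the argument is constructing the $n$-th three-state semiautomaton $C_n$ from $N_n$ and gluing it to $C'$. Here I would exploit two facts. First, because $w_n > 0$, for any fixed bias $c$ the map $x \mapsto \tanh(w_n x + c)$ is strictly increasing and its fixed-point equation $x = \tanh(w_n x + c)$ admits at most three real solutions (at most two stable and one unstable), partitioning the state line into at most three attractor basins. Second, the identity element $e$ forces $\psi$ to be constant along trajectories generated by any $u \in \lambda_\Sigma^{-1}(e)$: since $\psi$ respects dynamics and $\lambda_\Sigma(u) = e$ acts as identity on $A_\Sigma$, every state reached by iterating such a $u$ is mapped by $\psi$ to the same state. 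Combined with continuity of $\psi$, and with the monotone convergence of $N_n$'s state toward an attractor under iteration, this forces $\psi$ to distinguish at most three equivalence classes of $N_n$'s state, yielding the three states of $C_n$.

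The main obstacle will be the careful bookkeeping needed to make the three-state quotient of $N_n$ cohere with the cascade structure of $C'$: the bias input to $N_n$ depends on both the raw input $u \in U$ and the real-valued states of the preceding neurons, whereas in $C' \ltimes C_n$ the transitions of $C_n$ must be indexed only by $\sigma \in \Sigma$ and by the three-state abstractions supplied by $C'$. I will need to verify that the three-basin partition of $N_n$ is invariant under the collapse induced by $\lambda_\Sigma$ and by the inductive three-state abstraction of $C'$, so that the transitions of $C_n$ are well-defined. Once this is established, Proposition~\ref{prop:homomorphism} and an explicit construction of the composite map from $C' \ltimes C_n$ to $A_T$ will complete the argument.
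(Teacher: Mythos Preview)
Your inductive scheme has a genuine gap at the inductive step, and the key technical idea of the paper is missing from the sketch.

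The inductive hypothesis you propose is too weak to define $C_n$. After projecting $\psi$ and invoking the hypothesis, you obtain a cascade $C'$ together with a homomorphism $C' \to A_T/{\equiv}$, but \emph{no map from the neuron states $X'_{[n-1]}$ to the states of $C'$}. The transitions of $C_n$ must be indexed by a $C'$-state, yet $N_n$'s dynamics depend on the real-valued bias $\beta_n(u, x_1,\dots,x_{n-1})$; two preceding states $\vec x, \vec y$ with $\psi'(\vec x)=\psi'(\vec y)$ need not feed $N_n$ the same bias, so the invariance you flag as the main obstacle cannot be verified from the quotient $A_T/{\equiv}$ alone. To rescue the induction you would have to carry an explicit map $X'_{[i]} \to \{1,2,3\}^i$ through the hypothesis, at which point you are essentially doing the paper's direct construction. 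Separately, ``three attractor basins for a fixed bias $c$'' is not the right invariant: while iterating $u_e$, the bias into $N_n$ is time-varying (it depends on the still-converging states $x_{1,t},\dots,x_{n-1,t}$), so a single-$c$ basin picture does not control where $x_{n,t}$ lands in the limit. The paper's crucial step (Proposition~\ref{prop:analysis}) shows instead that the fixpoints of $x\mapsto\tanh(w_i x+v)$ lie in three intervals determined by pivots $p_-<p_+$ that depend only on $w_i$ and \emph{not} on $v$; this uniformity in $v$ is exactly what lets one three-way partition of $N_i$'s state line work across all configurations of the preceding neurons, and it is the idea your sketch does not supply.

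For comparison, the paper does not induct on $n$. It builds a map $\bar\rho_n:X'\to\{1,2,3\}^n$ directly, sending each state to the tuple of pivot-regions of the per-neuron limits under iterated $u_e$; an internal induction on $i$ (Proposition~\ref{prop:same-rho-implies-equivalent}) shows that equal $\bar\rho_n$-values force equal limit points and hence, via continuity of $\psi$ and the identity transformation, equal $\psi$-values. The cascade $C$ is then defined with state set $\bar\rho_n(X')$ and transitions pulled back through $\bar\rho_n$, and $\psi$ descends to the required homomorphism $C\to A_T$.
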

\begin{proof}
  See Section~\ref{sec:proof-main-lemma}.
\end{proof}

Equipped with the lemma above, we can now characterise the functions that 
an \rnc{} can implement.

\begin{theorem} \label{theorem:succinctness}
  Let $F$ be a function from $\Sigma^*$ to $\Gamma$
  that has an identity element, with $\Gamma$ discrete.
  If $F$ is implemented by an \rnc{} with $n$ neurons, then there exists an
  automaton that implements $F$ and whose semiautomaton is a 
  cascade of $n$ three-state semiautomata.
\end{theorem}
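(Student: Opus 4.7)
The plan is to put everything together by first constructing a canonical automaton for $F$ (over the abstract alphabet), then transferring to the real-valued level via the input symbol grounding, invoking the converse homomorphism proposition to push the \rnc{} dynamics down onto that automaton, and finally applying Lemma~\ref{lemma:main} to obtain the desired cascade of three-state semiautomata.

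More concretely, I first take a canonical automaton $A = \langle \Sigma, Q, \delta, q^{\mathrm{init}}, \Gamma, \theta\rangle$ implementing $F$, which exists by the standard right-congruence construction on $F$ (it may have infinitely many states, but that is allowed). Because $F$ has an identity element, Proposition~\ref{prop:identity-element-implies-identity-transformation} gives that the semiautomaton $A_T = \langle \Sigma, Q, \delta\rangle$ has an identity transformation. I then lift $A$ to the subsymbolic input level by composing with $\lambda_\Sigma$, obtaining the system
\begin{align*}
  \tilde A = \langle U, Q, \delta\circ\lambda_\Sigma, q^{\mathrm{init}}, \Gamma, \theta\rangle,
\end{align*}
whose underlying semiautomaton is precisely $A_\Sigma$ from Lemma~\ref{lemma:main}. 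A short check shows $\tilde A$ is canonical: surjectivity of $\lambda_\Sigma$ lets me lift every connecting string in $\Sigma^*$ to one in $U^*$, giving connectedness; and the same lift shows that any two distinct states, separated by some $s\in\Sigma^*$ in $A$, are still separated by a preimage string in $U^*$ under $\tilde A$, giving reduced form.

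Next, let $N$ be the given \rnc{} with $n$ neurons implementing $F$, and consider the system $N'$ obtained from $N$ by post-composing its output function with $\lambda_\Gamma$. Both components are continuous, so $N'$ is a continuous system; and by the assumption that $N$ implements $F$ under the symbol groundings, $N'$ agrees with $\tilde A$ on every input string in $U^*$, so the two systems are equivalent. Since $\Gamma$ is discrete and $\tilde A$ is canonical, Proposition~\ref{prop:homomorphism_converse} applies and yields that the dynamics $D$ of $N$ (which coincide with those of $N'$) homomorphically represent the dynamics of $\tilde A$, i.e.\ $A_\Sigma$. Lemma~\ref{lemma:main} then delivers a cascade $C$ of $n$ three-state semiautomata that homomorphically represents $A_T$, witnessed by some subdynamics $C'$ of $C$ on a state set $X'$ and a homomorphism $\psi: X' \to Q$.

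Finally, I assemble the target automaton. Since $\psi$ is surjective, I pick some $x^{\mathrm{init}}\in X'$ with $\psi(x^{\mathrm{init}}) = q^{\mathrm{init}}$ and define an output function $\eta$ on the state set of $C$ by $\eta(x) = \theta(\psi(x))$ for $x\in X'$, extended arbitrarily on $X\setminus X'$. Because $X'$ is closed under the transitions of $C$ (it underlies the subdynamics $C'$), every state reached from $x^{\mathrm{init}}$ lies in $X'$, and the homomorphism property of $\psi$ gives that after reading $s\in\Sigma^*$ the state $x_s$ satisfies $\psi(x_s) = \delta^*(q^{\mathrm{init}},s)$; hence $\eta(x_s) = \theta(\delta^*(q^{\mathrm{init}},s)) = A(s) = F(s)$. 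The resulting automaton has $C$ as its semiautomaton and implements $F$, as required. The main obstacle I anticipate is the careful bookkeeping in step~three—verifying canonicity of $\tilde A$ and the equivalence of $N'$ with $\tilde A$ relative to the two symbol groundings—since all subsequent steps are direct invocations of the already-established propositions and the core lemma.
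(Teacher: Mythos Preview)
Your proposal is correct and follows essentially the same approach as the paper: take a canonical automaton for $F$, obtain an identity transformation via Proposition~\ref{prop:identity-element-implies-identity-transformation}, apply Proposition~\ref{prop:homomorphism_converse}, then Lemma~\ref{lemma:main}, and finally build the automaton on the cascade. You are more careful than the paper about the symbol-grounding bookkeeping (lifting $A$ to $\tilde A$ over $U$, post-composing $N$ with $\lambda_\Gamma$, and verifying canonicity of $\tilde A$), which the paper glosses over by simply asserting ``$N$ is equivalent to $A$''; your final explicit construction is just the content of Proposition~\ref{prop:homomorphism}, which the paper invokes directly.
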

\begin{proof}
  Let $N$ be an \rnc{} with $n$ neurons that implements $F$.
  Furthermore, let $A$ be a canonical automaton that implements $F$, which
  always exists.
  By Proposition~\ref{prop:identity-element-implies-identity-transformation}, we
  have that $A$ has an identity transformation.
  Since $N$ is equivalent to $A$,
  by Proposition~\ref{prop:homomorphism_converse}, we have that the
  dynamics of $N$ homomorphically represent the semiautomaton of $A$.
  Then, by Lemma~\ref{lemma:main}, it follows that the semiautomaton of $A$ is
  homomorphically represented by a cascade $C$ of $n$ three-state semiautomata. 
  By Proposition~\ref{prop:homomorphism}, there is an automaton $A_C$ with
  semiautomaton $C$ that is equivalent to $A$, and hence it implements $F$.
\end{proof}

The above theorem can be interpreted as providing a lower bound on the
succinctness of \rnc{}. Namely, if a function requires at least $n$ components
to be implemented by a cascade of three-state semiautomata, then it necessarily
requires an \rnc{} with at least $n$ neurons.

In particular, the theorem immediately implies a finite bound on the number of
states required to implement any function that can be implemented by an \rnc{}.

\begin{corollary} \label{cor:succinctness}
  Let $F$ be a function from $\Sigma^*$ to $\Gamma$
  that has an identity element, with $\Gamma$ discrete.
  If $F$ is implemented by an \rnc{} with $n$ neurons, then there exists an
  automaton with at most $3^n$ states that implements $F$.
\end{corollary}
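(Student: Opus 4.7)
The plan is to derive this corollary as an immediate consequence of Theorem~\ref{theorem:succinctness}. Since the heavy lifting has already been done at the level of the cascade decomposition, what remains is essentially a state-counting argument.

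First, I would apply Theorem~\ref{theorem:succinctness} to obtain an automaton $A_C$ that implements $F$ whose semiautomaton is a cascade $C = D_1 \ltimes \dots \ltimes D_n$ of $n$ three-state semiautomata, where each $D_i = \langle \Sigma \times X_{[i-1]}, X_i, f_i \rangle$ with $|X_i| \leq 3$.

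Next, I would recall from the cascade definition in the preliminaries that the state space of $C$ is the factored set $X_1 \times \dots \times X_n$. Since each factor has cardinality at most $3$, the cardinality of the overall state space is bounded by $|X_1| \cdot |X_2| \cdots |X_n| \leq 3^n$. Thus $A_C$ is an automaton with at most $3^n$ states implementing $F$, which is exactly what the corollary asserts.

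There is no real obstacle here: the corollary is purely a counting consequence of the structural result in Theorem~\ref{theorem:succinctness}, and the only ingredients needed beyond that theorem are the definition of a cascade's state space as a product and the elementary bound on the size of a product of finite sets. All the substantive content, namely the reduction of each neuron to a three-state component, is already encapsulated in the preceding theorem via Lemma~\ref{lemma:main}.
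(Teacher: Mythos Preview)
Your proposal is correct and matches the paper's approach: the paper states the corollary immediately after Theorem~\ref{theorem:succinctness} without a separate proof, treating it as a direct consequence via the product state-space bound $|X_1|\cdots|X_n|\leq 3^n$. Your write-up simply makes explicit the one-line counting argument the paper leaves implicit.
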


The corollary can be interpreted as providing a lower bound on the succinctness
of \rnc{}. Namely, an \rnc{} with $n$ components cannot implement a function
that requires more than $3^n$ states. 

\begin{remark}
  Theorem~\ref{theorem:succinctness} and Corollary~\ref{cor:succinctness} apply
  to languages seamlessly, as they apply to their indicator function.
\end{remark}

\subsection{Our Expressivity Results}
\label{sec:expressivity-results}

In this section we state our expressivity results for functions, and hence
languages, with an identity element.

\begin{theorem} \label{th:expressivity-aperiodic-identity}
  The functions with an identity element and a discrete codomain 
  implemented by \rnc{} are regular.
\end{theorem}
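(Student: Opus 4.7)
The plan is to derive this theorem as an essentially immediate consequence of Corollary~\ref{cor:succinctness}, which is the finite-state bound already established in the previous subsection. The work has already been done in the core technical results; this theorem is the qualitative expressivity statement that repackages the succinctness bound.

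Concretely, I would proceed as follows. Suppose $F : \Sigma^* \to \Gamma$ has an identity element, $\Gamma$ is discrete, and $F$ is implemented by some \rnc{} $N$. Any \rnc{} has by definition a finite number of recurrent neurons; call this number $n$. Since the hypotheses of Corollary~\ref{cor:succinctness} are exactly met (identity element, discrete codomain, implemented by an \rnc{} with $n$ neurons), the corollary hands us an automaton $A$ with at most $3^n$ states that implements $F$. Since $3^n$ is finite, $A$ is a finite-state automaton, and therefore $F$ is regular by the definition of regular function given in the preliminaries.

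There is essentially no obstacle here, as the entire technical content sits in Lemma~\ref{lemma:main} and its consequence Theorem~\ref{theorem:succinctness}. The only thing to verify is that the hypotheses match perfectly, namely that the notion of ``regular function'' used in the statement coincides with ``implemented by a finite-state automaton'' (which it does, per the definition section) and that an \rnc{} with $n$ neurons is indeed a structure to which the corollary applies (which it is, by construction). No separate argument is needed for the language version, since a language is regular iff its indicator function is regular, and the indicator function has the same identity element as the language, and $\{0,1\}$ is discrete.
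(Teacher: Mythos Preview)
Your proposal is correct and essentially matches the paper's proof. The only cosmetic difference is that the paper invokes Theorem~\ref{theorem:succinctness} directly (obtaining a cascade of $n$ three-state semiautomata, which is in particular finite-state) rather than its Corollary~\ref{cor:succinctness}, but since the corollary is an immediate consequence of the theorem, the two arguments are equivalent.
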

\begin{proof}
  Let us consider a function $F$ with an identity element and a discrete
  codomain, and let $N$ be an \rnc{} that implements $F$.
  By Theorem~\ref{theorem:succinctness}, there exists an automaton $A$
  that implements $F$ and whose semiautomaton is a 
  cascade of three-state semiautomata. In particular, $A$ is finite-state and
  hence $F$ is regular.
\end{proof}

We combine our results for functions with the existing ones to obtain an exact
characterisation of the functions over finite alphabets recognised by \rnc{} in
the presence of an identity element.

\begin{theorem} \label{th:expressivity-aperiodic-identity-functions-finite}
  The functions over finite alphabets having an identity element 
  that can be implemented by \rnc{} are aperiodic regular.
\end{theorem}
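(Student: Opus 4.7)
The plan is to obtain the theorem as a direct synthesis of Theorem~\ref{th:expressivity-aperiodic-identity} with the existing characterisation in Theorem~\ref{theorem:previous-results}, handling the two directions separately.

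For the forward direction, I would start with a function $F : \Sigma^* \to \Gamma$ over a finite alphabet $\Sigma$ having an identity element $e$, and assume $F$ is implemented by some \rnc{} $N$. Since $\Gamma$ is finite it is in particular discrete, so Theorem~\ref{th:expressivity-aperiodic-identity} applies and gives that $F$ is regular. At this point $F$ is a regular function over a finite alphabet that is implemented by \rnc{}, so the second clause of Theorem~\ref{theorem:previous-results} immediately yields that $F$ is aperiodic regular.

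For the converse direction, I would take an arbitrary aperiodic regular function $F$ over a finite alphabet having an identity element and invoke the second clause of Theorem~\ref{theorem:previous-results} in the other direction: every aperiodic regular function over finite alphabets is implemented by some \rnc{}. No extra work is needed here because the identity-element hypothesis only restricts the class of functions under consideration, it does not interfere with realisability.

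There is essentially no technical obstacle: the novelty is entirely concentrated in Theorem~\ref{th:expressivity-aperiodic-identity}, whose proof in turn relies on Lemma~\ref{lemma:main} via Theorem~\ref{theorem:succinctness}. The only subtlety worth flagging explicitly in the write-up is that the finiteness of $\Sigma$ is used to invoke Theorem~\ref{theorem:previous-results}, while the discreteness of $\Gamma$, needed to apply Theorem~\ref{th:expressivity-aperiodic-identity}, follows automatically since $\Gamma$ is a finite alphabet. The proof can therefore be presented in a few lines as a chaining of the two theorems, stated as ``if and only if'' to make explicit that both containments hold.
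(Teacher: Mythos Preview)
Your proposal is correct and follows exactly the paper's approach: apply Theorem~\ref{th:expressivity-aperiodic-identity} (using that a finite output alphabet is discrete) to obtain regularity, then invoke Theorem~\ref{theorem:previous-results} to conclude aperiodicity, with the converse direction coming straight from Theorem~\ref{theorem:previous-results}. The paper's own proof spells out only the forward direction explicitly, leaving the converse implicit in the existing characterisation, but otherwise the argument is identical.
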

\begin{proof}
  Consider a function $F$ over finite alphabets with an identity element 
  implemented by \rnc{}. We have that $F$ is regular by 
  Theorem~\ref{th:expressivity-aperiodic-identity}, noting that every finite
  alphabet is discrete. Then, $F$ is aperiodic regular by
  Theorem~\ref{theorem:previous-results}.
\end{proof}

Having established the results for functions, we now derive the result for
languages, considering that their indicator function is a function over finite
alphabets. 

\begin{theorem} \label{th:expressivity-starfree-identity}
  The languages having an identity element that can be recognised by \rnc{}
  are star-free regular.
\end{theorem}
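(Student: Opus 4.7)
The plan is to derive Theorem~\ref{th:expressivity-starfree-identity} as an immediate specialisation of Theorem~\ref{th:expressivity-aperiodic-identity-functions-finite} via the standard correspondence between a language and its indicator function. The only real work is bookkeeping to connect the language-level notions (identity element, star-freeness) with their function-level counterparts (identity element, aperiodicity), and all the needed equivalences have already been recorded earlier in the paper.

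First, I would observe that a language $L$ over a finite alphabet $\Sigma$ is, by definition in Section~\ref{sec:setting}, recognised by an \rnc{} precisely when that \rnc{} implements its indicator function $f_L : \Sigma^* \to \{0,1\}$. Next I would invoke the remark made right after Definition~2 in Section~\ref{sec:languages-with-identity}: a letter $e \in \Sigma$ is an identity element for $L$ if and only if it is an identity element for $f_L$. Consequently, if $L$ has an identity element, then so does $f_L$, and since both $\Sigma$ and $\{0,1\}$ are finite, $f_L$ falls within the scope of Theorem~\ref{th:expressivity-aperiodic-identity-functions-finite}.

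Applying that theorem yields that $f_L$ is an aperiodic regular function. I would then spell out the final translation back to languages: comparing the two definitions of aperiodicity given at the end of Section~2.4, a language $L$ is aperiodic exactly when $f_L$ is aperiodic as a function, since both conditions are the universally quantified equality on strings of the form $x y^n z$ and $x y^{n+1} z$. Thus $L$ is aperiodic regular, and by the classical Schützenberger--McNaughton characterisation recalled in the preliminaries, $L$ is star-free regular, as required.

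There is essentially no technical obstacle at this stage; the heavy lifting was done in Lemma~\ref{lemma:main}, Theorem~\ref{theorem:succinctness}, and the preceding expressivity theorems. The only point that merits care is to be explicit that the functional notion of aperiodicity (used in Theorem~\ref{th:expressivity-aperiodic-identity-functions-finite}) specialises exactly to the language-theoretic notion, so that the step from ``$f_L$ is aperiodic regular'' to ``$L$ is star-free regular'' is not left implicit.
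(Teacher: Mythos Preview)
Your proof is correct and essentially the same idea as the paper's, with only a minor difference in packaging. The paper invokes Theorem~\ref{th:expressivity-aperiodic-identity} to get that $f_L$ (and hence $L$) is regular, and then applies the language clause of Theorem~\ref{theorem:previous-results} directly to conclude star-freeness; you instead go through Theorem~\ref{th:expressivity-aperiodic-identity-functions-finite} to obtain that $f_L$ is aperiodic regular and then unpack the function-level aperiodicity back to the language level. Since Theorem~\ref{th:expressivity-aperiodic-identity-functions-finite} is itself just Theorem~\ref{th:expressivity-aperiodic-identity} plus Theorem~\ref{theorem:previous-results}, the two routes are interchangeable and equally short.
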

\begin{proof}
  Let $L$ be a language with an identity element, and let $f_L$ be its indicator
  function.
  An \rnc{} recognises $L$ if it implements $f_L$.
  By Theorem~\ref{th:expressivity-aperiodic-identity}, it follows that $f_L$ is
  regular. We conclude that $L$ is regular, and hence star-free regular by
  Theorem~\ref{theorem:previous-results}.
\end{proof}

Theorems~\ref{th:expressivity-aperiodic-identity}--\ref{th:expressivity-starfree-identity}
allow us to draw the missing
conclusions for the languages and functions of the examples from the previous
sections.
First,
Theorem~\ref{th:expressivity-aperiodic-identity} implies that
function $F$ of Example~\ref{ex:arithmetic-intro} and functions $F_1$ and $F_2$
of Example~\ref{ex:arithmetic} cannot be implemented by \rnc{} since they
are not regular. 
Second,
Theorem~\ref{th:expressivity-aperiodic-identity-functions-finite} implies that
function $F_5$ of Example~\ref{ex:arithmetic} cannot be implemented by \rnc{}
since it is not regular. 
Third, 
Theorem~\ref{th:expressivity-starfree-identity} implies that the language of
Example~\ref{ex:rl} cannot be recognised by \rnc{} when the grid is infinite,
since the language is not regular in this case.

\section{Proof of Lemma~\ref{lemma:main}}
\label{sec:proof-main-lemma}

In this section we prove Lemma~\ref{lemma:main}.
We first introduce the context in Section~\ref{sec:context} below. 
Ultimately we will construct the required cascade in
Section~\ref{sec:construction}. To do that, we establish several intermediate
results in Sections~\ref{sec:convergence-results},
\ref{sec:equivalence-classes}, and \ref{sec:analysis}.

\subsection{Context}
\label{sec:context}

We introduce the context of the proof.
Let $D = \langle X, U, f \rangle$ be the dynamics of an \rnc{}.
We have $D = N_1 \ltimes \dots \ltimes N_n$ where $N_i = \langle X_i, U_i, f_i
\rangle$ is a recurrent tanh neuron, with dynamics function
\begin{align*}
  & f_i(x_i, u_i) 
  = \tanh(w_i \cdot x_i + \beta_i(u_i) ),
\end{align*}
with input $u_i = \langle u, x_1, \dots, x_{i-1} \rangle$ and weight 
$w_i \in \mathbb{R}_+$.
Let $A_T = \langle Q_T, \Sigma, \delta_T \rangle$ be a 
semiautomaton with an identity transformation induced by a letter 
$e \in \Sigma$.
Let $A_\Sigma$ be the semiautomaton resulting from the composition of $A_T$ with
the input symbol grounding $\lambda_\Sigma$. Namely,  
$A_\Sigma = \langle U, Q_T, \delta_\Sigma \rangle$ with 
$\delta_\Sigma(q,u) = \delta_T(q, \lambda_\Sigma(u))$.
Let $u_e \in U$ be an input such that $e = \lambda_\Sigma(u_e)$, which
exists since $\lambda_\Sigma$ is surjective.

The assumption is that $A_\Sigma$ is homomorphically represented by $D$.
Thus, there exists a homomorphism $\psi$ from some subdynamics 
$D' = \langle X', U, f \rangle$ of $D$ to $A_\Sigma$. 

\subsection{Convergence results}
\label{sec:convergence-results}

We show that the sequence of states of any \rnc{}
upon receiving a repeated input is convergent. In particular, it converges to a
fixpoint of the dynamics function of the \rnc{}.
We introduce notation to refer to such a sequence of states.

\begin{definition}
    Let $u \in U$,
    let $\langle x_1, \dots, x_n \rangle \in X$.
    For every $i \in [n]$, we define the sequence $(x_{i,t})_{t \geq 0}$ as
    \begin{align*}
        x_{i,0} &= x_i,
        \\
        x_{i,t} &= 
        f_i(x_{i,t-1}, \langle x_{1,t-1}, \dots, x_{i-1,t-1}, u \rangle)
        \quad\text{ for } t \geq 1,
    \end{align*}
    and we refer to it by $\mathcal{S}_i(u, x_1, \dots, x_i)$.
    For every $i \in [n]$ and every index $t \geq 0$,
    we define 
    \begin{align*}
      \mathcal{S}_i^t(u, x_1, \dots, x_i) & = x_{i,t},
      \\
      \bar{\mathcal{S}}_i^t(u, x_1, \dots, x_i) & = 
      \langle x_{1,t}, \dots, x_{i,t} \rangle.
    \end{align*}
\end{definition}

We show the sequence of states to be convergent, adapting
an argument from \cite{knorozova2024expressivity}.

\begin{restatable}{proposition}{propconvergence}
  \label{prop:convergence}
  Let $i \in [n]$,
  let $u \in U$, and
  let $\vec{x} \in X_{[i]}$.
  The sequence $\mathcal{S}_i(u, \vec{x})$ is convergent.
\end{restatable}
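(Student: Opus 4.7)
The plan is to prove Proposition~\ref{prop:convergence} by induction on the cascade depth $i$. For the base case $i = 1$, the first neuron has no cascade predecessors, so with $u$ held fixed the driving term $\beta_1(u)$ is a constant $v_1 \in \mathbb{R}$, and the recurrence reduces to iterating the autonomous map $g_1(x) = \tanh(w_1 x + v_1)$. Since $w_1 > 0$ and $\tanh$ is strictly increasing, $g_1$ is strictly monotone on $\mathbb{R}$. Comparing $x_{1,1}$ with $x_{1,0} = x_1$ fixes the direction of monotonicity of the whole sequence, and boundedness of $\tanh$ in $(-1,1)$ then delivers monotone convergence of $\mathcal{S}_1(u, x_1)$.

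For the inductive step, assume $\bar{\mathcal{S}}_{i-1}(u, \vec{x}_{[i-1]})$ converges to some $\vec{x}^* \in X_{[i-1]}$. The input function $\beta_i$ is continuous because it is implemented by a feedforward network, so the effective driving terms $v_{i,t} = \beta_i(\langle u, x_{1,t-1}, \dots, x_{i-1,t-1} \rangle)$ converge to $v^* = \beta_i(\langle u, x_1^*, \dots, x_{i-1}^* \rangle)$. It remains to show that the scalar non-autonomous recurrence $x_{i,t} = \tanh(w_i x_{i,t-1} + v_{i,t})$ converges under $v_{i,t} \to v^*$. This is the main technical obstacle, since the pure monotone-iteration trick from the base case no longer applies verbatim once the driving term varies in $t$.

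I would handle this via a sandwich argument. Fix any $\epsilon > 0$; eventually $v_{i,t} \in [v^* - \epsilon, v^* + \epsilon]$, and because $\tanh$ is monotone and $w_i > 0$, from that point on $x_{i,t}$ is pointwise bracketed by the iterates of the autonomous monotone maps $g_\pm^\epsilon(x) = \tanh(w_i x \pm \epsilon + v^*)$. By the base-case reasoning, iterating either $g_\pm^\epsilon$ from the current state produces a monotone bounded sequence that converges to a fixpoint of $g_\pm^\epsilon$. Letting $\epsilon \to 0$, continuity forces the enclosing fixpoints to approach fixpoints of the limit map $g(x) = \tanh(w_i x + v^*)$, so that both $\limsup$ and $\liminf$ of $(x_{i,t})$ lie in the fixpoint set of $g$.

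The hardest subpoint is to conclude $\limsup = \liminf$. A direct analysis of $x - \tanh(w_i x + v^*)$ shows that $g$ has at most three fixpoints, and any stable fixpoint admits a trapping neighborhood $N$ with $g(N) \subseteq N$; by continuity, $g_\pm^\epsilon(N) \subseteq N$ for all sufficiently small $\epsilon$. The argument I would finalize is that once $(x_{i,t})$ enters such a neighborhood it can no longer leave, so a case analysis over the (at most three) fixpoints of $g$ forces $\limsup$ and $\liminf$ to be trapped in a common neighborhood of arbitrarily small diameter, giving convergence of the whole sequence and closing the induction.
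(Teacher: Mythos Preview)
Your inductive skeleton and the use of continuity of $\beta_i$ to reduce to a scalar non-autonomous recurrence $x_{i,t}=\tanh(w_i x_{i,t-1}+v_{i,t})$ with $v_{i,t}\to v^*$ match the paper's approach exactly. The divergence is entirely in how you handle that scalar recurrence, and there the proposal has a genuine gap.

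Your sandwich uses \emph{autonomous} brackets: from some time $T_\epsilon$ you compare $x_{i,T_\epsilon+t}$ with the iterates $y_t^\pm$ of $g_\pm^\epsilon$ started at the same point $x_{i,T_\epsilon}$. When $g$ has three fixpoints $a<b<c$, the perturbed maps $g_\pm^\epsilon$ have unstable fixpoints $b_+^\epsilon<b<b_-^\epsilon$; if $x_{i,T_\epsilon}\in(b_+^\epsilon,b_-^\epsilon)$ then $y_t^+\to c_+^\epsilon$ while $y_t^-\to a_-^\epsilon$, so letting $\epsilon\to 0$ only yields $a\le\liminf\le\limsup\le c$. Your trapping-neighborhood fix then needs that $(x_{i,t})$ eventually enters a neighborhood of a stable fixpoint, but that is precisely what is in doubt: nothing you have written excludes the perturbations $v_{i,t}-v^*$ from repeatedly nudging the state back toward the repeller $b$, and your claim that $\limsup$ and $\liminf$ are themselves fixpoints of $g$ does not follow from the bracketing (one only gets $g(\limsup)\le\limsup$ and $g(\liminf)\ge\liminf$). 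The ``case analysis over the at most three fixpoints'' is therefore not yet an argument.

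The paper sidesteps this completely. Instead of autonomous brackets, it takes a monotone envelope $\epsilon_t\downarrow 0$ with $|v_{i,t}-v^*|<\epsilon_t$ and defines the lower squeeze \emph{step by step from the actual trajectory}, $\ell_t=\tanh(w_i\,x_{i,t-1}+v^*-\epsilon_t)$, so that $\ell_t\le x_{i,t}\le r_t$ with $|r_t-\ell_t|\to 0$ by Lipschitz continuity of $\tanh$. A short monotonicity computation (using only that $\tanh$ is increasing and $w_i\ge 0$) shows that $(\ell_t)$ is eventually monotone, hence convergent by boundedness; the squeeze theorem then gives convergence of $(x_{i,t})$ with no reference to the fixpoint structure of $g$ at all. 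If you want to repair your route, the missing ingredient is a proof that the trajectory cannot linger near the unstable fixpoint indefinitely; but the paper's eventually-monotone squeeze is both shorter and avoids the case split.
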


In light of the above proposition, we introduce notation to refer to the limit
of the converging sequence of states.

\begin{definition}
  Let $i \in [n]$,
  let $u \in U$, 
  and
  let $\vec{x} \in X_{[i]}$.
  We define $\mathcal{S}_i^*(u, \vec{x})$ as the limit of the sequence 
  $\mathcal{S}_i(u, \vec{x})$.
  Furthermore, we define 
  $\bar{\mathcal{S}}_i^*(u, \vec{x}) = 
  \langle x_{1,*}, \dots, x_{i,*} \rangle$ where 
  $x_{j,*} = \mathcal{S}_j^*(u, x_1, \dots, x_j)$ for every $j \in [i]$.
\end{definition}

Next we show that the sequence converges to a fixpoint.

\begin{proposition} \label{prop:convergence-to-fixpoint}
  Let $i \in [n]$,
  let $u \in U$, and
  let $\vec{x} \in X_{[i]}$.
  The sequence $\mathcal{S}_i(u, \vec{x})$ converges to a fixpoint
  of the function $h_{i,v}(x) = \tanh(w_i \cdot x + v)$\/ for $v = \beta_1(u)$
  when $i=1$ and $v = \beta_i\big(u, \bar{\mathcal{S}}_{i-1}^*(u,
  \vec{x})\big)$ when $i \geq 2$.
\end{proposition}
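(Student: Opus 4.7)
The plan is to pass to the limit in the recursive definition of $\mathcal{S}_i(u, \vec{x})$ and use continuity. Proposition~\ref{prop:convergence} has already done the hard work by establishing convergence of every sequence $\mathcal{S}_j(u, x_1, \dots, x_j)$, so the limits $x_{j,*} = \mathcal{S}_j^*(u, x_1, \dots, x_j)$ exist for all $j \leq i$. Given that, the argument reduces to a straightforward application of continuity of $\tanh$ and of the input functions $\beta_i$, and can be organised as a two-case analysis on whether $i = 1$ or $i \geq 2$.

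For the base case $i = 1$, the recursion reads $x_{1,t} = \tanh(w_1 \cdot x_{1,t-1} + \beta_1(u))$, with the second argument constant in $t$. I would simply take $t \to \infty$ on both sides: the left side tends to $x_{1,*}$ by definition, and the right side tends to $\tanh(w_1 \cdot x_{1,*} + \beta_1(u))$ by continuity of $\tanh$, yielding the fixpoint equation for $h_{1,\beta_1(u)}$.

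For $i \geq 2$, the recursion is $x_{i,t} = \tanh\!\bigl(w_i \cdot x_{i,t-1} + \beta_i(u, x_{1,t-1}, \dots, x_{i-1,t-1})\bigr)$. By Proposition~\ref{prop:convergence} applied at each level $j < i$, the vector $\langle x_{1,t-1}, \dots, x_{i-1,t-1}\rangle$ converges to $\bar{\mathcal{S}}_{i-1}^*(u, \vec{x})$ componentwise. Since $\beta_i$ is continuous (being implemented by a feedforward neural network with continuous activations) and $\tanh$ is continuous, I would pass to the limit on both sides to obtain $x_{i,*} = \tanh\!\bigl(w_i \cdot x_{i,*} + \beta_i(u, \bar{\mathcal{S}}_{i-1}^*(u, \vec{x}))\bigr)$, which is exactly the fixpoint condition for $h_{i,v}$ with $v$ as stated.

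The only subtlety — hardly an obstacle — is explicitly invoking continuity of $\beta_i$; this is built into the definition of a recurrent tanh neuron, since $\beta_i$ is defined to be implementable by a feedforward network, and such networks are continuous. Everything else is a routine limit computation combined with Proposition~\ref{prop:convergence}.
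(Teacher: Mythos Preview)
Your proposal is correct and follows essentially the same approach as the paper: invoke Proposition~\ref{prop:convergence} to obtain the limits $x_{j,*}$, then pass to the limit in the recursion using continuity to derive the fixpoint equation. The only cosmetic difference is that the paper treats all $i$ uniformly by appealing to continuity of $f_i$ in one step, whereas you split into the cases $i=1$ and $i\geq 2$ and invoke continuity of $\tanh$ and $\beta_i$ separately; the underlying argument is identical.
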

\begin{proof}
  Let $(x_{i,t})_{t\geq 0}$ be the sequence $\mathcal{S}_i(u, \vec{x})$, and
  let
  $\bar{\mathcal{S}}_i^*(u, \vec{x}) = \langle x_{1,*}, \dots, x_{i,*} \rangle$.
For every $j \in [i]$, we have
  \begin{align*}
    \lim_{t \to \infty} x_{j,t} = x_{j,*}.
  \end{align*}
  By continuity of $f_i$, we have 
  \begin{align*}
    & \lim_{t \to \infty} 
    f_i(x_{i,t}, \langle u, x_{1,t}, \dots, x_{{i-1},t} \rangle)
    \\
    & = f_i(x_{i,*}, \langle u, x_{1,*}, \dots, x_{{i-1},*} \rangle)
    \\
    & = h_{i,v}(x_{i,*}).
  \end{align*}
  By the definition of $x_{i,t+1}$, we have 
  \begin{align*}
    \lim_{t \to \infty} 
    f_i(x_{i,t}, \langle u, x_{1,t}, \dots, x_{{i-1},t} \rangle)
     = \lim_{t \to \infty}  x_{i,t+1}
     =  x_{i,*}.
  \end{align*}
  Thus $h_{i,v}(x_{i,*}) = x_{i,*}$, hence $x_{i,*}$ is a fixpoint of 
  $h_{i,v}$.
\end{proof}

\subsection{Equivalence classes}
\label{sec:equivalence-classes}

Based on the convergence results, we introduce equivalence relations which
describe the necessary behaviour of the considered homomorphism $\psi$.
Here the focus is on the relevant subdynamics $D'$ of the \rnc{}.
Let us recall that $X'$ is the set of states of $D'$, it is a factored set,
and $X'_{[i]}$ denotes its projection on the first $i$ components. Elements of
$X'_{[i]}$ are states of the prefix 
$N_1 \ltimes \dots \ltimes N_i$ of the \rnc{} dynamics.
We introduce an equivalence relation on $X'_{[i]}$ based on where
states converge when the identity input $u_e$ is repeatedly applied.

\begin{definition}  \label{def:convergence-equivalence}
  For every $i \in [n]$, we define the equivalence relation $\sim_e$ on 
  $X'_{[i]}$ as the smallest equivalence relation such
  that, for every $\vec{x}, \vec{y} \in X_{[i]}$,
  the equivalence 
  $\vec{x} \sim_e \vec{y}$ holds whenever
  $\bar{\mathcal{S}}_i^*(u_e,\vec{x}) = \bar{\mathcal{S}}_i^*(u_e,\vec{y})$.
\end{definition}

Next we coarsen the above equivalence relation by making equivalent the
successor states of equivalent states.

\begin{definition} \label{def:successor-equivalence}
  For every $i \in [n]$,
  we define the equivalence relation $\sim$ on $X'_{[i]}$ as the smallest
  equivalence relation such that, for every $\vec{x}, \vec{y} \in X'_{[i]}$,
  the following implications hold:
  \begin{itemize}
    \item
      $\vec{x} \sim_e \vec{y}$ implies $\vec{x} \sim \vec{y}$;
    \item
      $\vec{x} \sim \vec{y}$ implies 
      $\bar{f}_i(\vec{x}, u) \sim \bar{f}_i(\vec{y}, v)$
      for every $u,v \in U$ with $\lambda_\Sigma(u) = \lambda_\Sigma(v)$.
  \end{itemize}
\end{definition}

In the next proposition we show that, when input $u_e$ is iterated, the homomorphism
maps all states of the resulting sequence to the same state of the target
semiautomaton. 

\begin{proposition} \label{prop:identity-transformation}
  For every $\vec{x} \in X'$ and every index $t \geq 0$,
  it holds that $\psi(\bar{\mathcal{S}}^t_n(u_e, \vec{x})) =
  \psi(\bar{\mathcal{S}}^*_n(u_e, \vec{x})) = \psi(\vec{x})$.
\end{proposition}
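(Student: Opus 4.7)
The plan is to reduce the statement to two facts that combine cleanly: $u_e$ acts as the identity on $A_\Sigma$, and the homomorphism $\psi$ is continuous. Once these are in hand, the finite iterations case is a routine induction and the limit case follows by continuity together with Proposition~\ref{prop:convergence}.

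First I would unfold the composition defining $A_\Sigma$ to observe that for every $q \in Q_T$ we have $\delta_\Sigma(q, u_e) = \delta_T(q, \lambda_\Sigma(u_e)) = \delta_T(q, e) = q$, because $e$ induces an identity transformation of $A_T$ by assumption. Combining this with the homomorphism identity $\psi(f(\vec{y}, u_e)) = \delta_\Sigma(\psi(\vec{y}), u_e)$, valid on every $\vec{y} \in X'$, yields the single-step invariance $\psi(f(\vec{y}, u_e)) = \psi(\vec{y})$. A straightforward induction on $t$ then gives $\psi(\bar{\mathcal{S}}^t_n(u_e, \vec{x})) = \psi(\vec{x})$: the base case $t=0$ is immediate from $\bar{\mathcal{S}}^0_n(u_e, \vec{x}) = \vec{x}$, and the inductive step rewrites $\bar{\mathcal{S}}^{t+1}_n(u_e, \vec{x}) = f(\bar{\mathcal{S}}^t_n(u_e, \vec{x}), u_e)$ and applies the single-step invariance together with the induction hypothesis.

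For the limit equality I would invoke Proposition~\ref{prop:convergence}, which guarantees that $\bar{\mathcal{S}}^t_n(u_e, \vec{x}) \to \bar{\mathcal{S}}^*_n(u_e, \vec{x})$ in $X$. Since $\psi$ is continuous and the sequence $\bigl(\psi(\bar{\mathcal{S}}^t_n(u_e, \vec{x}))\bigr)_{t \geq 0}$ is the constant sequence $\psi(\vec{x})$, continuity forces $\psi(\bar{\mathcal{S}}^*_n(u_e, \vec{x})) = \psi(\vec{x})$. This also uses that $Q_T$ carries a discrete metric, so the trivial limit of a constant sequence is unambiguous.

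The main obstacle I expect is not the algebraic manipulation but the technical point that $\bar{\mathcal{S}}^*_n(u_e, \vec{x})$ must lie in the domain $X'$ of $\psi$ for the continuity argument to be applicable. Since $D'$ is merely required to be closed under $f$, topological closedness is not automatic; one needs to argue either that $X'$ may be taken topologically closed without loss of generality (the inclusion $X'' = \overline{X'}$ preserves the subdynamics property because $f$ is continuous) or to extend $\psi$ to the closure by continuity and verify the equality there. Either route is routine but deserves explicit mention to avoid a gap.
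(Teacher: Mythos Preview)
Your proof is correct and follows essentially the same route as the paper: induction via the homomorphism identity and the fact that $u_e$ induces the identity on $A_\Sigma$, followed by continuity of $\psi$ for the limit. Your closing observation about whether $\bar{\mathcal{S}}^*_n(u_e,\vec{x})$ lies in $X'$ is a genuine technical point that the paper's own proof also leaves implicit.
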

\begin{proof}
  Let $\vec{x} \in X'$.
  For every $t \geq 1$,
  by definition we have 
  $\bar{\mathcal{S}}^t_n(u_e, \vec{x}) = 
  f(\bar{\mathcal{S}}^{t-1}_n(u_e, \vec{x}), u_e)$.
  Then, by the definition of homomorphism, and since $\lambda(u_e) = e$
  induces an identity transformation in $A_T$, the following holds for
  every $t \geq 1$,
  \begin{align*}
    \psi(\bar{\mathcal{S}}^t_n(u_e, \vec{x})) 
    & = \psi(f(\bar{\mathcal{S}}^{t-1}_n(u_e, \vec{x}), u_e)) 
    \\
    & = \delta_T(\psi(\bar{\mathcal{S}}^{t-1}(u_e, \vec{x}), e))
     = \psi(\bar{\mathcal{S}}^{t-1}(u_e, \vec{x})).
  \end{align*}
  and hence $\psi(\bar{\mathcal{S}}^t_n(u_e, \vec{x}))
  = \psi(\bar{\mathcal{S}}^0_n(u_e, \vec{x})) = \psi(\vec{x})$.
Then, by continuity of $\psi$, 
  \begin{align*}
    \psi(\bar{\mathcal{S}}_n^*(u_e, \vec{x}))
    & = \psi\big(\lim_{t \to \infty} \bar{\mathcal{S}}_n^t(u_e, \vec{x}) \big)
    \\
    & = \lim_{t \to \infty} \psi(\bar{\mathcal{S}}_n^t(u_e, \vec{x})) 
     = \lim_{t \to \infty} \psi(\vec{x}) 
     = \psi(\vec{x}).
  \end{align*}
  This concludes the proof of the proposition.
\end{proof}

Finally we develop an inductive argument to show that, starting from two states
that are treated equally by the homomorphism, their successors will also be
treated equally. And thus the homomorphism is overall invariant under the
coarser of our equivalence relations.

\begin{proposition} \label{prop:homomorphism-invariance}
  For every $\vec{x}, \vec{y} \in X'$, it holds that
  $\vec{x} \sim \vec{y}$ implies $\psi(\vec{x}) = \psi(\vec{y})$.
\end{proposition}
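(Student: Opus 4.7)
The plan is to introduce the auxiliary relation $\equiv$ on $X'$ defined by $\vec{x} \equiv \vec{y}$ iff $\psi(\vec{x}) = \psi(\vec{y})$, which is trivially an equivalence relation. Since $\sim$ on $X' = X'_{[n]}$ is, by Definition~\ref{def:successor-equivalence}, the smallest equivalence relation closed under the two generating rules, it suffices to verify that $\equiv$ satisfies those same two closure properties. Once this is established, $\equiv$ contains $\sim$ and we obtain the desired implication $\vec{x} \sim \vec{y} \Rightarrow \psi(\vec{x}) = \psi(\vec{y})$.

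First closure: suppose $\vec{x} \sim_e \vec{y}$. Observe that the binary relation on $X'_{[n]}$ given by $\bar{\mathcal{S}}_n^*(u_e,\vec{x}) = \bar{\mathcal{S}}_n^*(u_e,\vec{y})$ is already reflexive, symmetric, and transitive (equality of limits is an equivalence), so by Definition~\ref{def:convergence-equivalence} it coincides with $\sim_e$. Thus $\bar{\mathcal{S}}_n^*(u_e,\vec{x}) = \bar{\mathcal{S}}_n^*(u_e,\vec{y})$, and Proposition~\ref{prop:identity-transformation} yields
\[
  \psi(\vec{x}) = \psi\bigl(\bar{\mathcal{S}}_n^*(u_e,\vec{x})\bigr)
  = \psi\bigl(\bar{\mathcal{S}}_n^*(u_e,\vec{y})\bigr) = \psi(\vec{y}),
\]
hence $\vec{x} \equiv \vec{y}$.

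Second closure: suppose $\vec{x} \equiv \vec{y}$ and pick $u, v \in U$ with $\lambda_\Sigma(u) = \lambda_\Sigma(v)$. Since we are working with all $n$ components, $\bar{f}_n(\vec{x}, u) = f(\vec{x}, u)$. Using that $\psi$ is a homomorphism from $D'$ to $A_\Sigma$, together with the composition defining $\delta_\Sigma$, we compute
\[
  \psi\bigl(f(\vec{x},u)\bigr) = \delta_\Sigma\bigl(\psi(\vec{x}),u\bigr)
  = \delta_T\bigl(\psi(\vec{x}), \lambda_\Sigma(u)\bigr),
\]
and symmetrically for $\vec{y}, v$. Because $\psi(\vec{x}) = \psi(\vec{y})$ and $\lambda_\Sigma(u) = \lambda_\Sigma(v)$, the two right-hand sides are equal, so $\bar{f}_n(\vec{x},u) \equiv \bar{f}_n(\vec{y},v)$.

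There is no genuine obstacle here: the conceptual load has already been carried by Proposition~\ref{prop:identity-transformation}, which absorbs the effect of iterating the identity input, and by the homomorphism property of $\psi$, which handles non-identity inputs through the shared $\lambda_\Sigma$-image. The only point requiring a brief remark is that $\sim_e$ coincides with the underlying equal-limits relation (needed to avoid having to chase arbitrary chains of transitivity), which follows because that relation is already an equivalence.
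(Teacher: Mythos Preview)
Your proof is correct and rests on the same two ingredients as the paper's: Proposition~\ref{prop:identity-transformation} to handle the $\sim_e$ base, and the homomorphism property of $\psi$ together with $\delta_\Sigma(q,u)=\delta_T(q,\lambda_\Sigma(u))$ to handle the successor step. The paper frames the argument as an induction on path length, asserting that any pair $\vec{x}\sim\vec{y}$ is reached from some $\vec{x}_0\sim_e\vec{y}_0$ along two input sequences with matching $\lambda_\Sigma$-images, and then inducting on the length $t$ of those sequences. You instead invoke the minimality of $\sim$ directly by checking that the kernel $\equiv$ of $\psi$ is an equivalence relation satisfying both closure rules of Definition~\ref{def:successor-equivalence}, whence $\sim\subseteq{\equiv}$. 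Your route is slightly cleaner: it makes the transitivity closure automatic, whereas the paper's phrasing tacitly assumes that every $\sim$-pair arises from a \emph{single} $\sim_e$-pair plus a path, which is really only true up to transitive chains. The substantive content is the same.
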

\begin{proof}
  Since $\vec{x} \sim \vec{y}$, there exist
  states $\vec{x}_0, \vec{y}_0 \in X'$ with
  $\vec{x}_0 \sim_e \vec{y}_0$, and two possibly-empty sequences of inputs 
  $u_1, \dots, u_t$ and $v_1, \dots, v_t$ such that 
  (i) $\lambda_\Sigma(u_k) = \lambda_\Sigma(v_k)$ for every $k \in [t]$,
  and
  (ii) letting
  $\vec{x}_k = f(\vec{x}_{k-1}, u_k)$
  and
  $\vec{y}_k = f(\vec{y}_{k-1}, v_k)$ for every $k \in [t]$,
  we have $\vec{x}_t = \vec{x}$ and $\vec{y}_t = \vec{y}$.
  We prove the proposition by induction on $t$.

  In the base case $t = 0$, hence
  $\vec{x}_0 = \vec{x}$ and $\vec{y}_0 = \vec{y}$, and hence
  $\vec{x} \sim_e \vec{y}$.
Thus 
  $\bar{\mathcal{S}}_n^*(u_e, \vec{x}) = 
  \bar{\mathcal{S}}_n^*(u_e, \vec{y})$, and hence 
  by Proposition~\ref{prop:identity-transformation} we have
  $\psi(\vec{x}) = \psi(\vec{y})$ as required.

  In the inductive case, we have $t \geq 1$ and we assume 
  $\psi(\vec{x}_{t-1}) = \psi(\vec{y}_{t-1})$. 
  By the definition of homomorphism,
  \begin{align*}
    & \psi(\vec{x}_t) = \psi(f(\vec{x}_{t-1}, u_t)) = 
    \delta_T\big(\psi(\vec{x}_{t-1}), \lambda_\Sigma(u_t)\big),
    \\
    & \psi(\vec{y}_t) = \psi(f(\vec{y}_{t-1}, v_t)) = 
    \delta_T\big(\psi(\vec{y}_{t-1}), \lambda_\Sigma(v_t)\big).
  \end{align*}
  Since $\psi(\vec{x}_{t-1}) = \psi(\vec{y}_{t-1})$
  and $\lambda_\Sigma(u_t) = \lambda_\Sigma(v_t)$, we conclude that
  $\psi(\vec{x}_t) = \psi(\vec{y}_t)$, as required.
\end{proof}

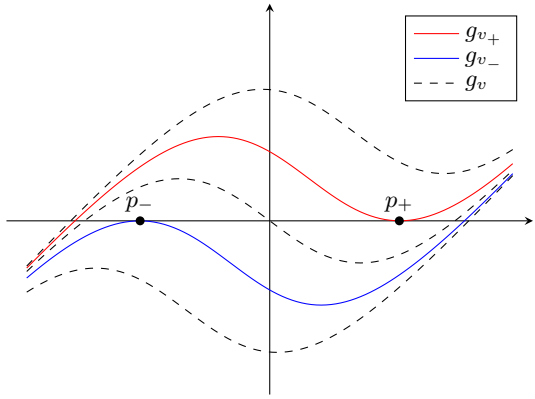
\begin{figure}
  \centering
\begin{tikzpicture}[font=\small]
\begin{axis}[
    axis lines = middle,
scale only axis, height=5.2cm,
    width=7cm,
    legend pos=north east,
    xmin=-1.3,
    xmax=1.3,
    ymin=-0.8,
    ymax=1,
    ticks=none,
    ticklabel style = {font=\tiny}
]
\addplot [
    domain=-1.2:1.2, 
    samples=100, 
    color=red,]
{x - tanh(1.7 * \x - 0.33))};
\addlegendentry{\(g_{v_+}\)}
\addplot [
    domain=-1.2:1.2, 
    samples=100, 
    color=blue,
]
{x - tanh(1.7 * \x + 0.33))};
\addlegendentry{\(g_{v_-}\)}
\addplot [
    domain=-1.2:1.2, 
    samples=100, 
    color=black,
    style=dashed,
]
{x - tanh(1.7 * \x + 0.7))};
\addplot [
    domain=-1.2:1.2, 
    samples=100, 
    color=black,
    style=dashed,
]
{x - tanh(1.7 * \x + 0))};
\addplot [
    domain=-1.2:1.2, 
    samples=100, 
    color=black,
    style=dashed,
]
{x - tanh(1.7 * \x - 0.7))};
\addlegendentry{\(g_{v\phantom{_+}}\)}
\node at (axis cs:-0.64,0) [anchor=south] {$p_-$};
\node at (axis cs:-0.64,0) [circle,fill,inner sep=1.2pt]{};
\node at (axis cs:0.64,0) [anchor=south] {$p_+$};
\node at (axis cs:0.64,0) [circle,fill,inner sep=1.2pt]{};
\end{axis}
\end{tikzpicture}
\caption{Function $g_v$ for different values of $v$.}
\label{figure:function-g-pivots}
\end{figure}

\subsection{Analysis of tanh dynamics}
\label{sec:analysis}

We carry out an analysis of the dynamics function of a recurrent tanh neuron,
with the goal of identifying its fixpoints and in particular the way they are
positioned.
Let $w \in \mathbb{R}_+$, let $v \in \mathbb{R}$, and let us consider the
functions
\begin{align*}
  h_v(x) = \tanh(w \cdot x + v), \qquad
  g_v(x) = x - h_v(x).
\end{align*}
Function $h_v$ is the dynamics function of a recurrent neuron for a fixed input, and
its fixpoints coincide with the zeroes of $g_v$. In fact, $h_v(x) = x$ iff 
$g_v(x) = 0$. Hence, we analyse the zeroes of $g_v$ in place of the fixpoints of
$h_v$.

\begin{restatable}{proposition}{propshapeofgsmallweight} 
  \label{prop:shape-of-g-small-weight}
  If $w \in [0,1]$, function $g_v$ has only one zero.
\end{restatable}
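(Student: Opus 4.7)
The plan is to analyse $g_v$ via its derivative and show it is strictly monotonic, then apply the intermediate value theorem.

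First, I would differentiate:
\begin{align*}
g_v'(x) = 1 - w \cdot \operatorname{sech}^2(wx+v) = 1 - w\bigl(1 - \tanh^2(wx+v)\bigr).
\end{align*}
Since $\tanh^2 \in [0,1)$, the quantity $1 - \tanh^2(wx+v)$ lies in $(0,1]$. Combined with the hypothesis $w \in [0,1]$, this gives $w(1-\tanh^2(wx+v)) \in [0, w] \subseteq [0,1]$, so $g_v'(x) \geq 1 - w \geq 0$.

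Next I would argue strict monotonicity in two cases. If $w < 1$, then $g_v'(x) \geq 1 - w > 0$ everywhere, so $g_v$ is strictly increasing. If $w = 1$, then $g_v'(x) = \tanh^2(x+v)$, which vanishes only at the single point $x = -v$; thus $g_v$ is still strictly increasing (a continuously differentiable function whose derivative is nonnegative and vanishes only at isolated points is strictly increasing). In either case $g_v$ is injective, so it has at most one zero.

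Finally I would note that $\tanh$ is bounded in $[-1,1]$, so $g_v(x) = x - \tanh(wx+v)$ satisfies $\lim_{x \to +\infty} g_v(x) = +\infty$ and $\lim_{x \to -\infty} g_v(x) = -\infty$. By continuity and the intermediate value theorem, $g_v$ has at least one zero, and combined with injectivity exactly one.

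The proof is essentially routine calculus; there is no serious obstacle. The only mild subtlety is the boundary case $w = 1$, where the derivative is not strictly positive everywhere and one has to invoke the fact that a monotone derivative vanishing only at isolated points still gives strict monotonicity.
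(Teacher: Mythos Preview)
Your proof is correct and follows essentially the same approach as the paper: compute $g_v'(x) = 1 - w\,\operatorname{sech}^2(wx+v)$, bound it below using $w \in [0,1]$ to get (strict) monotonicity, and combine with the limits at $\pm\infty$ to conclude a unique zero. In fact your treatment is slightly more careful than the paper's, which asserts the derivative is ``always positive'' without singling out the boundary case $w=1$ where $g_v'$ vanishes at the isolated point $x=-v$.
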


In the rest we consider the case of $w > 1$.
The graph of $g_v$ for different values of $v$ is
shown in Figure~\ref{figure:function-g-pivots}. As it can be seen from the
graph, going from left to right, the function is increasing, then decreasing,
and then increasing again. In particular, it has two stationary points.

\begin{restatable}{proposition}{propshapeofg} 
  \label{prop:shape-of-g}
  The following properties hold:
  \begin{itemize}
    \item
      $g_v(x)$ goes to $-\infty$ when $x \to -\infty$, 
    \item
      $g_v(x)$ goes to $+\infty$ when $x \to +\infty$,
    \item
      $g_v$ has exactly two stationary points $p_-^v < p_+^v$,
    \item
      $g_v$ is strictly increasing in 
      $(-\infty, p_-^v) \cup (p_+^v, +\infty)$,
    \item
      $g_v$ is strictly decreasing in the interval $(p_-^v,p_+^v)$.
  \end{itemize}
  In particular, $p_-^v$ is a local maximum of $g_v$, and 
  $p_+^v$ is a local minimum of $g_v$.
  Furthermore, the derivative of $g_v$ is bounded as 
  $g_v'(x) \in [0,1]$ for every $x \in [-1,p_-^v] \cup [p_+^v,+1]$.
\end{restatable}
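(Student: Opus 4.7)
The plan is to work through $g_v$ via its derivative. I first compute
\[
  g_v'(x) = 1 - w\bigl(1-\tanh^2(wx+v)\bigr),
\]
using $\tfrac{d}{dz}\tanh(z) = 1-\tanh^2(z)$. The two limit claims follow immediately from $\tanh(z)\to\pm1$ as $z\to\pm\infty$: indeed $g_v(x)$ behaves like $x-1$ at $+\infty$ and like $x+1$ at $-\infty$, handling the first two bullets.

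Next I locate the stationary points by solving $g_v'(x)=0$, i.e.\ $1-\tanh^2(wx+v) = 1/w$. Since the hypothesis of this case is $w>1$, the right-hand side lies in $(0,1)$, so the equation becomes $\tanh^2(wx+v) = (w-1)/w \in (0,1)$. Because $\tanh\colon\mathbb{R}\to(-1,1)$ is a strictly increasing bijection, there are exactly two reals $\pm z_0$ (with $z_0>0$) satisfying $\tanh(z)^2 = (w-1)/w$; hence $g_v'$ vanishes at exactly two points, which I name $p_-^v < p_+^v$ according to $wx+v = -z_0$ and $wx+v = +z_0$ respectively.

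For the monotonicity bullets, I use that $1-\tanh^2(z)$ attains its unique maximum $1$ at $z=0$ and is strictly decreasing in $|z|$. Therefore $w(1-\tanh^2(wx+v)) > 1$ iff $|wx+v| < z_0$, equivalently iff $x\in(p_-^v,p_+^v)$. On that interval $g_v'<0$, and outside it $g_v'>0$. This yields strict decrease on $(p_-^v,p_+^v)$ and strict increase on $(-\infty,p_-^v)\cup(p_+^v,+\infty)$, and makes $p_-^v$ a local maximum and $p_+^v$ a local minimum.

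Finally, for the derivative bound, on $[-1,p_-^v]\cup[p_+^v,1]$ the sign analysis above gives $g_v'\ge 0$, with equality exactly at $p_\pm^v$; and $g_v'(x)\le 1$ holds everywhere because $w\bigl(1-\tanh^2(wx+v)\bigr)\ge 0$. Hence $g_v'(x)\in[0,1]$ on these intervals, as required. There is no real obstacle here; the only slightly subtle point is justifying that there are \emph{exactly} two stationary points, and that is what the bijectivity of $\tanh$ combined with $w>1$ is designed to give.
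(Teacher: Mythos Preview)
Your proof is correct and follows essentially the same approach as the paper: both compute $g_v'(x)=1-w\,\operatorname{sech}^2(wx+v)$ (equivalently $1-w(1-\tanh^2(wx+v))$), use the bell shape of $\operatorname{sech}^2$ with peak value $1$ at the origin to locate exactly two stationary points when $w>1$, and read off the monotonicity and derivative bounds from the sign of $g_v'$. Your version is slightly more explicit in solving $\tanh^2(wx+v)=(w-1)/w$ for the stationary points, but the underlying argument is the same.
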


Different values of $v$ determine different diagonal translations of
the same curve, as it can be observed from
Figure~\ref{figure:function-g-pivots}.
They also determine different horizontal translations of the derivative $g_v'$,
which allows us to determine how stationary points are translated for different
values of $v$.

\begin{restatable}{proposition}{propgraphtranslation} 
  \label{prop:graph-translation}
  Let $u,v \in \mathbb{R}$, and let $d = (u-v)/w$.
  It holds that $g_u(x) = g_v(x + d) - d$ and
  $g_u'(x) = g_v'(x + d)$.
  Furthermore, $p_+^u = p_+^v - d$ and $p_-^u = p_-^v - d$.
\end{restatable}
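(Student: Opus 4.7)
The plan is to proceed by direct calculation, exploiting the fact that the only dependence of $g_v$ on $v$ is through the argument of $\tanh$, so a change in $v$ amounts to a horizontal shift inside $\tanh$ that can be rewritten as a shift in $x$.

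First, I would verify the identity $g_u(x) = g_v(x+d) - d$ by substitution: expanding gives $g_v(x+d) - d = (x+d) - \tanh(w(x+d) + v) - d = x - \tanh(wx + wd + v)$, and by the definition $d = (u-v)/w$ we have $wd + v = u$, so the right-hand side equals $x - \tanh(wx+u) = g_u(x)$. The derivative identity then follows immediately by differentiating both sides with respect to $x$ and using the chain rule: $g_u'(x) = g_v'(x+d)\cdot 1 - 0 = g_v'(x+d)$. Alternatively one can compute $g_u'(x) = 1 - w\,\mathrm{sech}^2(wx+u)$ and $g_v'(x+d) = 1 - w\,\mathrm{sech}^2(wx + wd + v)$ and conclude by the same equality $wd+v = u$.

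For the stationary points, I would invoke Proposition~\ref{prop:shape-of-g}, which tells us that $g_v$ has exactly two stationary points $p_-^v < p_+^v$, and similarly $g_u$ has exactly two stationary points $p_-^u < p_+^u$. From the derivative identity, $g_u'(p_\pm^u) = 0$ gives $g_v'(p_\pm^u + d) = 0$, so both $p_-^u + d$ and $p_+^u + d$ are stationary points of $g_v$. Since the shift $x \mapsto x + d$ preserves the ordering, the larger of the two, $p_+^u + d$, must equal $p_+^v$, and the smaller, $p_-^u + d$, must equal $p_-^v$. Rearranging yields $p_+^u = p_+^v - d$ and $p_-^u = p_-^v - d$.

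There is no real obstacle here; the proposition is essentially a statement that translating $v$ corresponds to a diagonal translation of the graph of $g_v$, and the only mild point to be careful about is matching the ordering of stationary points under the shift, which is handled by Proposition~\ref{prop:shape-of-g} guaranteeing exactly two such points.
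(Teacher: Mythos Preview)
Your proposal is correct and essentially matches the paper's proof: both establish the translation identity by the same direct substitution using $wd + v = u$, and both obtain the derivative identity by differentiating. The only cosmetic difference is in matching the stationary points---the paper identifies $p_+^u + d$ as a local minimum of $g_v$ via the sign pattern of the derivative and invokes uniqueness of the local minimum, whereas you use the order-preserving property of the shift together with the fact that there are exactly two stationary points; these are interchangeable one-line arguments.
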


Thus, depending on $v$, the function $g_v$ crosses the $x$ axis in one,
two, or three points.
Of particular interest to us are the cases when $g_v$ has exactly two zeroes,
i.e., when it is tangent to the $x$ axis in one of its stationary points.
This holds exactly for two functions $g_{v_-}, g_{v_+}$ highlighted in
Figure~\ref{figure:function-g-pivots}.

\begin{restatable}{proposition}{propvminusvplus}
  \label{prop:v-minus-v-plus}
  There exist unique values $v_+ < v_-$ such that the function 
  $g_{v_-}$ takes value zero at its local maximum, 
  and the function $g_{v_+}$ takes value zero at its local minimum. 
\end{restatable}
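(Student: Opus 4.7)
The plan is to use the translation property from Proposition~\ref{prop:graph-translation} to reduce the problem to a single reference curve, and then read off the values of $v_-$ and $v_+$ from the values of $g_0$ at its stationary points.

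First, I fix $v = 0$ as a reference and apply Proposition~\ref{prop:graph-translation} with $u = v$ and reference value $0$. Setting $d = v/w$, this yields the relations
\[
  g_v(x) = g_0(x + v/w) - v/w,
  \qquad
  p_\pm^v = p_\pm^0 - v/w.
\]
Substituting, I compute the value of $g_v$ at each of its stationary points:
\[
  g_v(p_-^v) = g_0(p_-^0) - v/w,
  \qquad
  g_v(p_+^v) = g_0(p_+^0) - v/w.
\]

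Second, each of these two expressions is a strictly decreasing affine function of $v$ ranging over all of $\mathbb{R}$. Hence each has a unique zero, namely
\[
  v_- = w \cdot g_0(p_-^0),
  \qquad
  v_+ = w \cdot g_0(p_+^0),
\]
which establishes both existence and uniqueness.

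Third, to show $v_+ < v_-$, I use Proposition~\ref{prop:shape-of-g}: since $p_-^0 < p_+^0$ and $g_0$ is strictly decreasing on $(p_-^0, p_+^0)$, I have $g_0(p_-^0) > g_0(p_+^0)$. Multiplying by $w > 0$ yields $v_- > v_+$. The argument is essentially mechanical once the translation identity is applied; the only subtlety is being careful with the sign of the shift $d = v/w$ and confirming that the reference curve $g_0$ indeed has two distinct stationary points so that the strict inequality between $g_0(p_-^0)$ and $g_0(p_+^0)$ is genuine, but both facts are already guaranteed by Proposition~\ref{prop:shape-of-g} under the assumption $w > 1$ adopted for this part of the analysis.
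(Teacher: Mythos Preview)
Your proof is correct. Both your argument and the paper's rest on the translation identity of Proposition~\ref{prop:graph-translation}, but you use it differently: you fix $g_0$ as a reference curve, derive the closed-form expressions $g_v(p_\pm^v) = g_0(p_\pm^0) - v/w$, and read off $v_\pm = w\cdot g_0(p_\pm^0)$ directly; existence, uniqueness, and the ordering $v_+ < v_-$ then all follow at once from the single observation that $g_0(p_-^0) > g_0(p_+^0)$. The paper instead argues uniqueness by taking two candidate values $u,v$ with the same property and using the translation to force $d=(u-v)/w=0$, and establishes $v_+ < v_-$ by a separate chain of inequalities starting from $g_{v_+}(p_-^{v_+}) > 0$. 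Your route is more constructive and also handles existence explicitly, which the paper's proof leaves implicit; the paper's route avoids computing explicit formulas but is slightly longer and treats the two parts of the claim separately.
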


The local maximum of $g_{v_-}$ and the local minimum of $g_{v_+}$ provide us
with two \emph{pivots}, that we call $p_-$ and $p_+$ respectively.
The result of this section is that the zeroes of $g_v$, and hence the fixpoints
of $h_v$, always have the same position relative to the pivots $p_-$ and $p_+$,
for any value of $v$.

\begin{restatable}{proposition}{propanalysis} 
  \label{prop:analysis}
  Let $v \in \mathbb{R}$.
  The function $h_v$ has one, two, or three fixpoints. 
  They are in $[-1,+1]$. Furthermore,
  \begin{enumerate}
    \item 
      if $h_v$ has one fixpoint $x_1$, then $x_1 \leq p_-$ or $p_+ \leq x_1$;
    \item 
      if $h_v$ has two fixpoints $x_1 < x_3$,
      then $x_1 \leq p_- < p_+ \leq x_3$ or $x_1 \leq p_- < p_+ \leq x_3$;
    \item 
      if $h_v$ has three fixpoints 
      $x_1 < x_2 < x_3$, then $x_1 \leq p_- < x_2 < p_+ \leq x_3$.
  \end{enumerate}
\end{restatable}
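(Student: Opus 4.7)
First, any fixpoint satisfies $x = \tanh(wx+v) \in (-1, +1)$, so all fixpoints lie in $[-1,+1]$. By Proposition~\ref{prop:shape-of-g}, $g_v$ is strictly monotone on each of $(-\infty, p_-^v)$, $(p_-^v, p_+^v)$, and $(p_+^v, +\infty)$, so it admits at most one zero per interval and hence at most three fixpoints; the intermediate value theorem applied to $g_v$, which ranges from $-\infty$ to $+\infty$, yields at least one. For $w \leq 1$ Proposition~\ref{prop:shape-of-g-small-weight} handles the single-fixpoint case, so from here on I would assume $w > 1$.

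For the positions relative to the pivots, my plan is to reparameterize. For $y \in (-1, 1)$, the fixpoint equation $h_v(y) = y$ rearranges to $v = \nu(y) := \operatorname{arctanh}(y) - wy$, so the fixpoints for a given $v$ are exactly the preimages $\nu^{-1}(v)$. A direct computation gives $\nu'(y) = 1/(1-y^2) - w$, which vanishes precisely at $y = \pm\tau$ with $\tau := \sqrt{1 - 1/w}$. Hence $\nu$ is strictly increasing on $(-1, -\tau)$, strictly decreasing on $(-\tau, +\tau)$, strictly increasing on $(+\tau, 1)$, with $\nu(y) \to \mp\infty$ as $y \to \mp 1$, a local maximum at $-\tau$, and a local minimum at $+\tau$. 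The next step would be to identify the critical points of $\nu$ with the paper's pivots: $p_-$ is simultaneously a stationary point of $g_{v_-}$ (giving $w(1 - h_{v_-}(p_-)^2) = 1$) and a zero of $g_{v_-}$ (giving $h_{v_-}(p_-) = p_-$), so $p_-^2 = \tau^2$; since $p_-$ is the smaller (local-max) stationary point, a sign check on $g_{v_-}''$ forces $p_- < 0$, hence $p_- = -\tau$. The case $p_+ = +\tau$ is symmetric, together with $v_- = \nu(-\tau)$ and $v_+ = \nu(+\tau)$.

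The proposition would then reduce to counting intersections of the horizontal line $\{y : \nu(y) = v\}$ with the graph of $\nu$. For $v > v_-$ (resp.\ $v < v_+$) the unique preimage lies in $(+\tau, 1) = (p_+, 1)$ (resp.\ $(-1, -\tau) = (-1, p_-)$), giving case~1. For $v = v_\pm$ there are two preimages, one tangent at the corresponding pivot and one on the opposite outer branch, giving case~2 with $x_1 \leq p_- < p_+ \leq x_3$. For $v_+ < v < v_-$, the line meets each of the three monotone branches of $\nu$ exactly once, yielding case~3 with $x_1 < p_- < x_2 < p_+ < x_3$. The main obstacle will be the identification of $p_\pm$ with $\mp\tau$ from the defining properties of the pivots; once that is done, the rest is a routine reading of the graph of $\nu$.
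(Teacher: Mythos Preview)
Your approach is correct and genuinely different from the paper's argument. The paper never computes the pivots explicitly. Instead it works entirely with $g_v$ and its translates: Proposition~\ref{prop:graph-translation} gives $p_-^v = p_- - d^-$ with $d^- = (v-v_-)/w$ and $g_v(p_-^v) = -d^-$, and then the mean value theorem on $[x_1,p_-^v]$ combined with the bound $g_v'\in[0,1]$ from Proposition~\ref{prop:shape-of-g} forces $x_1 \le p_-$; the other intervals are handled symmetrically. Your route inverts the fixpoint equation to $\nu(y)=\operatorname{arctanh}(y)-wy$ and reads off the level sets of a single function, after pinning down $p_\pm=\pm\tau$ explicitly (your closing line has a sign slip: it should be $\pm\tau$, not $\mp\tau$). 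The paper's method is more structural and reuses its translation lemma directly, staying agnostic about the exact value of the pivots; your method is more elementary and transparent---once the identification $p_\pm=\pm\sqrt{1-1/w}$ is in hand, the three cases fall out by inspection of the shape of $\nu$---but it requires that extra computation and the second-derivative sign check, which you correctly flag as the only nontrivial step.
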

\begin{proof}[Proof sketch]
The fixpoints of $h_v$ correspond to the zeroes of $g_v$. Considering the intervals 
  $I_1 = [-1, p_-^v]$, 
  $I_2 = (p_-^v, p_+^v)$, and
  $I_3 = [p_+^v,+1]$,
  we have that Proposition~\ref{prop:shape-of-g} implies the following cases:
  (i) $g_v$ has one zero $x_1$, and either $x_1 \in I_1$ or $x_1 \in I_3$;
  (ii) $g_v$ has two zeroes $x_1,x_3$, and either 
  $x_1\in I_1$ and $x_3 \in I_2$, or $x_1 \in I_2$ and $x_3 \in I_3$; 
  and
  (iii) $g_v$ has three zeroes $x_1\in I_1$, $x_2 \in I_2$, $x_3 \in I_3$.

  In this proof sketch we discuss the case when $g_v$ has a zero $x_1 \in I_1$.
  In this case, to show the proposition, it suffices to show $x_1 \leq p_-$.
  The idea is to relate the stationary points $p_-^v$ and $p_-$.
  We have that $g_v$ is an upward-right translation of
  $g_{v_-}$, since $g_v$ has a zero $x_1 \in I_1$. Referring to
  Figure~\ref{figure:function-g-pivots}, examples of $g_v$ are the
  curves above the blue curve of $g_{v_-}$.
  This in particular implies $p_- < p_-^v$.
  The amount of horizontal and vertical shift is $d^- = (v - v_-)/w < 0$ 
  according to Proposition~\ref{prop:graph-translation}.
  The same proposition implies that $d^-$ is the horizontal shift of the
  derivative, and hence $p_-^v = p_- - d^-$ by that fact that stationary points
  are zeroes of the derivative.
Considering that $g_{v_-}(p_-) = 0$, the shift implies that
  $g_v(p_-^v) = -d^- > 0$.
  Then, according to Proposition~\ref{prop:shape-of-g},
  the slope $g_v'(x)$ of the curve $g_v(x)$ in the interval 
  $I_1$ is bounded as $[0,1]$, i.e., the function $g_v$ grows sublinearly, and
  hence the value of $g_v$ changes by less than $-d^-$ in the interval 
  $[p_-, p_-^v]$ whose length is $-d^-$.
  Therefore the value of $g_v$ has not reached zero yet at $p_-$, and hence its
  zero $x_1$ is further to the left, satisfying $x_1 \leq p_-$ as required.
  This concludes the proof of the considered case. The other cases can be proved
  using similar observations.
\end{proof}

\subsection{Construction of the semiautomata cascade}
\label{sec:construction}

In this section we construct a cascade $C = A_1 \ltimes \dots \ltimes A_n$ of
three-state semiautomata that homomorphically represents the target
semiautomaton $A_T$, proving Lemma~\ref{lemma:main-aux} and hence
our central Lemma~\ref{lemma:main}.
The construction makes use of the preliminary results proved in the previous
sections.

The construction is based on the idea that the relevant states of
the prefix $P_i = N_1 \ltimes \dots \ltimes N_i$ of the \rnc{} dynamics can be
grouped into $3^i$ classes with the homomorphism $\psi$ treating equally
all states in the same class. 
Recalling the equivalence relation introduced in
Definition~\ref{def:successor-equivalence},
our first step is to devise a function $\bar{\rho}_i$ that maps the relevant
states of $P_i$ into $3^i$ classes while preserving the equivalence
relation, in the sense of Proposition~\ref{prop:same-rho-implies-equivalent}.
Then the homomorphism will treat the states in each class equally since
it is invariant under the equivalence relation according to
Proposition~\ref{prop:homomorphism-invariance}.

First we introduce an auxiliary function that categorises any real value into
one of three digits, based on its position relative to the pivots $p_-$ and
$p_+$ introduced in Section~\ref{sec:analysis}.

\begin{definition}
  We define the set
  $\mathbb{D} = \{ 1,2,3 \}$, and we define the function
  $\kappa : \mathbb{R} \to \mathbb{D}$ as
  \begin{align*}
    \kappa(x) =
    \begin{cases}
      1 & \text{if } x \leq p_-,
      \\
      2 & \text{if } p_- < x < p_+,
      \\
      3 & \text{if } p_+ \leq x.
    \end{cases}
  \end{align*}
\end{definition}

Next we introduce a function that categorises states.

\begin{definition}
  Let $i \in [n]$.
  The function $\eta_i : X'_{[i]} \to \mathbb{D}$ is
  \begin{align*}
  \eta_i(x_1, \dots, x_i) = \kappa(\mathcal{S}^*_i(u_e, x_1, \dots, x_i)).
  \end{align*}
  Then, the function $\bar{\eta}_i : X'_{[i]} \to \mathbb{D}^i$ is
  \begin{align*}
    \bar{\eta}_i(x_1, \dots, x_i) & = 
    \langle \eta_1(x_1), \dots, \eta_i(x_1, \dots, x_i) \rangle.
  \end{align*}
\end{definition}

The function $\eta_i$ takes a state $\langle x_1, \dots, x_i \rangle$ 
and considers the fixpoint $\mathcal{S}^*_i(u_e, x_1, \dots, x_i)$ to which it
converges on input $u_e$. Then, the fixpoint is categorised by $\kappa$.
We are now ready to introduce the function $\bar{\rho}_i$ mentioned above.

\begin{definition}
  Let $i \in [n]$.
  The function $\bar{\rho}_i: X'_{[i]} \to \mathbb{D}$ is
  \begin{align*}
    \bar{\rho}_i(\vec{x}) = \min\{ \bar{\eta}_i(\class{\vec{x}}) \}.
  \end{align*}
  Then, the function $\rho_i: X'_{[i]} \to D'_i$ is
  \begin{align*}
    \rho_i(\vec{x}) = d_i \quad \text{ for } \quad 
    \bar{\rho}_i(\vec{x}) = \langle d_1, \dots, d_i \rangle.
  \end{align*}
\end{definition}

The function $\bar{\rho}_i(\vec{x})$ returns a tuple of digits
\emph{representing} the equivalence class $\class{\vec{x}}$.
Note that $\bar{\rho}_i(\vec{x}) = \bar{\eta}_i(\vec{x})$ when 
$\class{\vec{x}} = \class{\vec{x}}_e$. 
We show it preserves equivalence.

\begin{proposition} \label{prop:same-rho-implies-equivalent}
  For every $i \in [n]$, and
  every $\vec{x}, \vec{y} \in X'_{[i]}$, 
  if $\bar{\rho}_i(\vec{x}) = \bar{\rho}_i(\vec{y})$
  then $\vec{x} \sim \vec{y}$.
\end{proposition}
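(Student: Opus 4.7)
The plan is to exploit two features of $\bar{\rho}_i$: it is constant on every $\sim$-equivalence class by construction, and since its codomain $\mathbb{D}^i$ is finite the minimum in its definition is actually attained. Thus, assuming $\bar{\rho}_i(\vec{x}) = \bar{\rho}_i(\vec{y}) = \vec{d}$, I can pick representatives $\vec{x}^* \in \class{\vec{x}}$ and $\vec{y}^* \in \class{\vec{y}}$ with $\bar{\eta}_i(\vec{x}^*) = \bar{\eta}_i(\vec{y}^*) = \vec{d}$. The proof then reduces to showing that matching $\bar{\eta}_i$-values force $\sim_e$-equivalence (and hence $\sim$-equivalence); transitivity along $\vec{x} \sim \vec{x}^* \sim \vec{y}^* \sim \vec{y}$ will then finish the argument.

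The technical heart is therefore the following sublemma, which I would state and prove first: if $\bar{\eta}_i(\vec{x}) = \bar{\eta}_i(\vec{y})$, then $\bar{\mathcal{S}}_i^*(u_e, \vec{x}) = \bar{\mathcal{S}}_i^*(u_e, \vec{y})$. I plan to establish this by induction on the component index $j \leq i$, showing at every step that $\mathcal{S}_j^*(u_e, x_1, \dots, x_j)$ and $\mathcal{S}_j^*(u_e, y_1, \dots, y_j)$ coincide. By Proposition~\ref{prop:convergence-to-fixpoint}, both are fixpoints of the scalar map $h_{j,v}(x) = \tanh(w_j\, x + v)$, where $v$ depends only on $u_e$ and on the limits attained by the previous neurons: in the base case $v = \beta_1(u_e)$, and in the inductive case the inductive hypothesis forces those previous limits, and therefore $v$, to coincide across $\vec{x}$ and $\vec{y}$. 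The equality $\bar{\eta}_i(\vec{x}) = \bar{\eta}_i(\vec{y})$ says that the two candidate fixpoints share the same $\kappa$-digit, and Proposition~\ref{prop:analysis} then pins them down: each of the three $\kappa$-intervals $(-\infty, p_-]$, $(p_-, p_+)$, $[p_+, +\infty)$ contains at most one fixpoint of $h_{j,v}$, so agreement of digits yields actual equality.

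The step I expect to be most delicate is this inductive case of the sublemma, where one must carefully propagate that the input $v$ at the fixpoint of neuron $j$ becomes identical along the two trajectories; this is the point at which both the cascade structure and the continuity of $\beta_j$ (to commute the limit with $\beta_j$) come into play. Once that bookkeeping is in hand, Proposition~\ref{prop:analysis} supplies the key ingredient that turns coarse three-valued agreement into exact equality of fixpoints, and combining the sublemma with the transitive chain outlined in the first paragraph completes the proof.
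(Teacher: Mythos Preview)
Your proposal is correct and follows essentially the same approach as the paper's proof: pick representatives in $\class{\vec{x}}$ and $\class{\vec{y}}$ attaining the common minimum $\vec{d}$, prove by induction on the component index that equal $\bar{\eta}_i$-values force equal limits $\bar{\mathcal{S}}_i^*(u_e,\cdot)$ (hence $\sim_e$-equivalence) via Proposition~\ref{prop:convergence-to-fixpoint} and the fixpoint separation of Proposition~\ref{prop:analysis}, and conclude by transitivity of~$\sim$. The only detail you omit is the case $w_j \in [0,1]$, where the paper invokes Proposition~\ref{prop:shape-of-g-small-weight} (unique fixpoint, so equality is immediate) rather than Proposition~\ref{prop:analysis}; this is a trivial addition.
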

\begin{proof}
  Assuming $\bar{\rho}_i(\vec{x}) = \bar{\rho}_i(\vec{y})$,
  we have 
  \begin{align*}
    \min\{ \bar{\eta}_i(\class{\vec{x}}) \} = 
    \min\{ \bar{\eta}_i(\class{\vec{y}}) \} = 
    \langle d_1, \dots, d_i \rangle.
  \end{align*}
  Then, there exists a pair of states 
  $\langle x_{1,0}, \dots, x_{i,0} \rangle \in \class{\vec{x}}$ and 
  $\langle y_{1,0}, \dots, y_{i,0} \rangle \in \class{\vec{y}}$
  such that
  \begin{align*}
    \bar{\eta}_i(x_{1,0}, \dots, x_{i,0}) 
    = \bar{\eta}_i(y_{1,0}, \dots, y_{i,0})
    = \langle d_1, \dots, d_i \rangle.
  \end{align*}
We show 
  $\langle x_{1,0}, \dots, x_{i,0} \rangle \sim_e \langle y_{1,0}, \dots,
    y_{i,0} \rangle$, and then the proposition will follow immediately by transitivity
  of the equivalence relation.
  Let $x_{1,*}, \dots, x_{i,*}$ and $y_{1,*}, \dots, y_{i,*}$ be \begin{align*}
    \bar{\mathcal{S}}_i^*(x_{1,0}, \dots, x_{i,0}) & = 
    \langle x_{1,*}, \dots, x_{i,*} \rangle,
    \\
    \bar{\mathcal{S}}_i^*(y_{1,0}, \dots, y_{i,0}) & = 
    \langle y_{1,*}, \dots, y_{i,*} \rangle.
  \end{align*}
  It suffices to show 
  $\langle x_{1,*}, \dots, x_{i,*} \rangle = \langle y_{1,*}, \dots, y_{i,*}
  \rangle$, which we show next by induction on $i$.

  In the base case $i = 1$.
By Proposition~\ref{prop:convergence-to-fixpoint},
  we have that $x_{1,*}$ and $y_{1,*}$ are fixpoints of the function $h_{1,v}$
  for $v = \beta_1(u_e)$.
  If $w_1 \in [0,1]$, then $h_{1,v}$ has a unique fixpoint by
  Proposition~\ref{prop:shape-of-g-small-weight}, and hence $x_{1,*} = y_{1,*}$
  as required.
  Next we consider the case when $w_1 > 1$.
  We have $\eta_1(x_{1,0}) = \eta_1(y_{1,0})$ and hence, by the definition of
  $\eta_1$, one of the three following conditions holds:
  (i)~$x_{1,*}, y_{1,*} \leq p_-$,
  (ii)~$p_- < x_{1,*}, y_{1,*} < p_+$,
  (iii)~$p_+ \leq x_{1,*}, y_{1,*}$.
  Then, by Proposition~\ref{prop:analysis}, it follows that 
  $x_{1,*} = y_{1,*}$ as required.

  In the inductive case $i \geq 2$, and the inductive hypothesis is 
  $\langle x_{1,*}, \dots, x_{i-1,*} \rangle =
  \langle y_{1,*}, \dots, y_{i-1,*} \rangle$.
By Proposition~\ref{prop:convergence-to-fixpoint},
  we have that $x_{i,*}$ is a fixpoint of the function $h_{i,v}$ for 
  $v_x = \beta_i(u_e, x_{1,*}, \dots, x_{i-1,*})$, and we have that
$y_{i,*}$ is a fixpoint of the function $h_{i,v}$ for 
  $v_y = \beta_i(u_e, y_{1,*}, \dots, y_{i-1,*})$.
By the inductive hypothesis,
  we have $v_x = v_y$, and hence let us rename them $v$.
  Thus, $x_{i,*}$ and $y_{i,*}$ are fixpoints of the same function $h_{i,v}$.
  If $w_i \in [0,1]$, then $h_{i,v}$ has a unique fixpoint by
  Proposition~\ref{prop:shape-of-g-small-weight}, and hence $x_{i,*} = y_{i,*}$
  as required.
  Next we consider the case when $w_i > 1$.
  We have $\eta_i(x_{1,0}, \dots, x_{i,0}) = \eta_i(y_{1,0}, \dots, y_{i,0})$,
  and hence, by the definition of $\eta_i$, we have that one of the three
  following conditions holds:
  (i) $x_{i,*}, y_{i,*} \leq p_-$,
  (ii) $p_- < x_{i,*}, y_{i,*} < p_+$,
  (iii) $p_+ \leq x_{i,*}, y_{i,*}$.
  Since $x_{i,*}$ and $y_{i,*}$ are fixpoints of $h_{i,v}$ as argued above,
  by Proposition~\ref{prop:analysis}, it follows that $x_{i,*} = y_{i,*}$ as
  required.
  This concludes the proof.
\end{proof}

\par\noindent
\textbf{Construction.}
We are now ready to define the cascade $C$.
For $i \in [n]$, the semiautomaton $A_i$ of $C$ is 
$A_i = \langle \Sigma_i, Q_i, \delta_i \rangle$
where the alphabet and states are
\begin{align*}
  & \Sigma_i = \Sigma \times Q_1 \times \dots \times Q_{i-1},
  \qquad
  Q_i = \rho_i\big(X'_{[i]}\big),
\end{align*}
and the transition function is defined as
\begin{align} \label{eq:transition-definition-1}
  & \delta_i(d_i, \langle \sigma, d_1, \dots, d_{i-1} \rangle) 
  = \rho_i\big(\bar{f}_i(\vec{x}, u_\sigma) \big),
\end{align}
for any $u_\sigma$ such that $\lambda(u_\sigma) = \sigma$ and any
$\vec{x} \in X'_{[i]}$ such that 
$\rho(\vec{x}) = \langle d_1, \dots, d_i \rangle$ if there is such an
$\vec{x}$, and otherwise 
\begin{align} \label{eq:transition-definition-2}
  \delta_i(d_i, \langle \sigma, d_1, \dots, d_{i-1} \rangle) = d_i.
\end{align}
The transition function is indeed a function, in light
of the following proposition.

\begin{proposition} \label{prop:transition-function-well-defined}
  For every $u,v \in U$, and every $\vec{x},\vec{y} \in X'_{[i]}$,
  if
  $\lambda(u) = \lambda(v)$ and
  $\bar{\rho}_i(\vec{x}) = \bar{\rho}_i(\vec{y})$
  then
  \begin{align*}
    \rho_i\big(\bar{f}_i(\vec{x}, u) \big)
    =
    \rho_i\big(\bar{f}_i(\vec{y}, v) \big).
  \end{align*}
\end{proposition}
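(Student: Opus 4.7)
The plan is to chain together Proposition~\ref{prop:same-rho-implies-equivalent} and the closure property built into Definition~\ref{def:successor-equivalence}. The statement says the successor under $\bar{f}_i$ of two $\bar{\rho}_i$-equal states, on inputs with the same symbolic image, yields the same $\rho_i$-value. Since $\rho_i$ is just the last coordinate of $\bar{\rho}_i$, it suffices to establish the equality of $\bar{\rho}_i$ on the two successors.

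First I would use the hypothesis $\bar{\rho}_i(\vec{x}) = \bar{\rho}_i(\vec{y})$ and invoke Proposition~\ref{prop:same-rho-implies-equivalent} to obtain $\vec{x} \sim \vec{y}$. Next, since the hypothesis also gives $\lambda_\Sigma(u) = \lambda_\Sigma(v)$, the second closure clause of Definition~\ref{def:successor-equivalence} applies and yields $\bar{f}_i(\vec{x}, u) \sim \bar{f}_i(\vec{y}, v)$. Finally, I would observe that $\bar{\rho}_i$ is constant on $\sim$-equivalence classes by construction: it is defined as $\bar{\rho}_i(\vec{z}) = \min \bar{\eta}_i(\class{\vec{z}})$, and the set $\class{\vec{z}}$ and the lexicographic minimum depend only on the equivalence class, not on the chosen representative. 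Consequently $\bar{\rho}_i(\bar{f}_i(\vec{x}, u)) = \bar{\rho}_i(\bar{f}_i(\vec{y}, v))$, and projecting onto the $i$-th coordinate gives $\rho_i(\bar{f}_i(\vec{x}, u)) = \rho_i(\bar{f}_i(\vec{y}, v))$, as required.

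The proof is essentially bookkeeping: the genuine work has already been done in Proposition~\ref{prop:same-rho-implies-equivalent}, so no new analytic content is needed here. The only mild point to check is that $\bar{f}_i(\vec{x}, u)$ actually lies in $X'_{[i]}$ so that Proposition~\ref{prop:same-rho-implies-equivalent} is applicable to it; this follows because $X'$ is a subdynamics of $D$ (so closed under $f$), and hence $X'_{[i]}$ is closed under $\bar{f}_i$ for every input $u \in U$. That is the only step requiring a moment of care; the rest is a direct application of the two results just cited.
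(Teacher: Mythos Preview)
Your proposal is correct and follows essentially the same route as the paper's proof: apply Proposition~\ref{prop:same-rho-implies-equivalent} to obtain $\vec{x}\sim\vec{y}$, use the closure clause of Definition~\ref{def:successor-equivalence} to propagate the equivalence to the successors, and conclude by the fact that $\bar{\rho}_i$ (hence $\rho_i$) depends only on the $\sim$-class. Your additional remark that $X'_{[i]}$ is closed under $\bar{f}_i$ because $X'$ is closed under $f$ is a welcome sanity check that the paper leaves implicit.
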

\begin{proof}
  By Proposition~\ref{prop:same-rho-implies-equivalent}, we have 
  $\vec{x} \sim \vec{y}$. Then by the definition of $\sim$, we have
  $\bar{f}_i(\vec{x}, u) \sim \bar{f}_i(\vec{y}, v)$.
  Then the proposition follows immediately by the definition of $\rho_i$.
\end{proof}

The states of $A_i$ are the digits returned by $\rho_i$ on the relevant states
of $P_i$.
The transition function is defined by two cases. 
Equation~\eqref{eq:transition-definition-1} defines it in terms of the dynamics
function $\bar{f}_i$ of $P_i$, by applying it to an \rnc{} state and input that
are mapped to the current state and input of $A_i$. 
To have a totally-defined transition function,
Equation~\eqref{eq:transition-definition-2} completes the definition with a dummy
choice of the successor state, which is argued below to be irrelevant.
The specific choice of $u_\sigma$ and $\vec{x}$ in
Equation~\eqref{eq:transition-definition-1} among the possible ones does not
affect the outcome of the transition function, by the invariance property
described in Proposition~\ref{prop:transition-function-well-defined}.

The resulting cascade is $C = \langle \Sigma, Q_C, \delta_C \rangle$ with states
$Q_C = Q_1 \times \dots \times Q_n$ and transition function
\begin{align*}
  & \delta_C(\langle d_1, \dots, d_n \rangle, \sigma) = 
  \langle \delta_1(d_1, \sigma_1), \dots, \delta_n(d_n, \sigma_n) \rangle,
  \\
  & \text{with } \sigma_i = \langle \sigma, d_1, \dots, d_{i-1} \rangle.
\end{align*}

Finally we are ready to show that the constructed cascade $C$ captures the
target semiautomaton $A_T$, so proving the main lemma.

\begin{lemma} \label{lemma:main-aux}
  It holds that $C$ homomorphically represents $A_T$.
\end{lemma}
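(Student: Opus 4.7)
The plan is to exhibit a subdynamics $C'$ of $C$ and a homomorphism $\phi$ from $C'$ to $A_T$ built directly from the \rnc{} homomorphism $\psi$. Specifically, I would take $C'$ to have state set $Q_C' = \bar{\rho}_n(X')$ and define $\phi : Q_C' \to Q_T$ by
\begin{align*}
  \phi\bigl(\bar{\rho}_n(\vec{x})\bigr) = \psi(\vec{x}).
\end{align*}
The intuition is that $\bar{\rho}_n$ packages each relevant \rnc{} state into a class which, by Proposition~\ref{prop:same-rho-implies-equivalent}, is finer than $\sim$, so $\psi$ is constant on such classes by Proposition~\ref{prop:homomorphism-invariance}. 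This makes $\phi$ unambiguous. Surjectivity is inherited from $\psi$: given $q \in Q_T$, pick $\vec{x} \in X'$ with $\psi(\vec{x}) = q$ and observe $\phi(\bar{\rho}_n(\vec{x})) = q$. Continuity is automatic since $Q_C'$ is finite and hence discrete.

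Next I would verify that $Q_C'$ is closed under $\delta_C$, so that $C'$ is indeed a subdynamics, and simultaneously establish the homomorphism equation. For any $\vec{x} \in X'$ and $\sigma \in \Sigma$, choose $u_\sigma \in U$ with $\lambda_\Sigma(u_\sigma) = \sigma$. Unfolding the componentwise cascade transition and applying clause~\eqref{eq:transition-definition-1} at every level gives
\begin{align*}
  \delta_C\bigl(\bar{\rho}_n(\vec{x}), \sigma\bigr) = \bar{\rho}_n\bigl(\bar{f}_n(\vec{x}, u_\sigma)\bigr) = \bar{\rho}_n\bigl(f(\vec{x}, u_\sigma)\bigr).
\end{align*}
Since $D'$ is a subdynamics of $D$, the state $f(\vec{x}, u_\sigma)$ lies in $X'$, so the right-hand side lies in $Q_C'$, proving closure. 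Applying $\phi$ and then invoking that $\psi$ is a homomorphism from $D'$ to $A_\Sigma$, together with $\delta_\Sigma(q,u) = \delta_T(q, \lambda_\Sigma(u))$, yields
\begin{align*}
  \phi\bigl(\delta_C(\bar{\rho}_n(\vec{x}), \sigma)\bigr) = \psi\bigl(f(\vec{x}, u_\sigma)\bigr) = \delta_T\bigl(\psi(\vec{x}), \sigma\bigr) = \delta_T\bigl(\phi(\bar{\rho}_n(\vec{x})), \sigma\bigr),
\end{align*}
which is the required homomorphism condition.

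The main obstacle is the identification $\delta_C(\bar{\rho}_n(\vec{x}), \sigma) = \bar{\rho}_n(f(\vec{x}, u_\sigma))$: it requires that the first clause in the definition of each $\delta_i$ genuinely applies and that the value produced does not depend on the specific choice of representative $\vec{x}$ or of preimage $u_\sigma$ of $\sigma$. Both points are already secured in the construction by Proposition~\ref{prop:transition-function-well-defined}, so the dummy clause~\eqref{eq:transition-definition-2} never needs to be invoked for states in $Q_C'$. Once this is established, the remaining steps are routine bookkeeping and the lemma follows.
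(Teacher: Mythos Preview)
Your proposal is correct and follows essentially the same approach as the paper: both take $Q_C' = \bar{\rho}_n(X')$ as the relevant substate set, define the candidate homomorphism by $\vec{d} \mapsto \psi(\vec{x})$ for any $\vec{x}$ with $\bar{\rho}_n(\vec{x}) = \vec{d}$, check well-definedness via Propositions~\ref{prop:same-rho-implies-equivalent} and~\ref{prop:homomorphism-invariance}, inherit surjectivity from $\psi$, and verify the homomorphism identity through the chain $\psi(f(\vec{x},u_\sigma)) = \delta_\Sigma(\psi(\vec{x}),u_\sigma) = \delta_T(\psi(\vec{x}),\sigma)$. Your justification of continuity via finiteness of $Q_C'$ is a minor variation from the paper's appeal to continuity of $\psi$, and your explicit flagging of the identity $\delta_C(\bar{\rho}_n(\vec{x}),\sigma) = \bar{\rho}_n(f(\vec{x},u_\sigma))$ as the crux is a point the paper handles more tersely, but the substance is the same.
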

\begin{proof}
Let $Q_C' = \bar{\rho}_n(X')$.
We have that $\delta_C(Q_C',\Sigma) \subseteq Q_C'$, since 
$f(X',U) \subseteq X'$ because $X'$ are states of the subdynamics $D'$.
Thus $C' = \langle \Sigma, Q_C', \delta_C \rangle$ is a subsemiautomaton of $C$.
Note that, on states $Q_C'$, the transition function $\delta_C$ is defined by
Eq.~\eqref{eq:transition-definition-1}.

It suffices to show a homomorphism $\psi'$ from $C'$ to $A_T$.
We define $\psi'(\vec{d}) = \psi(\vec{x})$
for any choice of $\vec{x} \in X'$ such that $\bar{\rho}_n(\vec{x}) = \vec{d}$.
Note that $\psi'$ is indeed a function, since,
by Proposition~\ref{prop:same-rho-implies-equivalent},
$\psi$ is invariant under the equivalence $\sim$, and every 
$\vec{x} \in X'$ satisfying $\bar{\rho}_n(\vec{x}) = \vec{d}$ is from the same
equivalence class $\class{\vec{x}}$.
We show that $\psi' : Q_C' \to Q_T$ is a homomorphism from $C'$ to $A_T$.
First, $\psi'$ is continuous as required, since $\psi$ is continuous.
Second, we argue that $\psi'$ is surjective as required.
Let $q \in Q_T$.
It suffices to show some $\vec{d} \in Q_C'$ such that $\psi'(\vec{d}) = q$.
We have that $\psi$ is surjective, and hence there exists $\vec{x} \in X'$ such
that $\psi(\vec{x}) = q$.
We have $\bar{\rho}_n(\vec{x}) = \vec{d} \in Q_C'$, and hence $\psi'(\vec{d}) =
\psi(\vec{x}) = q$ by the definition of $\psi'$.
Third, we argue that $\psi'$ satisfies the homomorphism condition.
Let $\vec{d}\in Q_C'$, let
$\vec{x}$ be the such that $\rho(\vec{x}) = \vec{d}$,
let $\sigma \in \Sigma$, and let
$u_\sigma$ be such that $\lambda_\Sigma(u_\sigma) = \sigma$.
Then,
\begin{align*}
  \psi'(\delta_C'(\vec{d}, \sigma)) 
  & = \psi'(\bar{\rho}_n(f(\vec{x}, u_\sigma))) 
  \\
  & = \psi(f(\vec{x}, u_\sigma)) 
  \\
  & = \delta_\Sigma(\psi(\vec{x}), u_\sigma) 
  \\
  & = \delta_T(\psi(\vec{x}), \sigma) 
  \\
  & = \delta_T(\psi'(\vec{d}), \sigma).
\end{align*}
Therefore $C$ homomorphically represents $A_T$.
\end{proof}

\section{Related Work}

The ability of non-differentiable RNNs to capture formal languages is discussed
in \cite{kleene1956representation,nerode1957,minsky1967computation}.
These are networks such as the ones from \cite{mcculloch1943logical}, and
their expressivity coincides with the regular
languages.
In this paper and the rest of this section we focus on differentiable neural
networks.

The Turing-completeness capabilities of RNNs as an
\emph{offline model of computation} are studied in
\cite{siegelmann1995turing,siegelmann1996dynamic,hobbs2015implementation,chung2021turing}.
In this setting, an RNN is allowed to first read the entire input sequence, 
and then return the output after an arbitrary number of iterations, triggered by
blank inputs.
This differs from our setting, which focuses on the capabilities of RNNs as an
\emph{online model of computation}, where the input sequence is processed one
element at a time, outputting a value at every step. 
This is the way they are used in many practical applications such as
Reinforcement Learning, cf.\ 
\cite{bakker2001rlandlstm,stone2015rnn,ha2018worldmodels,kapturowski2019rnnreplay}.

A form of asymptotic expressivity for RNNs is studied in 
\cite{merrill2020formal}. They consider the expressivity of RNNs when their
weights tend to infinity, which effectively makes them finite-state for
squashing activation functions such as tanh, yielding an expressivity within the
regular languages.
In our work we consider the expressivity of networks with their actual finite
weights.

Transformers are another class of neural networks for sequential data
\cite{vaswani2017attention}.
They are a \emph{non-uniform model of computation}, in the sense that inputs
of different lengths are processed by different networks. This differs from
RNNs which are a \emph{uniform model of computation}. 
The expressivity of transformers has been studied by relating them to families
of Boolean circuits and logics on sequences 
\cite{hahn2020theoretical,hao2022formal,merrill2022saturated,liu2023transformers,chiang2023tighter,merrill2023logic}.

\section{Conclusions and Future Work}

We have extended the understanding of the expressivity landscape for RNC.
Specifically, we have shown that the class of formal languages 
with an identity element that can be recognised by \rnc{} is the
star-free regular languages. 
This reinforces the fact that \rnc{} is a strong candidate for learning temporal
patterns captured by many well-known temporal formalisms.

There are several interesting directions for future work. The main open question
regards the expressivity of \rnc{} beyond regular and beyond identity, the white
area in Figure~\ref{fig:languages}. No results are known for these languages.
A second open question regards the expressivity of RNC when recurrent weights
can be negative. According to existing results, negative weights extend the
expressivity of RNCs beyond star-free.
However, no precise characterisation is known. 
Third, it is interesting future work to study the expressivity of RNC with
other activation functions such as logistic curve, ReLU, and GeLU.

\section*{Acknowledgments}
Alessandro Ronca is supported by the European Research Council (ERC) under the
European Union's Horizon 2020 research and innovation programme (Grant
agreement No.\ 852769, ARiAT).
\ifcameraready
For the purpose of Open Access, the author has applied a CC BY public copyright
licence to any Author Accepted Manuscript (AAM) version arising from this
submission.
\fi

\bibliographystyle{kr}
\bibliography{bibliography}

\ifcameraready
\else
  \onecolumn
  \appendix
  
\section*{{\huge Appendix of}\\[5pt]
  {\LARGE `On the Expressivity of Recurrent Neural Cascades
with Identity'}}
\bigskip

\paragraph{Summary.}
The appendix consists of the following parts.
\begin{description}
  \item
    \emph{Part~\ref{sec:appendix-examples}.} 
    We provide details of the examples.
  \item
    \emph{Part~\ref{sec:appendix-canonical-systems}.} We introduce key
    results regarding canonical systems, which we use in the proofs. 
\item
    \emph{Part~\ref{sec:appendix-proofs}.}
    We provide all proofs omitted in the main body.
\end{description}

\section{Details of the Examples}
\label{sec:appendix-examples}

\subsection{Example~\ref{ex:rl}}

The language of Example~\ref{ex:rl} is recognised by the automaton 
$A = \langle P, Q, \delta, q^\mathrm{init}, \Gamma, \theta \rangle$
described next.
The alphabet is 
$P = 
\{ \mathit{stayed}, \mathit{left}, \mathit{right}, \mathit{up}, \mathit{down}
\}$.
The initial state is $q^\mathrm{init} = \langle x_0, y_0 \rangle$.
The output alphabet is $\Gamma = \{ 0,1 \}$.
The output function is $\theta(\langle x,y \rangle) = 1$ iff 
$\langle x,y \rangle = \langle x_\mathrm{g},y_\mathrm{g} \rangle$.
For the other components, we consider two separate cases.

In the first case, the grid is unbounded.
The states are $Q = \mathbb{Z} \times \mathbb{Z}$, and
the transition function is
\begin{align*}
  & \delta(\langle x,y \rangle, \mathit{stayed})  = \langle x,y \rangle,
  \\
  & \delta(\langle x,y \rangle, \mathit{left})  = \langle x-1, y
  \rangle,
  \\
  & \delta(\langle x,y \rangle, \mathit{right})  = \langle x+1, y
  \rangle,
  \\
  & \delta(\langle x,y \rangle, \mathit{up})  = \langle x, y+1 \rangle,
  \\
  & \delta(\langle x,y \rangle, \mathit{down})  = \langle x, y-1 \rangle.
\end{align*}

In the second case, the grid has finite size $n \times m$.
The states are $Q = [n] \times [m]$, and
the transition function is
\begin{align*}
  & \delta(\langle x,y \rangle, \mathit{stayed})  = \langle x,y \rangle,
  \\
  & \delta(\langle x,y \rangle, \mathit{left})  = \langle \max(1,x-1), y
  \rangle,
  \\
  & \delta(\langle x,y \rangle, \mathit{right})  = \langle \min(m,x+1), y
  \rangle,
  \\
  & \delta(\langle x,y \rangle, \mathit{up})  = \langle x, \min(n,y+1) \rangle,
  \\
  & \delta(\langle x,y \rangle, \mathit{down})  = \langle x, \max(0,y-1)
  \rangle.
\end{align*}

With regard to
Proposition~\ref{prop:identity-element-implies-identity-transformation}, observe
that in both cases the transformation $\delta(\cdot, \mathit{stayed})$ is an
identity transformation.

\section{Preliminaries on Canonical Systems}
\label{sec:appendix-canonical-systems}

We introduce some key definitions and propositions regarding 
canonical systems. We will use them later in our proofs. 
They relate to the Myhill-Nerode theorem, cf.\ \cite{nerode1957}, and
they are in line with what can be found in monographs such as
\cite{arbib1969theories}.
They are part of automata theory, but here we phrase them in terms of systems.

\begin{definition}
  \label{def:definition_for_prop_two_one}
  Consider a function $F: U^* \to Y$.
  Two strings $s,s' \in U^*$ are in relation $s \sim_F s'$ iff the equality 
  $F(sz) = F(s'z)$ holds for every string $z \in U^*$.
\end{definition}

\begin{definition}
  \label{def:definition_for_prop_two_two}
  Consider a system $S$.
  Two input strings $s,s'$ are in relation $s \sim_S s'$ iff
  the state of $S$ upon reading $s$ and $s'$ is the same.
\end{definition}

\begin{proposition} 
  \label{prop:statesim-implies-funcsim}
  For every system $S$ that implements a function $F$,
  it holds that $s \sim_S s'$ implies $s \sim_F s'$.
\end{proposition}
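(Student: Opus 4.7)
The plan is to unpack both definitions and then exploit the determinism of the dynamics function to propagate state-equality from the prefix $s$ (resp.\ $s'$) to the extended input $sz$ (resp.\ $s'z$). Concretely, I would fix arbitrary strings $s, s'$ with $s \sim_S s'$, and aim to show $F(sz) = F(s'z)$ for an arbitrary $z \in U^*$.

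First I would invoke Definition~\ref{def:definition_for_prop_two_two}: the hypothesis $s \sim_S s'$ says that $S$ reaches the same state $x$ after reading $s$ and after reading $s'$. Next I would argue by induction on the length of $z$ that, starting from a common state $x$ and feeding the same suffix $z = u_1 \dots u_k$, the system traverses the same sequence of states in both runs. The base case $k=0$ is trivial, and the inductive step applies the dynamics function $f$ to the common state $x_{k-1}$ and the common input $u_k$, yielding the same $x_k$ in both runs. In particular, the state reached after $sz$ equals the state reached after $s'z$, and hence so does the output, giving $S(sz) = S(s'z)$.

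Finally, since $S$ implements $F$, we have $S(sz) = F(sz)$ and $S(s'z) = F(s'z)$ by definition of implementation. Combining this with $S(sz) = S(s'z)$ yields $F(sz) = F(s'z)$. Since $z$ was arbitrary, Definition~\ref{def:definition_for_prop_two_one} gives $s \sim_F s'$ as required.

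There is really no main obstacle here: the statement is a routine consequence of the determinism of the dynamics function together with the fact that the output function depends only on the current state. The only mild subtlety is being careful to formulate the inductive step at the level of the whole system state (so that the argument works for systems with factored or infinite state sets, including cascades), rather than at the level of outputs, since two distinct states could in principle share an output value.
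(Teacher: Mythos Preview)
Your proposal is correct and follows essentially the same approach as the paper: both arguments start from the common state $x$ reached by $s$ and $s'$, observe that the continuation on any suffix $z$ is determined by $x$ alone, and then use that $S$ implements $F$ to conclude $F(sz)=F(s'z)$. The only cosmetic difference is that the paper packages your induction into the notation $S^x(z)$, writing directly that the output of $S$ on both $sz$ and $s'z$ equals $S^x(z)$.
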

\begin{proof}
  Assume $s \sim_S s'$, i.e., the two strings lead to the same state $x$.
  For every string $z$, the output of $S$ on both $sz$ and $s'z$ is 
  $S^x(z)$. Since $S$ implements $F$, it follows that $F(sz) = S^x(z)$ and
  $F(s'z) = S^x(z)$. Thus $F(sz) = F(s'z)$, and hence $s \sim_F s'$.
\end{proof}

\begin{proposition} 
  \label{prop:equivalence-classes-for-canonical-system}
  Consider a canonical system $S$ on input domain $U$ that implements a function
  $F$. 
  The states of $S$ are in a one-to-one correspondence with the equivalence
  classes $U^*/{\sim_F}$. 
  State $x$ corresponds to the equivalence class $\class{s}_F$ for $s$ any
  string that leads to $x$.
\end{proposition}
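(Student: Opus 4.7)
The plan is to exhibit an explicit bijection $\mu$ between the states of $S$ and the equivalence classes $U^*/{\sim_F}$, using the two defining properties of canonicity: connectedness supplies surjectivity, and reduced form supplies injectivity. Define $\mu(x) = \class{s}_F$ for any string $s$ leading from the initial state of $S$ to $x$. Since $S$ is connected, at least one such $s$ exists for every state $x$, so the definition has a chance of being valid.

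First I would check that $\mu$ is well-defined. If two strings $s, s'$ both lead to $x$, then $s \sim_S s'$ by Definition~\ref{def:definition_for_prop_two_two}, and Proposition~\ref{prop:statesim-implies-funcsim} upgrades this to $s \sim_F s'$, so $\class{s}_F = \class{s'}_F$. Surjectivity of $\mu$ is then immediate: given any class $\class{s}_F$, the string $s$ leads from the initial state to some state $x$, and by construction $\mu(x) = \class{s}_F$.

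The only genuinely nontrivial step is injectivity, and this is where the reduced-form part of canonicity enters. Suppose $\mu(x) = \mu(x')$, witnessed by strings $s$ leading to $x$ and $s'$ leading to $x'$ with $s \sim_F s'$. Then $F(sz) = F(s'z)$ for every $z \in U^*$. Because $S$ implements $F$, reading $sz$ from the initial state of $S$ produces the same output as reading $z$ from state $x$, so $F(sz) = S^x(z)$; symmetrically $F(s'z) = S^{x'}(z)$. Hence $S^x(z) = S^{x'}(z)$ for every $z$, which by definition means that $x$ and $x'$ are equivalent states of $S$. Since $S$ is in reduced form, equivalent states must be equal, so $x = x'$.

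I expect the main (though still mild) obstacle to be correctly unpacking what ``$S$ implements $F$'' gives us when we switch the initial state to $x$ or $x'$: the identity $F(sz) = S^x(z)$ is the key link between the function-level relation $\sim_F$ and the state-level behaviour of $S$, and the proof's structure amounts to little more than chaining this identity with the two canonicity conditions. No additional machinery beyond Proposition~\ref{prop:statesim-implies-funcsim} and the definitions of connectedness and reduced form should be needed.
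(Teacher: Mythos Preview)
Your proposal is correct and essentially identical to the paper's proof: both define the map $x \mapsto \class{s}_F$ for any $s$ reaching $x$, invoke connectedness for existence and surjectivity, Proposition~\ref{prop:statesim-implies-funcsim} for well-definedness, and the reduced-form property together with the identity $F(sz) = S^x(z)$ for injectivity. If anything, you are slightly more explicit than the paper in spelling out why $F(sz) = S^x(z)$ holds.
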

\begin{proof}
  Let $\psi$ be the mentioned correspondence.
First, $\psi$ maps every state to some equivalence class, since every state is
  reachable, because $S$ is canonical.
Second, $\psi$ maps every state to at most one equivalence class, by
  Proposition~\ref{prop:statesim-implies-funcsim}.
Third, $\psi$ is surjective since every equivalence class $\class{s}_F$ is
  assigned by $\psi$ to the the state that is reached by $s$.
Fourth, $\psi$ is injective since there are no distinct states
  $x,x'$ such that such that $\class{s}_F = \class{s'}_F$ for $s$ leading to $x$
  and $s'$ leading to $x'$.
  In fact, the equality $\class{s}_F = \class{s'}_F$ holds only if $s \sim_F
  s'$, which holds only if  $F(sz) = F(s'z)$ for every $z$, which holds only if
  $S^x = S^{x'}$ since $S$ implements $F$. Since $S$ is canonical, and hence in
  reduced form, we have that $S^x = S^{x'}$ does not hold, and hence 
  $\psi$ is injective as required.
Therefore $\psi$ is a one-to-one correspondence as required.
\end{proof}

\begin{proposition} 
  \label{prop:equivalence-classes-for-connected-system}
  Consider a connected system $S$ on input domain $U$.
  The states of $S$ are in a one-to-one correspondence with the equivalence
  classes $U^*/{\sim_S}$.
  State $x$ corresponds to the equivalence class $\class{s}_S$ for $s$ any
  string that leads to $x$.
\end{proposition}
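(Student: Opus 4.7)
The plan is to mimic the proof of Proposition~\ref{prop:equivalence-classes-for-canonical-system} while exploiting the fact that the equivalence relation is now $\sim_S$ rather than $\sim_F$, which dispenses with the need for the reduced-form hypothesis. I would define the candidate correspondence $\psi$ by sending each state $x$ to the class $\class{s}_S$ for some string $s$ that leads to $x$, and then verify in sequence that $\psi$ is well-defined, total, surjective, and injective.

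First, totality follows immediately from the connectedness of $S$: every state $x$ is reachable, so there exists at least one string $s$ leading to $x$, and the class $\class{s}_S$ is available to serve as $\psi(x)$. Second, well-definedness is essentially tautological. If both $s$ and $s'$ lead to the state $x$, then by the definition of $\sim_S$ (Definition~\ref{def:definition_for_prop_two_two}) we have $s \sim_S s'$, hence $\class{s}_S = \class{s'}_S$, so the choice of representative string does not affect the image.

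Third, surjectivity is obtained by running the system: given any class $\class{s}_S \in U^*/{\sim_S}$, pick its representative $s$, let $x$ be the state reached by $s$, and observe that $\psi(x) = \class{s}_S$ by construction. Fourth, injectivity is the converse of well-definedness. If $\psi(x) = \psi(x')$, then there exist strings $s$ leading to $x$ and $s'$ leading to $x'$ with $\class{s}_S = \class{s'}_S$, so $s \sim_S s'$, which by Definition~\ref{def:definition_for_prop_two_two} means that $s$ and $s'$ lead to the same state, forcing $x = x'$.

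I do not foresee any real obstacle: the argument is a bookkeeping exercise around the two definitions, and unlike the canonical-system variant it does not need to invoke reducedness, because $\sim_S$ already identifies strings exactly when they coincide on states rather than merely on outputs. The only place where one must be slightly careful is to note that the conclusion is stated for \emph{any} string $s$ that leads to $x$, which is precisely the content of the well-definedness step.
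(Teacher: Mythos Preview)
Your proposal is correct and follows essentially the same approach as the paper: define $\psi$ sending a state to the $\sim_S$-class of any string reaching it, then verify totality (via connectedness), well-definedness, surjectivity, and injectivity, each by direct appeal to the definition of $\sim_S$. Your write-up is in fact a bit more explicit than the paper's on the well-definedness step, but the structure and reasoning are the same.
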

\begin{proof}
  Let $\psi$ be the mentioned correspondence.
First, $\psi$ maps every state to some equivalence class, since every state is
  reachable, because $S$ is connected.
Second, $\psi$ maps every state to at most one equivalence class, by
  the definition of the equivalence classes $X/{\sim_S}$ for $X$ the set of
  states of $S$.
Third, $\psi$ is surjective since every equivalence class $\class{s}_S$ is
  assigned by $\psi$ to the state that is reached by $s$.
Fourth, $\psi$ is injective since there are no distinct states $x,x'$ such
  that $\class{s}_S = \class{s'}_S$ for $s$ leading to $x$ and $s'$ leading to
  $x'$.  In fact, the equality $\class{s}_S = \class{s'}_S$ holds only if 
  $s \sim_S s'$, which holds only if $s$ and $s'$ lead to the same state.
Therefore $\psi$ is a one-to-one correspondence as required.
\end{proof}

\begin{theorem}
  For every function $F: U^* \to Y$, there exists a canonical system $S$ 
  that implements $F$.
\end{theorem}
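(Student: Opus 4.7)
The plan is the classical Myhill–Nerode construction, phrased in the system language of the preliminaries. I would take the equivalence relation $\sim_F$ from Definition~\ref{def:definition_for_prop_two_one} and define a system $S = \langle U, X, f, x^\mathrm{init}, Y, h \rangle$ with state set $X = U^*/{\sim_F}$, initial state $x^\mathrm{init} = \class{\varepsilon}_F$, dynamics function $f(\class{s}_F, u) = \class{su}_F$, and output function $h(\class{s}_F) = F(s)$. Each of $U$, $X$, and $Y$ is equipped with the discrete metric so that the metric-space requirement of the preliminaries is trivially met.

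The first step is to verify that $f$ and $h$ are well-defined on equivalence classes. For $h$, if $s \sim_F s'$ then taking $z = \varepsilon$ in Definition~\ref{def:definition_for_prop_two_one} gives $F(s) = F(s')$. For $f$, I would show that $s \sim_F s'$ implies $su \sim_F s'u$ by noting that for every $z \in U^*$, $F(suz) = F(s'uz)$ is just an instance of the defining condition with $uz$ playing the role of the test string. The second step is to prove by a straightforward induction on input length that after reading $u_1 \dots u_\ell$ the system is in state $\class{u_1 \dots u_\ell}_F$, whence $S(u_1 \dots u_\ell) = F(u_1 \dots u_\ell)$, i.e.\ $S$ implements $F$.

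The third step is canonicity. Connectedness is immediate: the class $\class{s}_F$ is reached by feeding the string $s$ from $\class{\varepsilon}_F$. For reducedness, suppose $\class{s}_F$ and $\class{s'}_F$ are equivalent as states of $S$, meaning $S^{\class{s}_F}$ and $S^{\class{s'}_F}$ implement the same function. By the previous step these functions are $z \mapsto F(sz)$ and $z \mapsto F(s'z)$ respectively, so $F(sz) = F(s'z)$ for all $z \in U^*$, which is exactly $s \sim_F s'$, giving $\class{s}_F = \class{s'}_F$. Hence $S$ has no two distinct equivalent states.

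There is essentially no hard step here; the only subtlety is bookkeeping around well-definedness of $f$ and the choice of metric on $X$, both of which are cosmetic given the discrete metric. The construction also has the pleasant byproduct, via Proposition~\ref{prop:equivalence-classes-for-canonical-system}, that up to the natural bijection its states are the only possible states of any canonical implementation of $F$, so this $S$ is in fact the canonical system for $F$.
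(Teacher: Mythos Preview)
Your proposal is correct and matches the paper's own proof essentially step for step: the same Myhill--Nerode quotient construction on $U^*/{\sim_F}$, the same well-definedness checks for $f$ and $h$, the same induction for correctness, and the same arguments for connectedness and reducedness. The only cosmetic difference is that the paper equips just $X$ with the discrete metric (leaving $U$ and $Y$ implicit), whereas you do all three; neither choice affects anything since continuity of the system is not required here.
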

\begin{proof}
  We construct the system
  \begin{align*}
    S = \langle U, X, f, x^\mathrm{init}, Y, h \rangle,
  \end{align*}
  where the set of states $X$ is the set of equivalence classes $U^*/{\sim_F}$,
  the transition function is defined as $f(\class{w}_F,u) = \class{wu}_F$,
  the initial state is $x^\mathrm{init} = \class{\varepsilon}_F$ for 
  $\varepsilon$ the empty string, and 
  the output function is defined as $h(\class{w}_F) = F(w)$.
  To satisfy the requirement that set of states $X$ must have a metric, we
  equip $X$ with the discrete metric.

  \smallskip\par\noindent
  \textit{Well-formedness.}
  We argue that $f$ and $h$ are indeed functions.
  We first consider the dynamics function $f$.
  Let $w, w' \in U^*$ and let $u \in U$.
  Assuming $w \sim_F w'$, it suffices to show $wu \sim_F w'u$. 
  Since $w \sim_F w'$, we have $F(wus) = F(w'us)$ for every $s \in U^*$.
  Thus $wu \sim_F w'u$ as required.
  We next consider the output function $h$.
  Let $w, w' \in U^*$.
  Assuming $w \sim_F w'$, it suffices to show $F(w) = F(w')$. 
  Since $w \sim_F w'$, we have $F(ws) = F(w's)$ for every $s \in U^*$, and in
  particular $F(w) = F(w')$, as required.

  \smallskip\par\noindent
  \textit{Correctness.}
  We argue that $S$ implements $F$.
  Let us consider an input $w = u_1 \dots u_t \in U^*$.
  It suffices to show that $S(u_1 \dots u_t) = F(u_1 \dots u_t)$.
  We have that $S(u_1 \dots u_t)$ is the output at time $t$ of system $S$ on
  input $u_1 \dots u_t$. It is $h(x_t)$ for $x_t$ the state of $S$ at time $t$.

  As an auxiliary claim, we show by induction on $t$ that 
  $x_t = \class{u_1 \dots u_t}_F$. Then the claim
  In the base case $t=0$, the input is the empty string $\varepsilon$, and the
  system is in state $x_0 = x^\mathrm{init} = \class{\varepsilon}_F$ as
  required.
  In the inductive case $t \geq 1$, and we assume 
  $x_{t-1} = \class{u_1 \dots u_{t-1}}_F$.
  By the definition of $f$, we have $x_t = f(\class{u_1 \dots u_{t-1}}_F, u_t) =
  \class{u_1 \dots u_{t-1} u_t}_F$ as required.
  This conclude the proof of the auxiliary claim.

  Since $x_t = \class{u_1 \dots u_t}_F$ by the auxiliary claim, we have that
  the output at time $t$ is $h(\class{u_1 \dots u_t}_F)$, which equals 
  $F(u_1 \dots u_t)$ as required, by the definition of $h$.

  \smallskip\par\noindent
  \textit{Canonicity.}
  We argue that $S$ is canonical.
  It suffices to show that $S$ is connected and in reduced form.

  We show that $S$ is connected.
  Let us consider a state $x$ of $S$. 
  We have that $x = \class{u_1 \dots u_t}_F$ for some string 
  $u_1 \dots u_t \in U^*$.
  By the auxiliary claim above, we have that $x$ is the state at time $t$ of
  system $S$ on input $u_1 \dots u_t$, and hence it is reachable as required.

  We show that $S$ is in reduced form.
  Let us consider two states $x,x'$ of $S$, and let us assume
  that the system $S^x$ is equivalent to the system $S^{x'}$.
  It suffices to show that $x = x'$.
  We have that $x = \class{w}_F$ for some $w \in U^*$ and
  that $x' = \class{w'}_F$ for some $w' \in U^*$.
  For every string $s \in U^*$, we have
  $S(ws) = S^x(s)$, and we also have $S(ws) = F(ws)$ since $S$ implements $F$;
  hence $S^x(s) = F(ws)$.
  Similarly,
  for every string $s \in U^*$, we have
  $S(w's) = S^{x'}(s)$, and
  we also have $S(w's) = F(w's)$ since $S$ implements $F$;
  hence $S^{x'}(s) = F(w's)$.
  Since $S^x$ and $S^{x'}$ are equivalent, we have 
  $S^x(s) = S^{x'}(s)$ for every $s \in U^*$, and hence 
  $F(ws) = F(w's)$ for every $s \in U^*$.
  Thus $w \sim_F w'$, and hence $x = \class{w}_F = \class{w'}_F = x'$ as
  required.
\end{proof}

\section{Proofs}
\label{sec:appendix-proofs}

\subsection{Proposition~\ref{prop:homomorphism}}
\label{sec:appendix_proof_prop_1}

We prove Proposition~\ref{prop:homomorphism}. The proposition has orginally been
given as Proposition~1 in \cite{knorozova2024expressivity}. There, the setting
is slightly different, and hence we provide an adapted full proof for
completeness. The differences in the setting are:
(i) the function $S(u_1, \dots, u_\ell)$ of a system $S$ now returns only the
last output element, 
(ii) the output function of a dynamical system is now Moore-style instead of
Mealy-style, 
(iii) the premise of the proposition does not assume continuity, and at the same
time its conclusion does not state guarantee continuity.

\prophomomorphism*
\begin{proof}
  Let us consider dynamics $D_1$, also dynamics $D_2$ of a system $S_2$.
  Assume that $D_1$ homomorphically represent $D_2$.
  There exist subdynamics $D_1'$ of $D_1$ such that there is a homomorphism
  $\psi$ from $D_1'$ to $D_2$. 
  \begin{align*}
    D_1' & = \langle U, X, f_1 \rangle
    \\
    D_2 & = \langle U, Z, f_2 \rangle
  \end{align*}
  Let system $S_2$ be of the following form.
  \begin{align*}
    S_2 = \langle U, Z, f_2, z^\mathrm{init}, Y, h_2 \rangle
  \end{align*}
  We construct the system 
  \begin{align*}
    S_1 = \langle U, X, f_1, x^\mathrm{init}, Y, h_1 \rangle,
  \end{align*}
  where $h_1(x) = h_2(\psi(x))$, and
  $x^\mathrm{init} \in X$ is such that $\psi(x^\mathrm{init}) =
  z^\mathrm{init}$---it exists because $\psi$ is surjective.

  We show that $S_1$ is equivalent to $S_2$ as required.
  Let $u_1, \dots, u_n$ be an input sequence,
  and let $x_0, \dots, x_n$ and and $z_0, \dots, z_n$ be the corresponding
  sequence of states for $S_1$ and $S_2$, respectively.
  Namely, $x_0 = x^\mathrm{init}$ and 
  $x_i = f_1(x_{i-1}, u_i)$ for $1 \leq i \leq n$.
  Similarly, $z_0 = z^\mathrm{init}$ and 
  $z_i = f_2(z_{i-1}, u_i)$ for $1 \leq i \leq n$.

  As an auxiliary result, we show that
  $z_i = \psi(x_i)$ for every $0 \leq i \leq n$.
  We show it by induction on $i$ from $0$ to $n$.

  In the base case $i=0$, and $z_0 = \psi(x_0)$  amounts to
  $z^\mathrm{init} = \psi(x^\mathrm{init})$, which holds by construction.

  In the inductive case $i > 0$ and we assume that 
  $z_{i-1} = \psi(x_{i-1})$.
  We have to show $z_i = \psi(x_i)$. By the definition of $x_i$ and $z_i$ above,
  it can be rewritten as 
  \begin{align*}
    f_2(z_{i-1},u_i) = \psi(f_1(x_{i-1},u_i))
    \\
    \psi(f_1(x_{i-1},u_i)) = f_2(z_{i-1},u_i)
  \end{align*}
  Then, by the inductive hypothesis
  $z_{i-1} = \psi(x_{i-1})$, we have
  \begin{align*}
    \psi(f_1(x_{i-1},u_i)) = f_2(\psi(x_{i-1}),u_i) 
  \end{align*}
  which holds since
  $\psi$ is a homomorphism from $D_1$ to $D_2$,
  This proves the auxiliary claim.

  Now, to show that $S_1$ and $S_2$ are equivalent, it suffices to show 
  $S_1(u_1, \dots, u_\ell) = S_2(u_1, \dots, u_\ell)$.
  By definition, we have that 
  $S_1(u_1, \dots, u_\ell) = y_\ell$ with $y_\ell = h_1(x_\ell)$, and similarly
  $S_2(u_1, \dots, u_\ell) = w_\ell$ with $w_\ell = h_2(z_\ell)$.
  Then,
  $h_2(z_\ell) = h_2(\psi(x_\ell)) = h_1(x_\ell)$ by the auxiliary result above,
  and hence
  $S_1(u_1, \dots, u_\ell) = S_2(u_1, \dots, u_\ell)$ as required.
This proves the proposition.
\end{proof}

\subsection{Proposition~\ref{prop:homomorphism_converse}}

We prove Proposition~\ref{prop:homomorphism_converse}. The proposition has
orginally been given as Proposition~2 in \cite{knorozova2024expressivity}.
There, the setting is slightly different, and hence we provide an adapted full
proof for completeness. 
The differences in the setting are:
(i) the function $S(u_1, \dots, u_\ell)$ of a system $S$ now returns only the
last output element, 
(ii) the output function of a dynamical system is now Moore-style instead of
Mealy-style, 
(iii) our current terminology points out continuity more explicitly.

\prophomomorphismconverse*
\begin{proof}
  Consider a continuous system $S_1$ and a canonical system $S_2$.
  \begin{align*}
    S_1 & = \langle U, X_1, f_1, x_1^\mathrm{init}, Y, h_1 \rangle
    \\
    S_2 & = \langle U, X_2, f_2, x_2^\mathrm{init}, Y, h_2 \rangle
  \end{align*}
  Assume that the two systems are equivalent, i.e., they implement the same
  function $F$. 
  By Proposition~\ref{prop:equivalence-classes-for-canonical-system},
  every state of $S_2$ can be seen as an equivalence class $\class{w}_F$.
  Let $S_1'$ be the reachable subsystem of $S_1$, and let $D_1'$ be its
  dynamics.
  \begin{align*}
    D_1' & = \langle U, X_1', f_1 \rangle
  \end{align*}
  By Proposition~\ref{prop:equivalence-classes-for-connected-system}, every
  state of $S_1'$ can be seen as an equivalance class $\class{w}_{S_1}$.
  We define a function $\psi$ and show that it is a homomorphism
  from $D_1'$ to $D_2$.
  Let us define the function $\psi$ that maps $\class{w}_{S_1}$ to $\class{w}_F$.
  We have that $\psi$ is indeed a function, i.e., it does not assign
  multiple values to the same input, by
  Proposition~\ref{prop:statesim-implies-funcsim} since $S_1$ implements $F$.

  We argue that $\psi$ is a continuous function as required by the definition of
  homomorphism.
  Assume by contradiction that $\psi$ is not continuous.
  Then, there exist $x = \class{w}_{S_1} \in X_1$ and $\epsilon > 0$ such that,
  for every $\delta > 0$, there exists $x' = \class{w'}_{S_1} \in X_2$ such
  that
  \begin{align*}
    d_{X_1}(x,x') < \delta
    \quad
    \text{and}
    \quad
    d_{X_2}(\psi(x),\psi(x')) \geq \epsilon.
  \end{align*}
  In particular, $d_{X_2}(\psi(x),\psi(x')) \geq \epsilon > 0$ implies 
  $\psi(x) \neq \psi(x')$.
  Let $z = \psi(x)$ and $z' = \psi(x')$.
  Since $S_2$ is canonical, there exists a string $u$ such that 
  $S_2^z(u) \neq S_2^{z'}(u)$.
  Let $u$ be the shortest such string, let $y = S_2^z(u)$ and let 
  $y' = S_2^{z'}(u)$.
  In particular, we have $y \neq y'$.
  Since $Y$ is discrete, there exists $\epsilon' > 0$ such that 
  $d_Y(y,y') \geq \epsilon'$. In particular, the value $\epsilon'$ is
  independent of the choice of $\delta$ and $x'$.
Therefore, we have shown that
  \begin{align*}
    d_{X_1}(x,x') < \delta
    \quad
    \text{and}
    \quad
    d_{Y}(y,y) \geq \epsilon'.
  \end{align*}
Since $S_1$ and $S_2$ are equivalent, we have that 
  $y = S_1^x(u)$ and $y' = S_1^{x'}(u)$.
  We now show that the outputs $y$ and $y'$ are obtained through a continuous
  function $g: X_1 \to Y$ of the state space of $S_1$.
  Let $u = a_1 \dots a_k$.
  Let $g_0(x) = x$ and let $g_i(x) = f_1(g_{i-1}(x), a_i)$ for $i \geq 1$.
  Then our desired function $g$ is defined as $g = h_1(g_k(x))$.
  We have that $g$ is continuous since it is the composition of continuous
  functions---in particular $f_1$ and $h_1$ are continuous by assumption.
  Finally, we have that $y = g(x)$ and $y' = g(x')$.
Therefore, we have shown that
  \begin{align*}
    d_{X_1}(x,x') < \delta
    \quad
    \text{and}
    \quad
    d_{Y}(g(x),g(x')) \geq \epsilon'.
  \end{align*}
  Since $\delta$ can be chosen arbitrarily small, the former two inequalities
  contradict the continuity of $g$. We conclude that $\psi$ is continuous.

  We argue that $\psi$ is a surjective function as required by the definition of
  homomorphism.
  The function is surjective since every state $q$ in $S_2$ is reachable, hence
  there is $w$ that reaches it, and hence $\psi$ maps $\class{w}_{S_1}$ to 
  $\class{w}_F = q$.

  Having argued the properties above, in order to show that $\psi$ is
  a homomorphism from $D_1'$ to $D_2$, it suffices to show that, for every 
  $x \in X_1'$ and $u \in U$, the following equality holds.
  \begin{align*}
    \psi\big(f_1'(x,u)\big) = f_2\big(\psi(x),u\big)
  \end{align*}
  Let us consider arbitrary
  $x \in X_1'$ and $u \in U$.
  Let $w \in U^*$ be a string that reaches $x$ in $S_1'$.
  Note that $x$ can be seen as the equivalence class $\class{w}_{S_1}$.
  We have the following equivalences:
  \begin{align*}
    & \psi(f_1'(x,u)) = f_2(\psi(x),u)
    \\
    & \Leftrightarrow \psi(f_1'(\class{w}_{S_1},u)) =
    f_2(\psi(\class{w}_{S_1'}),u)
    \\
    & \Leftrightarrow \psi(\class{wu}_{S_1}) = f_2(\psi(\class{w}_{S_1}),u)
    \\
    & \Leftrightarrow \class{wu}_F = f_2(\psi(\class{w}_{S_1}),u)
    \\
    & \Leftrightarrow \class{wu}_F = f_2(\class{x}_F,u)
    \\
    & \Leftrightarrow \class{wu}_F = \class{wu}_F.
  \end{align*}
  The last equality is a tautology, and hence $\psi$ is a homomorphism from
  $D_1'$ to $D_2$. Since $D_1'$ are subdynamics of $D_1$, we conclude that $D_1$
  homomorphically represents $D_2$.
  This proves the proposition.
\end{proof}

\subsection{Proof of Proposition~\ref{prop:convergence}}

Proposition~\ref{prop:convergence} was implicitly proved in
\cite{knorozova2024expressivity}. For completeness, here we provide an explicit
proof based on theirs.
The following auxiliary lemma is in the appendix of the extended version
\cite{knorozova2024expressivityextended}, which we report along with the proof.

\todo{replace $n$ with $t$ in the lemma below.}

\begin{lemma}[Knorozova and Ronca 2023]
  \label{lemma:convergence_of_iterated_activation}
  Let $\alpha: \mathbb{R} \to \mathbb{R}$ be a bounded, monotone, 
  Lipschitz continuous function.
  Let $w \geq 0$ be a real number, let $(u_n)_{n \geq 1}$ be a sequence of real
  numbers, let $x_0 \in \mathbb{R}$, and let 
  $x_n = \alpha(w \cdot x_{n-1} + u_n)$ for $n \geq 1$.
  If the sequence $(u_n)_{n \geq 1}$ is convergent,
  then the sequence $(x_n)_{n \geq 0}$ is convergent.
\end{lemma}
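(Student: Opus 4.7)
My plan is to split into cases on $w$ and exploit monotonicity with an envelope (sandwich) argument. If $w = 0$, then $x_n = \alpha(u_n) \to \alpha(u_*)$ directly by continuity of $\alpha$, where $u_* = \lim_n u_n$. For $w > 0$, I would assume $\alpha$ is non-decreasing (the case needed for $\tanh$; the non-increasing case requires a separate treatment and may fail without additional hypotheses, since $g$ could then admit $2$-cycles). Boundedness of $\alpha$ confines $(x_n)_{n \geq 1}$ to a compact interval $I = \overline{\alpha(\mathbb{R})}$, so the limit map $g(x) = \alpha(wx + u_*)$ is a continuous non-decreasing self-map of $I$, and Lipschitz continuity of $\alpha$ yields $g_n(x) := \alpha(wx + u_n) \to g$ uniformly.

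The key tool is the following monotone iteration fact: any continuous non-decreasing self-map $h$ of a compact interval has all orbits converging to fixed points, because by induction the orbit $h^n(y_0)$ is monotone (the sign of $h(y_0) - y_0$ propagates) and bounded. I would apply this twice. First, for each $\epsilon > 0$, pick $N$ with $|u_n - u_*| < \epsilon$ for $n \geq N$ and define envelope iterations $y^\pm_N = x_N$, $y^\pm_{n+1} = \alpha(w y^\pm_n + u_* \pm \epsilon)$. A short induction using monotonicity of $\alpha$ and $w \geq 0$ shows $y^-_n \leq x_n \leq y^+_n$ for $n \geq N$, and each envelope converges to a fixed point $\Lambda^\pm_\epsilon$ of the perturbed map $\alpha(w\,\cdot + u_* \pm \epsilon)$.

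Second, let $L = \limsup_n x_n$ and $\ell = \liminf_n x_n$. Extracting a subsequence $x_{n_k} \to L$ and, by compactness of $I$, a further subsequence along which $x_{n_k - 1} \to a \in [\ell, L]$, uniform convergence $g_n \to g$ gives $L = g(a) \leq g(L)$ by monotonicity of $g$, and symmetrically $\ell \geq g(\ell)$. The $\omega$-limit set $\Omega \subseteq I$ of $(x_n)$ is compact and nonempty, and satisfies $g(\Omega) = \Omega$ by forward invariance (from uniform convergence of $g_n$) and backward invariance (by extracting predecessor subsequences of any limit point). Consequently, both $L = \max \Omega$ and $\ell = \min \Omega$ are fixed points of $g$.

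The main obstacle will be closing the gap between $L$ and $\ell$. My approach is to combine the envelope bounds with the structural information above: choose $N$ so that $x_N$ lies in a small neighborhood of the fixed point $L$ (possible because $\limsup_n x_n = L$) and $\epsilon$ small; then the envelope iterations $y^\pm_n$ started at $x_N$ converge to fixed points of $g^\pm_\epsilon$ near $L$, forcing $\liminf_n x_n \geq \Lambda^-_\epsilon$ to approach $L$ as $\epsilon \to 0$, so $\ell \geq L$ and hence $\ell = L$. The delicate part is handling unstable fixed points of $g$ (such as the middle zero of $\tanh(wx)$ for $w > 1$), where a naive perturbation could drive the envelopes to different fixed points of $g^\pm_\epsilon$; the resolution is to argue that since $L = \max \Omega$, the relevant iterations cannot escape upward past $L$, and a symmetric observation for $\ell$ confines the escape downward, so the envelopes in fact trap around a common fixed point.
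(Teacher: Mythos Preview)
Your $\omega$-limit set argument---that $L=\limsup_n x_n$ and $\ell=\liminf_n x_n$ are both fixed points of the limit map $g(x)=\alpha(wx+u_*)$---is correct and clean. The gap is entirely in the final step, closing $\ell<L$. The envelope inequalities $y^-_n\le x_n\le y^+_n$ are valid, but the assertion that $\Lambda^-_\epsilon\to L$ as $\epsilon\to 0$ fails precisely at unstable fixed points: if $L$ is repelling and you start $y^-_N=x_N$ near $L$, the iteration $y^-_{n+1}=g^-_\epsilon(y^-_n)$ drifts \emph{away} from $L$ toward a lower fixed point, for every $\epsilon>0$. Your proposed rescue (``since $L=\max\Omega$, the iterations cannot escape upward past $L$'') inverts the direction of the bounds: the envelopes bound $x_n$, not conversely, so $\limsup x_n=L$ places no constraint on where $y^\pm_n$ go.

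This gap is in fact not repairable at the stated generality, because the lemma is false as stated. Take $\alpha(x)=\max(-1,\min(1,x))$ (bounded, non-decreasing, $1$-Lipschitz; it can be made strictly increasing outside $[-1,1]$ without changing what follows), $w=1$, $x_0=0$, and $u_n=\sin(\log(n{+}1))-\sin(\log n)$. Then $u_n\to 0$ yet $x_n=\sin(\log(n{+}1))$, which does not converge; here $g$ is the identity on $[-1,1]$, so every point is fixed and your $\omega$-limit conclusion gives no contradiction. The paper's proof takes a different, more direct route---it bounds $x_n\in[\ell_n,r_n]$ with $\ell_n=\alpha(w\,x_{n-1}+u_*-\epsilon_n)$ and $r_n-\ell_n\to 0$, then argues $(\ell_n)$ is eventually monotone via a sign-propagation induction---but it hits the same wall (the inductive step equates $\alpha(w\ell_{n-1}+u^*_n)$ with $\ell_n$, which is not how $\ell_n$ was defined). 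What \emph{does} hold, and is all the paper actually needs downstream, is the case $\alpha=\tanh$, where $g$ has at most three isolated fixed points. There your machinery completes via a level-crossing argument: every $c\in(\ell,L)$ not among the $x_n$ is crossed upward infinitely often (forcing $c\le g(c)$) and downward infinitely often (forcing $c\ge g(c)$), so $[\ell,L]\subseteq\mathrm{Fix}(g)$, whence $\ell=L$.
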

\begin{proof}
Since $(u_n)_{n \geq 1}$ is convergent, there exists a real number $u_*$ 
    and a non-increasing sequence $(\epsilon_n)_{n \geq 1}$ that converges to 
    zero
    such that for every $n \geq 1$, it holds that
    \begin{align*}
        |u_n - u_*| < \epsilon_n.
    \end{align*}
    Thus, for every $n \geq 1$, it holds that 
    \begin{align*}
        u_n \in [u_*-\epsilon_n, u_*+\epsilon_n].
    \end{align*}
Then, considering that function $\alpha$ is monotone, every element 
    $x_n = \alpha(w \cdot x_{n-1}+u_n)$ of the sequence
    $(x_n)_{n \geq 1}$ is bounded as
    \begin{align*}
        x_n &\in
        [
            \alpha(w \cdot x_{n-1} + u_*-\epsilon_n),
            \alpha(w \cdot x_{n-1} + u_*+\epsilon_n)
        ].
    \end{align*}
Let us denote by $\ell_n$ and $r_n$ the left and right boundary for 
    the element $x_n$, 
    \begin{align*}
        x_n \in [\ell_n, r_n].
    \end{align*}
In order to show that the sequence $(x_n)_{n \geq 1}$ is convergent, 
    it suffices to 
    show that the sequence $(\ell_n)_{n\geq 1}$ is convergent.  
    Recall that by the assumption of the lemma, 
    $\alpha$ is Lipschitz continuous, thus
    the sequence $(|\ell_n - r_n|)_{n \geq 1}$ tends to zero as the
    sequence $(u_*\pm\epsilon_n)_{n \geq 1}$ tends to $u_*$. 
    Therefore, if $(\ell_n)_{n\geq 1}$ is converging to some limit $\ell_*$, 
    the sequence $(r_n)_{n\geq 1}$ is converging to the same limit.
    Then by the Squeeze Theorem, $(x_n)_{n\geq 1}$ is convergent, since 
    $(\ell_n)_{n\geq 1}$ and $(r_n)_{n \geq 1}$ are converging to the same 
    limit and the relation $\ell_n \leq x_n \leq r_n$ holds for every 
    $n \geq 1$.

    We next show that the sequence $(\ell_n)_{n\geq 1}$ is indeed convergent.
    In particular, we show that there exists a $k$ such that the sequence 
    $\ell_{n \geq k}$ is monotone. Then considering that $\alpha$ is bounded, 
    we conclude that the sequence $\ell_{n \geq k}$ is convergent by the 
    Monotone Convergence Theorem.

    Consider the sequence $(u_*-\epsilon_n)_{n \geq 1}$. 
    We have that $(\epsilon_n)_{n \geq 1}$ is non-increasing, then 
    $(u_*-\epsilon_n)_{n \geq 1}$ is either non-decreasing or 
    is non-increasing.
    We consider the case when $(u_*-\epsilon_n)_{n \geq 1}$ is non-decreasing. 
    Then the case when it is non-increasing follows by symmetry.

    Let us denote by $u^*_n = u_*-\epsilon_n$ and 
    let us define the following quantities
    \begin{align*}
        \delta_{u^*_n}  = u^*_{n+1} - u^*_n\text{ and }
        \delta_{\ell_n} = \ell_{n} - \ell_{n-1}. 
    \end{align*}
Two cases are possible, either 
    $w \cdot \delta_{\ell_n} + \delta_{u^*_n} < 0$
    for every $n \geq 1$, or there exists $n_0$ such that 
    $w \cdot \delta_{\ell_{n_0}} + \delta_{u^*_{n_0}} \geq 0$.

    Consider the case where $w \cdot \delta_{\ell_n} + \delta_{u^*_n} < 0$
    holds for every $n \geq 1$. We show that the sequence 
    $(\ell_n)_{n \geq 1}$ is non-increasing. 
    Since sequence 
    $(u^*_n)_{n \geq 1}$ is non-decreasing, then for every
    $n \geq 0$ it holds that 
    $\delta_{u^*_n}  = u^*_{n+1} - u^*_n \geq 0$. 
    Then, in order to satisfy the inequality
    $w \cdot \delta_{\ell_{n}} + \delta_{u^*_{n}} < 0$ it must hold that
    $\delta_{\ell_n} = \ell_{n} - \ell_{n-1} < 0$.
    This implies that $\ell_{n-1} \geq \ell_n$ holds for every $n \geq 1$
    and the sequence 
    $(\ell_n)_{n \geq 1}$ is non-increasing and thus monotone.

    We now consider the case where there exists an $n_0$, such that 
    $w \cdot \delta_{\ell_{n_0}} + \delta_{u^*_{n_0}} \geq 0$.
    First, we show by induction that the inequality 
    $w \cdot \delta_{\ell_{n}} + \delta_{u^*_{n}} \geq 0$ holds
    for every $n \geq n_0$.

    In the base case, $n = n_0$ and the inequality holds. 
    In the inductive step let us assume that inequality
    $w \cdot \delta_{\ell_{n}} + \delta_{u^*_{n}} \geq 0$ holds.
    We have that
    \begin{align*}
        w \cdot \ell_{n-1} + w \cdot \delta_{\ell_n}
        &= 
        w \cdot \ell_n
    \\
        w \cdot \ell_{n-1} + w \cdot \delta_{\ell_n} 
        + u^*_n + \delta_{u^*_n}
        &= 
        w \cdot \ell_n + u^*_{n+1}
    \\
        w \cdot \ell_{n-1} + u^*_n 
        &\leq
        w \cdot \ell_n + u^*_{n+1},
    \end{align*}
    where the last inequality holds by the assumption.
    Considering that $\alpha$ is monotone, we have that
    \begin{align*}
        \alpha(w \cdot \ell_{n-1} + u^*_n)
        &\leq
        \alpha(w \cdot \ell_n + u^*_{n+1})
    \\
        \ell_{n} 
        &\leq
        \ell_{n+1}.
    \end{align*}
    It follows that $\delta_{\ell_{n+1}} = \ell_{n+1} - \ell_n \geq 0$. 
    Furthermore, since $u^*_{n+1} \geq 0$ it follows that 
    $w \cdot \delta_{\ell_{n+1}} + \delta_{u^*_{n+1}} \geq 0$  holds. 
    The claim is proved.
    Then the fact that $(\ell_{n})_{n \geq n_0}$ is non-decreasing is the 
    direct consequence of the fact that the inequality 
    $w \cdot \delta_{\ell_{n}} + \delta_{u^*_{n}} \geq 0$  holds for 
    every $n \geq n_0$, thus $(\ell_{n})_{n \geq n_0}$ is monotone.
The lemma is proved.
\end{proof}

\propconvergence*
\begin{proof}

Let $\vec{x} = \langle x_1, \dots, x_i \rangle$.
  For every $j \in [i]$,
  let $(x_{j,t})_{t \geq 0}$ be the sequence 
  $\mathcal{S}_j(u, x_1, \dots, x_j)$.
  Let us recall that, for every $i \in [n]$, $\beta_i$ is the input function of
  neuron $N_i$.
  It suffices to show that the sequence $(x_{i,t})_{t \geq 0}$ is convergent.
  We show it by induction on $i$.
  In the base case we have $i = 1$, and
  by Lemma~\ref{lemma:convergence_of_iterated_activation},
  the sequence $(x_{1,t})_{t \geq 0}$ is convergent since
  $\beta_1(u)$ is constant and hence convergent.
In the inductive case we have $i > 1$, 
  and by induction the sequences 
  $(x_{1,t})_{t \geq 0}, \dots, (x_{j-1,t})_{t \geq 0}$ 
  are convergent. Along with the fact that $u$ is convergent, it implies that 
  the sequence
  \begin{align*}
    \big(\beta_i(u, x_{1,t}, \dots, x_{i-1,t})\big)_{t \geq 0}
  \end{align*}
  is convergent since $\beta_i$ is continuous.
  Thus, again by Lemma~\ref{lemma:convergence_of_iterated_activation},
  the sequence $(x_{i,t})_{t \geq 0}$ is convergent, as required.
\end{proof}

\subsection{Proof of Proposition~\ref{prop:shape-of-g-small-weight}}

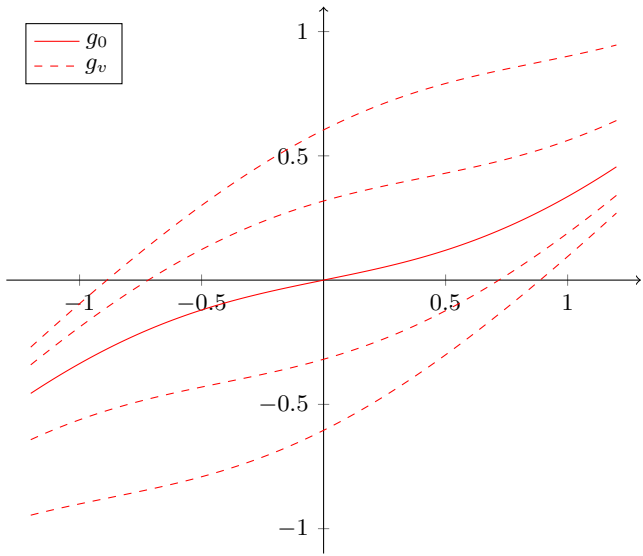
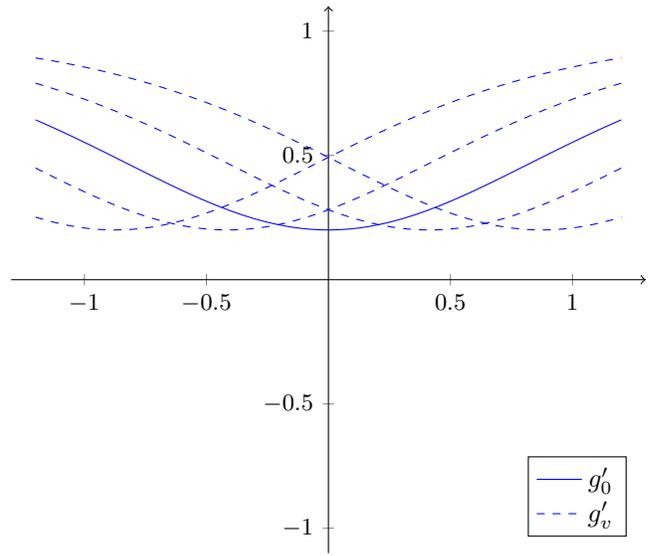
\begin{figure}
  \centering
\begin{subfigure}{.48\textwidth}
  \centering
\begin{tikzpicture}[font=\small]
\begin{axis}[
    axis lines = middle,
scale only axis, legend pos=north west,
    xmin=-1.3,
    xmax=1.3,
    ymin=-1.1,
    ymax=1.1,
ticklabel style = {font=\small},
axis line style={->},
    xlabel style={below right},
    ylabel style={above left},
    xtick={-1,-0.5,...,1}, ytick={-1,-0.5,...,1}, ]
\addplot [
    domain=-1.2:1.2, 
    samples=100, 
    color=red,]
{x - tanh(0.8 * \x + 0))};
\addlegendentry{\(g_0\)}
\addplot [
    domain=-1.2:1.2, 
    samples=100, 
    color=red,
    style=dashed,
]
{x - tanh(0.8 * \x - 0.33))};
\addplot [
    domain=-1.2:1.2, 
    samples=100, 
    color=red,
    style=dashed,
]
{x - tanh(0.8 * \x + 0.33))};
\addplot [
    domain=-1.2:1.2, 
    samples=100, 
    color=red,
    style=dashed,
]
{x - tanh(0.8 * \x + 0.7))};
\addplot [
    domain=-1.2:1.2, 
    samples=100, 
    color=red,
    style=dashed,
]
{x - tanh(0.8 * \x - 0.7))};
\addlegendentry{\(g_v\)}
\end{axis}
\end{tikzpicture}
\caption{Function $g_0$, along with $g_v$ for other values of $v$.}
\label{figure:function-g-small-weight}
\end{subfigure}
\hspace{15pt}
\begin{subfigure}{.48\textwidth}
  \centering
\begin{tikzpicture}
  \begin{axis}[
axis lines = middle,
    scale only axis, xmin=-1.3,
    xmax=1.3,
    ymin=-1.1,
    ymax=1.1,
axis line style={->},
    xlabel style={below right},
    ylabel style={above left},
    xtick={-1,-0.5,...,1}, ytick={-2,-1.5,...,2}, samples=100,
    legend pos=south east,
    ticklabel style = {font=\small},
    ]
    \addplot[blue, domain=-1.2:1.2] {1 - 0.8 / (cosh(0.8*x))^2};
    \addlegendentry{\(g_0'\)}
    \addplot[blue, dashed, domain=-1.2:1.2] {1 - 0.8 / (cosh(0.8*x-0.7))^2};
    \addlegendentry{\(g_v'\)}
    \addplot[blue, dashed, domain=-1.2:1.2] {1 - 0.8 / (cosh(0.8*x+0.7))^2};
    \addplot[blue, dashed, domain=-1.2:1.2] {1 - 0.8 / (cosh(0.8*x+0.33))^2};
    \addplot[blue, dashed, domain=-1.2:1.2] {1 - 0.8 / (cosh(0.8*x-0.33))^2};
  \end{axis}
\end{tikzpicture}
\caption{Derivative $g_0'$, along with $g_v'$ for other values of $v$.}
\label{figure:derivative-small-weight}
\end{subfigure}
\caption{Case $w \in [0,1]$. Graph of the function $g_v$ and its derivative
$g_v'$.}
\label{figure:derivative-small-weight-all}
\end{figure}

Let us recall the relevant context.
We have $w \in [0,1]$,
$v \in \mathbb{R}$, and 
the two functions
\begin{align*}
  h_v(x) = \tanh(w \cdot x + v), \qquad
  g_v(x) = x - h_v(x).
\end{align*}

For intuition regarding the following proposition, see
Figure~\ref{figure:derivative-small-weight-all}.
In particular, Figure~\ref{figure:function-g-small-weight} shows that the
function always has one interection with the $x$ axis.
In fact, the function is strictly increasing, i.e., its derivative is always
positive as shown by the graph in Figure~\ref{figure:derivative-small-weight}.

\propshapeofgsmallweight*
\begin{proof}
  First we note the following limits, which are immediate if we consider that
  $g_v$ is a bounded function:
  \begin{itemize}
    \item
      $g_v(x)$ goes to $-\infty$ when $x \to -\infty$, 
    \item
      $g_v(x)$ goes to $+\infty$ when $x \to +\infty$.
  \end{itemize}
  Furthermore,
  \begin{itemize}
    \item
      $g_v(x)$ is strictly increasing everywhere.
  \end{itemize}
  To see this, it suffices to observe that its derivative
  \begin{align*}
    g_v'(x) = 1 - w \cdot \operatorname{sech}^2(w \cdot x + v)
  \end{align*}
  is always positive. This holds since $\operatorname{sech}$ is a bounded
  function in $[0,1]$, so is $\operatorname{sech}^2$, and so is
  $w \cdot \operatorname{sech}^2(w \cdot x + v)$ considering that $w \in [0,1]$.
  The three facts above imply that $g_v$ has exactly one zero.
\end{proof}

\subsection{Proof of Proposition~\ref{prop:shape-of-g}}

\begin{figure}
  \centering
\begin{subfigure}{.48\textwidth}
  \centering
\begin{tikzpicture}[font=\small]
\begin{axis}[
    axis lines = middle,
scale only axis, legend pos=north west,
    width=0.9\textwidth,
    axis lines = middle,
    xmin=-1.5,
    xmax=1.5,
    ymin=-1.7,
    ymax=1.7,
ticklabel style = {font=\small},
axis line style={->},
    xlabel style={below right},
    ylabel style={above left},
    xtick={-1,-0.5,...,1}, ytick={-2,-1.5,...,2}, ]
\addplot [
    domain=-1.7:1.7,
    samples=100, 
    color=red,]
{x - tanh(2.3 * \x + 0))};
\addlegendentry{\(g_0\)}
\addplot [
    domain=-1.7:1.7,
    samples=100, 
    color=red,
    style=dashed,
]
{x - tanh(2.3 * \x - 0.7))};
\addplot [
    domain=-1.7:1.7,
    samples=100, 
    color=red,
    style=dashed,
]
{x - tanh(2.3 * \x + 1))};
\addplot [
    domain=-1.7:1.7,, 
    samples=100, 
    color=red,
    style=dashed,
]
{x - tanh(2.3 * \x + 1.7))};
\addplot [
    domain=-1.7:1.7,
    samples=100, 
    color=red,
    style=dashed,
]
{x - tanh(2.3 * \x - 1.5))};
\addlegendentry{\(g_v\)}
\end{axis}
\end{tikzpicture}
\caption{Function $g_0$, along with $g_v$ for other values of $v$.}
\label{figure:function-g}
\end{subfigure}
\hspace{10pt}
\begin{subfigure}{.48\textwidth}
  \centering
\begin{tikzpicture}
  \begin{axis}[
width=0.9\textwidth,
    axis lines = middle,
    scale only axis, xmin=-1.6, xmax=1.6, ymin=-1.7, ymax=1.7,
axis line style={->},
    xlabel style={below right},
    ylabel style={above left},
    xtick={-1,-0.5,...,1}, ytick={-2,-1.5,...,2}, samples=100,
    legend style={at={(1.05,0.1)},anchor=south east},
    ticklabel style = {font=\small},
    ]
    \addplot[blue, domain=-1.7:1.7] {1 - 2.3 / (cosh(2.3*x))^2};
    \addlegendentry{\(g_0'\)}
    \addplot[blue, dashed, domain=-1.7:1.7] {1 - 2.3 / (cosh(2.3*x-0.7))^2};
    \addlegendentry{\(g_v'\)}
    \addplot[blue, dashed, domain=-1.7:1.7] {1 - 2.3 / (cosh(2.3*x+1.7))^2};
    \addplot[blue, dashed, domain=-1.7:1.7] {1 - 2.3 / (cosh(2.3*x+1))^2};
    \addplot[blue, dashed, domain=-1.7:1.7] {1 - 2.3 / (cosh(2.3*x-1.5))^2};
  \end{axis}
\end{tikzpicture}
\caption{Derivative $g_0'$, along with $g_v'$ for other values of $v$.}
\label{figure:derivative}
\end{subfigure}
\caption{Case $w > 1$. Graph of the function $g_v$ and its derivative $g_v'$.}
\label{figure:graphs-all}
\end{figure}
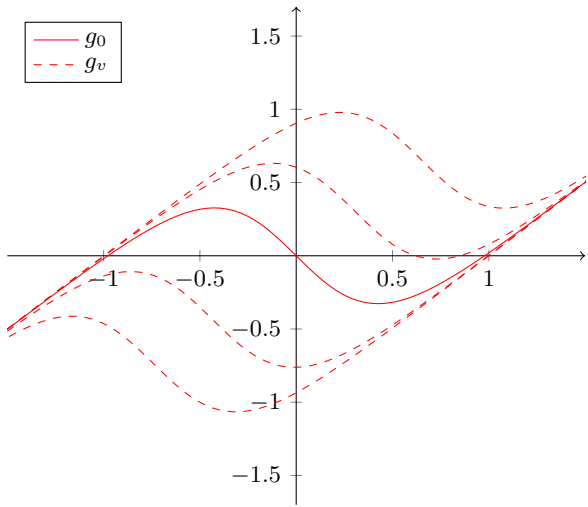
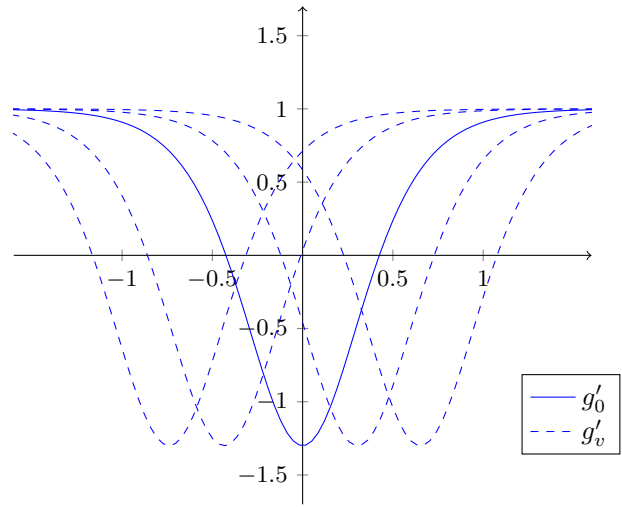

Let us recall the relevant context.
We have $w > 1$,
$v \in \mathbb{R}$, and 
the two functions
\begin{align*}
  h_v(x) = \tanh(w \cdot x + v), \qquad
  g_v(x) = x - h_v(x).
\end{align*}

For intuition regarding the following proposition, see
Figure~\ref{figure:graphs-all}.
In particular, Figure~\ref{figure:function-g} shows that the
function always has two stationary points $p_-^v < p_+^v$, and it is increasing
before the first stationary point $p_-^v$, decreasing between them, and
increasing after the second stationary point $p_+^v$. Thus, the first stationary
point $p_-^v$ is a local maximum and the second stationary point $p_+^v$ is a
local minimum. This is made clear by the derivative shown in
Figure~\ref{figure:derivative}, which always has two zeroes, and it
is negative between them and positive everywhere else.
The last part of the proposition, bounding the value of the derivative, can be
observed in Figure~\ref{figure:derivative}, which shows that the
derivative is bounded in $[0,1]$ whenever it is positive. Then the claim follows
immediately by noting that the zeroes of the derivative are the stationary
points $p_-^v$ and $p_+^v$.

\propshapeofg*
\begin{proof}
  The claimed limits are immediate if we consider that $g_v$ is a bounded
  function.
  For the other claims, we analyse the derivative of $g_v$, which is
  \begin{align*}
    g_v'(x) = 1 - w \cdot \operatorname{sech}^2(w \cdot x + v).
  \end{align*}
  To show that $g_v$ has exactly two stationary points $p_-^v$ and $p_+^v$, it
  suffices to show that $p_-^v$ and $p_+^v$ are the zeroes of $g_v'$.

  We start from the facts regarding the hyperbolic secant function 
  $\operatorname{sech}(x)$, 
  \begin{itemize}
    \item 
      it has limit $0$ for $x \to \pm \infty$,
    \item
      it has maximum value $1$ for $x_0 = 0$.
    \item
      it is strictly increasing in the interval $(-\infty,x_0)$ and strictly
      decreasing in the interval $(x_0,+\infty)$.
  \end{itemize}
  Informally we say that it is bell-shaped. 
  Then, the function $h_1(x) = \operatorname{sech}^2(w \cdot x + v)$ admits
  the same properties except that the maximum is at a point $p_0$ which is
  shifted compared to $x_0$.
  Hence the function $h_2(x) = w \cdot \operatorname{sech}^2(w \cdot x + v)$
  admits the same properties except that the point $p_0$ is shifted and the
  maximum value is $w$.
  Thus, there are points $p_-^v$ and $p_+^v$ satisfying $p_-^v < p_0 < p_+^v$ and the
  following properties:
  \begin{itemize}
    \item
      $0 \leq h_2(x) < 1$ for every $x \in (-\infty, p_-^v)$,
    \item
      $h_2(p_-^v) = 1$,
    \item
      $1 < h_2(x) < w$ for every $x \in (p_-^v, p_0)$,
    \item
      $h_2(p_0) = w$,
    \item
      $1 < h_2(x) < w$ for every $x \in (p_0, p_+^v)$,
    \item
      $h_2(p_+^v) = 1$,
    \item
      $0 \leq h_2(x) < 1$ for every $x \in (p_+^v,+\infty)$.
  \end{itemize}
  Since $g_v'(x) = 1 - h_2(x)$, it follows that 
  \begin{itemize}
    \item
      $0 < g_v'(x) \leq 1$ for every $x \in (-\infty, p_-^v)$,
    \item
      $g_v'(p_-^v) = 0$,
    \item
      $g_v'(x) < 0$ for every $x \in (p_-^v, p_+^v)$,
    \item
      $g_v'(p_+^v) = 0$,
    \item
      $0 < g_v'(x) \leq 1$ for every $x \in (p_+^v,+\infty)$.
  \end{itemize}
  Considering that the zeroes of $g_v'$ are the stationary points of $g_v$, 
  we conclude that $p_-^v$ and $p_+^v$ are as required.
  Furhermore,
  considering that $g_v$ is increasing (decreasing) when $g_v'$ is positive
  (negative), we conclude that 
  \begin{itemize}
    \item
      $g_v$ is strictly increasing in 
      $(-\infty, p_-^v) \cup (p_+^v, +\infty)$,
    \item
      $g_v$ is strictly decreasing in the interval $(p_-^v,p_+^v)$,
  \end{itemize}
  as required.
  Finally, 
  the points above immediately imply that
  the derivative of $g_v$ is bounded as 
  $g_v'(x) \in [0,1]$ for every $x \in [-1,p_-^v] \cup [p_+^v,+1]$, as required.
\end{proof}

\subsection{Proof of Proposition~\ref{prop:graph-translation}}

Let us recall the relevant context.
We have $w \mathbb{R}_+$,
$v \in \mathbb{R}$, and 
the two functions
\begin{align*}
  h_v(x) = \tanh(w \cdot x + v), \qquad
  g_v(x) = x - h_v(x).
\end{align*}

For intuition on the following proposition, see Figure~\ref{figure:graphs-all}.
Regarding the function $g_v$, Figure~\ref{figure:function-g} shows that the
different curves obtained for different values of $v$ are diagonal translations
one of another, in line with the proposition.
Regarding the derivative $g_v'$, Figure~\ref{figure:derivative} shows that the
different curves obtained for different values of $v$ are horizontal
translations one of another, in line with the proposition.

\propgraphtranslation*
\begin{proof}
  We have $u = w \cdot d + v$, and hence
  \begin{align*}
    g_u(x) & = x - \tanh(w \cdot x + u),
    \\
    & = x - \tanh(w \cdot x + w \cdot d + v),
    \\
    & = x - \tanh(w \cdot (x + d) + v),
    \\
    & = x - \tanh(w \cdot (x + d) + v) - d + d,
    \\
    & = (x+d) - \tanh(w \cdot (x + d) + v) - d,
    \\
    & = g_v(x+d) - d.
  \end{align*}
  Then, the derivative satisfies 
  $g_u'(x) = (g_v(x + d) -d)' = g_v'(x + d)$ as claimed.

  Next we show $p_+^u = p_+^v - d$.
  Since $p_+^u$ is a local minimum of $g_u$, we have that
  $g_u'$ is zero at $p_+^u$, negative in the left neighbourhood of
  $p_+^u$, and positive in the right left neighbourhood of $p_+^u$.
  By the above identity $g_u'(x) = g_v'(x + d)$, it follows that $g_v'$ is zero
  at $p_+^u + d$, negative in the left neighbourhood of $p_+^u + d$, and
  positive in the right left neighbourhood of $p_+^u + d$.
  Thus, $p_+^u + d$ is a local minimum of $g_v$. Since $g_v$ has a unique
  local minimum, we conclude that $p_+^u + d = p_+^v$ and hence
  $p_+^u = p_+^v - d$.
  The identity $p_-^u = p_-^v - d$ holds by the same argument.
\end{proof}

\subsection{Proof of Proposition~\ref{prop:v-minus-v-plus}}

Let us recall the relevant context.
We have $w > 1$,
$v \in \mathbb{R}$, and 
the two functions
\begin{align*}
  h_v(x) = \tanh(w \cdot x + v), \qquad
  g_v(x) = x - h_v(x).
\end{align*}

For intuition on the following proposition, see
Figure~\ref{figure:function-g-pivots} in the main body or
Figure~\ref{figure:function-g} above.
Since different values of $v$ determine different diagonal translations, there
will a unique value where the curve touches the $x$ axis at its local minimum,
and a unique value where the curve touches the $x$ axis at its local maximum.

\propvminusvplus*
\begin{proof}
  We show that there is a unique value $v$ such that $g_v$ takes value zero at
  its local minimum.
  Let $v \in \mathbb{R}$ be such that the function $g_v$ takes value zero at its
  local minimum $p_+^v$. Namely, $g_v(p_+^v) = 0$.
  Similarly,
  Let $u \in \mathbb{R}$ be such that the function $g_u$ takes value zero at its
  local minimum $p_+^u$. Namely, $g_u(p_+^u) = 0$.
  Let $d = (u-v)/w$.
  By Proposition~\ref{prop:graph-translation}, we have 
  $g_u(x) = g_v(x+d)-d$ and $p_+^u + d = p_+^v$.
  Thus, $0 = g_u(p_+^u) = g_v(p_+^u+d)-d$, hence
  $g_v(p_+^u+d) = d$, and hence $g_v(p_+^v) = d$.
  Since $g_v(p_+^v) = 0$, we conclude that $d = 0$, which implies $u = v$, as
  required.
  The fact that there is a unique value $v$ such that $g_v$ takes value zero at
  its local maximum is proved by the same argument.

  Next we show that $v_+ < v_-$.
  We have that $g_{v_+}(p_+^{v_+}) = 0$, and hence $g_{v_+}(p_-^{v_+}) > 0$ by
  Proposition~\ref{prop:shape-of-g}.
  Let $d = (v_+ - v_-)/w$.
  By Proposition~\ref{prop:graph-translation},
  we have $g_{v_+}(p_-^{v_+}) = g_{v_+}(p_-^{v_+} + d) - d$
  and we have $p_-^{v_+} + d = p_-^{v_-}$.
  Thus, $g_{v_+}(p_-^{v_+}) = g_{v_+}(p_-^{v_-}) - d$ and hence
  $g_{v_+}(p_-^{v_+}) = - d$ since $g_{v_+}(p_-^{v_-}) = 0$.
  We have argue above 
  $g_{v_+}(p_-^{v_+}) > 0$, hence
  $-d > 0$, hence
  $-(v_+ - v_-)/w > 0$, and hence
  $v_- > v_+$, as required.
\end{proof}

\subsection{Proof of Proposition~\ref{prop:analysis}}

Let us recall the relevant context.
We have $w > 1$,
$v \in \mathbb{R}$, and 
the two functions
\begin{align*}
  h_v(x) = \tanh(w \cdot x + v), \qquad
  g_v(x) = x - h_v(x).
\end{align*}

Let us call $g_{v_-}$ and $g_{v_+}$ simply as $g_-$ and $g_+$.
Then 
$p_-$ is the local maximum of $g_-$, and
$p_+$ is the local minimum of $g_+$.

\propanalysis*
\begin{proof}
  The image of $h_v$ is in $[-1,+1]$, and necessarily every fixpoint of $h_v$
  belongs to the image of $h_v$; hence every fixpoint of $h_v$ is in
  $[-1,+1]$ as claimed.
  The fixpoints of $h_v$ correspond to the zeroes of $g_v$, which we analyse
  next.
  By Proposition~\ref{prop:shape-of-g}, the function $g_v$ has at least one zero
  and at most three zeroes:
  the first one in the interval $I_1 = [-1, p_-^v]$,
  the second one in the interval $I_2 = (p_-^v, p_+^v)$, and
  the third one in the interval $I_3 = [p_+^v,+1]$.
  To show the proposition, as an intermediate step, we show that:
  (i) if $g_v$ has a zero $x_1$ in the interval $I_1 = [-1, p_-^v]$, then
  $x_1 \leq p_-$;
  (ii) if $g_v$ has a zero $x_2$ in the interval $I_2 = (p_-^v, p_+^v)$, then
  $p_- < x_2 < p_+$;
  (iii) if $g_v$ has a zero $x_3$ in the interval $I_3 = [p_+^v,+1]$, then
  $x_3 \geq p_+$.
  We consider the three cases separately below.
  First we provide the necessary preliminary observations.

  \smallskip\par\noindent
  \textit{Preliminary observations.}
  We observe that the three functions
  $g_v$, $g_-$, and $g_+$, and their derivatives, are translations one of
  another. Let $d^+ = (v - v_+)/w$ and let $d^- = (v - v_-)/w$.
  By Proposition~\ref{prop:graph-translation} we have:
  \begin{align*}
    & g_v(x) = g_+(x + d^+) - d^+, && g_v'(x) = g_+'(x + d^+),
    \\
    & g_v(x) = g_-(x + d^-) - d^-, && g_v'(x) = g_-'(x + d^-).
  \end{align*}
  Namely, $g_v$ is $g_+$ translated vertically and horizontally by the same
  amount $d^+$; at the same time, $g_v$ is also $g_-$ translated vertically and
  horizontally by the same amount $d^-$.

  By Proposition~\ref{prop:graph-translation} we have:
\begin{align*}
    \text{(I) }\; p_+^v = p_+ - d^+,
    \qquad
    \text{(II) }\; p_-^v = p_- - d^-.
  \end{align*}

  We note the value of $g_v$ at its stationary points:
  \begin{align*}
    \text{(III) }\; g_v(p_+^v) = -d^+,
    \qquad
    \text{(IV) }\; g_v(p_-^v) = -d^-.
  \end{align*}
  Identity (III) holds since
  $g_v(p_+^v) = g_+(p_+^v + d^+) - d^+ = g_+(p_+) - d^+ = - d^+$,
  where the first equaltiy holds by the translation identity noted above,
  the second equality holds by the identities between stationary points noted
  above, and the last equality holds since $p_+$ is a zero of $g_+$.
  Identity (IV) can be proved similarly.

  \smallskip\par\noindent
  \textit{Analysis of a zero in the first interval.}
  We consider the case when $g_v$ has a zero $x_1$ in the interval
  $I_1 = [-1, p_-^v]$.
  We show that $x_1 \leq p_-$.
By the mean value theorem, there exists $c \in [x_1,p_-^v]$ such that
  \begin{align*}
    & (p_-^v - x_1) \cdot g_v'(c) =
    g_v(p_-^v) - g_v(x_1).
  \end{align*}
  Then, by Proposition~\ref{prop:shape-of-g} we have $g_v'(c) \in [0,1]$, and
  hence using identities (II) and (IV), we have
  \begin{align*}
    (p_-^v - x_1) \cdot g_v'(c) & = g_v(p_-^v) - g_v(x_1).
    \\
    p_-^v - x_1 & \geq g_v(p_-^v) - g_v(x_1),
    \\
    p_-^v - x_1 & \geq g_v(p_-^v),
    \\
    p_- - d^- - x_1 & \geq g_v(p_-^v),
    \\
    p_- - d^- - x_1 & \geq -d^-,
    \\
    p_- - x_1 & \geq 0,
    \\
    p_- & \geq x_1.
  \end{align*}

  \smallskip\par\noindent
  \textit{Analysis of a zero in the second interval.}
  We consider the case when $g_v$ has a zero $x_2$ in the interval
  $I_2 = (p_-^v, p_+^v)$.
  Since $g_v$ is decreasing in the interval $I_2 = (p_-^v, p_+^v)$,
  we have that $x_2$ admits $x_2$ as a zero only if $g_v(p_-^v) > 0$ and
  $g_v(p_+^v) < 0$.
  Since $g_v(p_-^v) = -d^-$, it follows that $d^- < 0$.
  Since $g_v(p_+^v) = -d^+$, it follows that $d^+ > 0$.
  Then the inequality $x_2 > p_-$ is shown by $x_2 > p_-^v = p_- - d^- > p_-$
  since $d^- < 0$.
  Similarly, the inequality $x_2 < p_+$ is shown by
  $x_2 < p_+^v = p_+ - d^+ < p_+$ since $d^+ > 0$.

  \smallskip\par\noindent
  \textit{Analysis of a zero in the third interval.}
  We consider the case when $g_v$ has a zero $x_3$ in the interval
  $I_3 = [p_+^v,+1]$. We show that $x_1 \geq p_+$.
By the mean value theorem, there exists $c \in [p_+^v,x_1]$ such that
  \begin{align*}
    & (x_1 - p_+^v) \cdot g_v'(c) = g_v(x_1) -
    g_v(p_+^v).
  \end{align*}
  Then, by Proposition~\ref{prop:shape-of-g} we have $g_v'(c) \in [0,1]$, and
  hence using identities (I) and (III), we have
  \begin{align*}
    (x_1 - p_+^v) \cdot g_v'(c) & = g_v(x_1) -
    g_v(p_+^v).
    \\
    x_1 - p_+^v & \geq g_v(x_1) - g_v(p_+^v),
    \\
    x_1 - p_+^v & \geq - g_v(p_+^v),
    \\
    x_1 - p_+ + d^+ & \geq - g_v(p_+^v),
    \\
    x_1 - p_+ + d^+ & \geq d^+,
    \\
    x_1 - p_+ & \geq 0,
    \\
    x_1 & \geq p_+.
  \end{align*}

  \smallskip\par\noindent
  \textit{Positioning of the fixpoints.}
  We have argued above that $h_v$ has one, two, or three fixpoints, and they lie
  in $[-1,+1]$.
  Next we consider separately the case of one, two, and three fixpoints.

  In the first case, $h_v$ has one fixpoint $x_1$.
  We first argue that the fixpoint is either in the interval $I_1$ or in the
  interval $I_3$.
  This holds since, whenever $g_v$ has a zero in $I_2$, it also has a zero in
  $I_1$ and $I_3$, by Proposition~\ref{prop:shape-of-g}, contradicting our
  assumption of only one fixpoint.
  Then, by the analysis above, we have $x_1 \leq p_-$ or $x_1 \geq p_-$ as
  required.

  In the second case, $h_v$ has two fixpoints.
  By Proposition~\ref{prop:shape-of-g}, it holds that
  $h_v$ has two fixpoints only if one of the fixpoints is $p_-$ or $p_+$.
  In the case one of the fixpoints is $p_-$,
  by Proposition~\ref{prop:shape-of-g},
  we have that $g_v$ attains its local maximum at $p_-$, which is also a zero,
  and it has its other zero in its increasing interval $I_3$.
  In the case one of the fixpoints is $p_+$,
  by Proposition~\ref{prop:shape-of-g},
  we have that $g_v$ attains its local minimum at $p_+$, which is also a zero,
  and it has its other zero in its increasing interval $I_1$.
  In both cases, the fixpoints are positioned as required.

  In the third case, $h_v$ has three fixpoints.
  The proposition follows immediately by the analysis above.

  This concludes the proof of the proposition.
\end{proof}

 \fi

\end{document}